\documentclass[11pt]{article}
\usepackage[square]{natbib}
\AtBeginDocument{}
\usepackage{tikz}

\usetikzlibrary{matrix,decorations.pathreplacing}

\usepackage{xcolor}

% If you use BibTeX in apalike style, activate the following line:
% \bibliographystyle{apalike}
\usepackage{easybmat}
\usepackage{multirow,bigdelim}
\usepackage{amsmath}
\usepackage{amsthm}
\usepackage{amssymb}
\usepackage{algorithm}
\usepackage{subcaption}
\usepackage{color}
\usepackage[english]{babel}
\usepackage{graphicx}
\usepackage{wrapfig,epsfig}
\usepackage{epstopdf}
\usepackage{url}
\usepackage{graphicx}
\usepackage{color}
\usepackage{epstopdf}
\usepackage{algpseudocode}
\usepackage{scrextend}
\usepackage[T1]{fontenc}
\usepackage{bbm}
\usepackage{comment}
\usepackage{xspace}
%\usepackage{bbold}
%\usepackage{amsfonts}
%\usepackage[nottoc,numbib]{tocbibind}
%\settocbibname{References}
% \newcommand{\SSMatrix}{\mbox{\sc SSMatrix}}
 %%% print refs in table of contents

\usepackage{tikz}
\usepackage{hyperref}  %%% arxiv don't allow this.
\hypersetup{colorlinks=true,citecolor=blue,linkcolor=blue} 
\usetikzlibrary{arrows}
\usepackage[margin=1in]{geometry}
\linespread{1}
\graphicspath{{./figs/}}
\newcommand{\indep}{\rotatebox[origin=c]{90}{$\models$}}
\newcommand*{\centernot}{%
  \mathpalette\@centernot
}

\newenvironment{CompactItemize}{
\begin{list}{$\bullet$}{%
\setlength{\leftmargin}{12pt}
\setlength{\itemindent}{5pt}
\setlength{\topsep}{1pt}
\setlength{\itemsep}{-2pt}
}}
{\end{list}}

\def\@centernot#1#2{%
  \mathrel{%
    \rlap{%
      \settowidth\dimen@{$\m@th#1{#2}$}%
      \kern.5\dimen@
      \settowidth\dimen@{$\m@th#1=$}%
      \kern-.5\dimen@
      $\m@th#1\not$%
    }%
    {#2}%
  }%
}

\newcommand{\SSMatrix}{\text{\sc SSMatrix\xspace}}
\newcommand{\epsSSMatrix}{$\eps$-\text{\sc SSMatrix\xspace}}
\newcommand{\RecoverG}{\text{\sc RecoverG\xspace}}
\newcommand{\LatentsNEdges}{\text{\sc LatentsNEdges\xspace}}
\newcommand{\LatentsWEdges}{\text{\sc LatentsWEdges\xspace}}

\author{
    Raghavendra Addanki\thanks{UMass Amherst. \texttt{raddanki@cs.umass.edu}. Part of this work was done while the author was an intern at Amazon.}
  \and
  Shiva Prasad Kasiviswanathan\thanks{Amazon. \texttt{ kasivisw@gmail.com} }
  \and
  Andrew McGregor\thanks{UMass Amherst. \texttt{mcgregor@cs.umass.edu} }
  \and
  Cameron Musco\thanks{UMass Amherst. \texttt{cmusco@cs.umass.edu} }
}

\newcommand{\notindep}{\not\!\perp\!\!\!\perp}

\date{}
\title{Efficient Intervention Design for Causal Discovery with Latents}

\newtheorem{theorem}{Theorem}[section]
\newtheorem{lemma}[theorem]{Lemma}
\newtheorem{definition}[theorem]{Definition}

\newtheorem{proposition}[theorem]{Proposition}
\newtheorem{corollary}[theorem]{Corollary}

\newtheorem{assumption}[theorem]{Assumption}
\newtheorem{observation}[theorem]{Observation}

\newcommand{\lat}{L}
\newcommand{\Edg}{\mathcal{E}}

\newcommand{\pparagraph}[1]{\noindent \textbf{#1}}
\newcommand{\wh}{\widehat}
\newcommand{\wt}{\widetilde}

\newcommand{\eps}{\epsilon}

\newcommand{\norm}[1]{\left\lVert#1\right\rVert}
\renewcommand{\varepsilon}{\epsilon}
\renewcommand{\tilde}{\wt}
\renewcommand{\hat}{\wh}

\DeclareMathOperator{\OPT}{OPT}

\newcommand{\BBB}{{\mathcal B}}
\newcommand{\AAA}{{\mathcal A}}
\newcommand{\DDD}{{\mathcal D}}

\definecolor{mygreen}{RGB}{80,180,0}
\definecolor{b2}{RGB}{51,153,255}
\definecolor{mycy2}{RGB}{255,51,255}

\def \doo {\mathrm{do}}
\def \Pa {\mathrm{Pa}}
\def \e {\mathrm{e}}
\def \Anc {\mathrm{Anc}}
\def \GGG {\mathcal{G}}

\makeatletter
\newcommand*{\RN}[1]{\expandafter\@slowromancap\romannumeral #1@}
\makeatother
\usepackage{lineno}

% Code for theorems in multiple places
% Originally by Eric Price, placed in the public domain
%
% This lets you write the statement of a theorem/lemma in one place
% and insert it in multiple parts of a document.  This lets you
% include the statement of Lemma 3.2 in both the introduction and the
% place that it's proven, and the numbering can refer to the place
% where it's proven.
%
% Usage:
%
%  \define[optional title]{thm:main}{Theorem}{We show \[ x^2 + y^2 \geq 2xy \].}
%     Before you use it.
%
%  \state{thm:main}
%     in the place where you want the label to be generated
%
%  \restate{thm:main}
%     elsewhere.
%
%  To have consistent equation labels, use $\thmlabel{eq:1}$ rather
%  than $\label{eq:1}$.
%
%
%%%%% Begin code
\usepackage{etoolbox}

\makeatletter

% Define the various variables to be used in \state/\restate
\newcommand{\define}[4][ignore]{%
  \ifstrequal{#1}{ignore}{}{
  \@namedef{thmtitle@#2}{#1}}%
  \@namedef{thm@#2}{#4}%
  \@namedef{thmtypen@#2}{lemma}%
  \newtheorem{thmtype@#2}[theorem]{#3}%
  \newtheorem*{thmtypealt@#2}{#3~\ref{#2}}%
}

% State the theorem, generating the proper number.
\newcommand{\state}[1]{%
  \@namedef{curthm}{#1}
  \@ifundefined{thmtitle@#1}{
  \begin{thmtype@#1}
    }{
  \begin{thmtype@#1}[\@nameuse{thmtitle@#1}]
  }
    \label{#1}
    \@nameuse{thm@#1}
  \end{thmtype@#1}
  \@ifundefined{thmdone@#1}{
  \@namedef{thmdone@#1}{stated}%
  }{}
}

% Include the theorem, with the original numbering
\newcommand{\restate}[1]{%
  \@namedef{curthm}{#1}
  \@ifundefined{thmtitle@#1}{
    \begin{thmtypealt@#1}
    }{
  \begin{thmtypealt@#1}[\@nameuse{thmtitle@#1}]
  }
    \@nameuse{thm@#1}
  \end{thmtypealt@#1}
  \@ifundefined{thmdone@#1}{
  \@namedef{thmdone@#1}{stated}%
  }{}
}

% For equations, this generates a new number if this is the first time
% the theorem is included, and reuses the old number otherwise.
\newcommand{\thmlabel}[1]{
  \@ifundefined{thmdone@\@nameuse{curthm}}{\label{#1}
    }{\tag*{\eqref{#1}}}
}
\makeatother
%%%%% End code

\begin{document}
%\linenumbers
\maketitle

\begin{abstract}

We consider recovering a causal graph in presence of latent variables, where we seek to minimize the cost of interventions used in the recovery process. We consider two intervention cost models: (1) a linear cost model where the cost of an intervention on a subset of variables has a linear form, and (2) an identity cost model where the cost of an intervention is the same, regardless of what variables it is on, i.e., the goal is just to minimize the number of interventions. Under the linear cost model, we give an algorithm to identify the ancestral relations of the underlying causal graph, achieving within a $2$-factor of the optimal intervention cost. This approximation factor can be improved to $1+\eps$ for any $\eps > 0$ under some mild restrictions. Under the identity cost model, we bound the number of interventions needed to recover the entire causal graph, including the latent variables, using a parameterization of the causal graph  through a special type of colliders. In particular, we introduce the notion of $p$-colliders, that are colliders between pair of nodes arising from a specific type of conditioning in the causal graph, and provide an upper bound on the number of interventions as a function of the maximum number of $p$-colliders between any two nodes in the causal graph.
\end{abstract}

% !TEX root = main.tex

\section{Introduction} \label{sec:intro}

Causality has long been a key tool in studying and analyzing various behaviors in fields such as genetics, psychology, and economics~\citep{pearl}. Causality also plays a pivotal role in helping us build systems that can understand the world around us, and in turn, in helping us understand the behavior of machine learning systems deployed in the real world.
Although the theory of causality has been around for more than three decades, for these reasons it has received increasing attention in recent years. 
In this paper, we study one of the fundamental problems of causality: {\em causal discovery}. In causal discovery, we want to learn all the causal relations existing between variables (nodes of the causal graph) of our system. It has been shown that, under certain assumptions, observational data alone only lets us recover \emph{the existence of a causal relationship} between two variables, but not the \emph{direction} of all relationships.
To recover the directions of causal edges, we use the notion of an \emph{intervention} described in the Structural Causal Models (SCM) framework introduced by~\citep{pearl}.

An intervention requires us to fix a subset of variables to a set of values, inducing a new distribution on the free variables. Such a system manipulation is generally expensive and thus there has been significant interest in trying to minimize the number of interventions and their cost in causal discovery.  
In a general cost model, intervening on any subset of variables has a cost associated with it, and the goal is to identify all causal relationships and their directions while minimizing the total cost of interventions applied. This captures the fact that some interventions are more expensive than others. For example, in a medical study, intervening on certain variables might be impractical or unethical.
In this work, we study two simplifications of this general cost model. In the {\em linear cost model}, each variable has an intervention cost, and the cost of an intervention on a subset of variables is the sum of costs for each variable in the set~\citep{icml17,neurips18}. In the {\em identity cost model}, every intervention has the same cost, regardless of what variables it contains and therefore minimizing intervention cost is the same as minimizing the number of interventions~\citep{neurips17}.

As is standard in the causality literature, we assume that our causal relationship graph satisfies the {\em causal Markov condition} and {\em faithfulness}~\citep{spirtes2000causation}. 
We assume that faithfulness holds both in the observational and interventional distributions following~\citep{hauser2014two}. As is common, we also assume that we are given access to an oracle that can check if two variables are independent, conditioned on a subset of variables. We discuss this assumption in more detail in Section \ref{sec:prelim}.
Unlike much prior work, we \emph{do not make} the { causal sufficiency} assumption: that there are no unobserved (or latent) variables in the system. Our algorithms apply to the causal discovery problem with the existence of latent variables.

\vspace{2mm}
\pparagraph{Results.} Our contributions are as follows. Let $\GGG$ be a causal graph on both observable variables $V$ and latent variables $\lat$. A directed edge $(u,v)$ in $\GGG$ indicates a causal relationship from $u$ to $v$. Let $G$ be the induced subgraph of $\GGG$ on the $n$ observable variables (referred to as observable graph). See Section~\ref{sec:prelim} for a more formal description.

\vspace{2mm}
\pparagraph{Linear Cost Model.} In the linear cost model, we give an algorithm that given $m = \Omega(\log n)$ where $n$ is the number of observed variables, outputs a set of $m$ interventions that can be used to recover all ancestral relations of the observable graph $G$.\!\footnote{As noted in Section~\ref{sec:lincost}, $m \geq \log n$ {is a lower bound for any solution.}} We show that cost of interventions generated by the algorithm is at most twice the cost of the optimum set of interventions for this task. 
Our result is based on a characterization that shows that generating a set of interventions sufficient to recover ancestral relations  is equivalent to designing a {\em strongly separating set system} (Def.~\ref{def:ssss}). We show how to design such a set system with at most twice the optimum cost based on a greedy algorithm that constructs intervention sets which includes a variable with high cost in the least number of sets possible.

In the special case when each variable has unit intervention
cost~\citep{hyttinen2013experiment} give an exact algorithm to recover ancestral relations in  $G$ with minimal total cost. Their algorithm is based on the Kruskal-Katona theorem in combinatorics \citep{kruskal1963number,katona1966separating}. We show that a modification of this approach yields a $(1+\epsilon)$-approximation algorithm in the general linear cost model for any $0< \eps \leq 1$, under mild assumptions on $m$ and the maximum intervention cost on any one variable.

The linear cost model was first considered in~\citep{icml17} and studied under the causal sufficiency (no latents) assumption.~\citep{neurips18} showed given the {\em essential graph} of a causal graph, 
%that under this assumption, which translates to \textit{Essential Graph} of $G$ being chordal~\citep{hauser2012characterization}, 
the problem of recovering $G$ with optimal cost under the linear cost model is NP-hard. 
To the best of our knowledge, our result is the first to minimize intervention cost under the popular linear cost model in the presence of latents, and without the assumption of unit intervention cost on each variable.

We note that, while we give a 2-approximation for recovering ancestral relations in $G$, under the linear cost model, there seems to be no known  characterization of the optimal intervention sets needed to recover the entire causal graph $\mathcal{G}$, making it hard to design a good approximation here. Tackling this problem in the linear cost model is an interesting direction for future work.

\vspace{2mm}
\pparagraph{Identity Cost Model}. In the identity cost model, where we seek to just minimize the number of interventions, recovering ancestral relations in $G$ with minimum cost becomes trivial (see Section~\ref{sec:unweighted}).
Thus, in this case, we focus on algorithms that recover the causal graph $\GGG$ completely. 
We start with the notion of {\em colliders} in causal graphs~\citep{pearl}. Our idea is to parametrize the causal graph in terms of a specific type of colliders that we refer to as $p$-colliders (Def.~\ref{def:pcollider}). Intuitively, a node $v_k$ is $p$-collider for a pair of nodes $(v_i,v_j)$ if a) it is a collider on a path between $v_i$ and $v_j$ and b) at least one of the parents $v_i,v_j$ is a descendent of $v_k$. If the graph $\GGG$ has at most $\tau$ $p$-colliders between every pair of nodes, then our algorithm uses at most $O(n\tau \log n + n \log n)$  interventions.  We also present causal graph instances where any non-adaptive algorithm requires $\Omega(n)$  interventions.

{The only previous bound on recovering $\GGG$ in this setting utilized $O(\min\{d \log^2 n ,\ell\} + d^2 \log n)$ interventions where $d$ is the maximum (undirected) node degree and $\ell$ is the length of the longest directed path of the causal graph~\citep{neurips17}. Since we use a different parameterization of the causal graph, a direct comparison with this result is not always possible. We argue that a parameterization in terms of $p$-colliders is inherently more ``natural'' as it takes the directions of edges in $\GGG$ into account whereas the maximum degree does not. The presence of a single high-degree node can make the number of interventions required by existing work extremely high, even if the overall causal graph is sparse. In this case, the notion of $p$-colliders is a more global characterization of a causal graph. See Section~\ref{sec:experiments} for a more detailed discussion of different parameter regimes under which our scheme provides a better bound. We also experimentally show that our scheme achieves a better bound over~\citep{neurips17} in some popular random graph models.}

\subsection{Other Related Work}
Broadly, the problem of causal discovery has been studied under two different settings. In the first, one assumes \emph{causal sufficiency}, i.e., that there are no unmeasured (latent) variables. Most work in this setting focuses on recovering causal relationships based on just observational data. Examples include algorithms like {\em IC}~\citep{pearl} and {\em PC}~\citep{spirtes2000causation}.
Much work has focused on understanding the limitations and assumptions underlying these algorithms~\citep{hauser2014two, hoyer2009nonlinear, heinze2018causal, loh2014high, hoyer2009nonlinear, shimizu2006linear}. It is well-known, that to disambiguate a causal graph from its equivalence class, interventional, rather than just observational data is required~\citep{hauser2012characterization, eberhardt2007interventions, eberhardt2007causation}. In particular, letting $\chi(\mathcal G)$ be the chromatic number of $G$, $\Theta(\log \chi(\mathcal G))$ interventions are necessary and sufficient for recovery under the causal sufficiency assumption~\citep{hauser2014two}.  Surprising connections have been found~\citep{hyttinen2013experiment, katona1966separating, mao1984separating} between combinatorial structures and causality. Using these connections, much recent work has been devoted to minimizing intervention cost while imposing constraints such as sparsity or different costs for different sets of nodes~\citep{shanmugam2015learning, icml17, neurips18}.

In many  cases, causal sufficiency is too strong an assumption: it is often contested if the behavior of systems we observe can truly be attributed to measured variables~\citep{pearl2000causality, bareinboim2016causal}. In light of this, many algorithms avoiding the causal sufficiency assumption, such as IC$^*$~\citep{ic} and FCI~\citep{spirtes2000causation}, have been developed. The above algorithms only use observational data. However, there is a growing interest in optimal intervention design in this setting~\citep{silva2006learning, hyttinen2013discovering, parviainen2011ancestor}.  We contribute to this line of work, focusing on minimizing the intervention cost required to recover the full intervention graph, or its ancestral graph, in the presence of latents.

% !TEX root = main.tex

\section{Preliminaries} \label{sec:prelim}
\pparagraph{Notation.} Following the SCM framework introduced by \citet{pearl}, we represent the set of random variables of interest by $V \cup \lat$ where $V$ represents the set of endogenous (observed) variables that can be measured and $\lat$ represents the set of exogenous (latent) variables that cannot be measured. 
We define a directed graph on these variables where an edge corresponds to a causal relation between the corresponding variables. The edges are directed with an edge $(v_i,v_j)$ meaning that $v_i \rightarrow v_j$. As is common, we assume that all causal relations that exist between random variables in $V \cup \lat$ belong to one of the two categories : (i) $E \subseteq V \times V$ containing causal relations between the observed variables and (ii) $E_L \subseteq L \times V$ containing relations of the form $l \rightarrow v$ where $l \in \lat, v \in V$. Thus, the full edge set of our causal graph is denoted by $\Edg = E \cup E_L$. We also assume that every latent $l \in \lat$ influences exactly two observed variables i.e., $(l,u), (l,v) \in E_L$ and no other edges are incident on $l$ following~\citep{neurips17}. We let $\GGG = \GGG(V \cup \lat, \Edg)$  denote the entire causal graph and refer to $G=G(V, E)$ as the {\em observable graph}.

Unless otherwise specified a path between two nodes is a undirected path. 
For every observable $v \in V$, let the parents of $v$ be defined as $\Pa(v) = \{ w \mid w \in V \text{ and } (w,v) \in E \}$. For a set of nodes $S \subseteq V$, $\Pa(S) = \cup_{v \in S} \Pa(v)$. If $v_i, v_j \in V$, we say $v_j$ is a descendant of $v_i$ (and $v_i$ is an ancestor of $v_j$) if there is a directed path from $v_i$ to $v_j$. $\Anc(v) = \{ w \mid w \in V \text{ and } v \text{ is a descendant of }w \}$. We let $\Anc(G)$ denote the \emph{ancestral graph}\footnote{We note that the term {\em ancestral graph} has also been previously used in a different context, see e.g.,~\citep{richardson2002ancestral}.}  of $G$ where an edge $(v_i, v_j) \in \Anc(G)$ if and only if there is a directed path from $v_i$ to $v_j$ in $G$. One of our primary interests is in recovering $\Anc(G)$ using a minimal cost  set of interventions.

Using Pearl's do-notation, we represent an intervention on a set of variables $S \subseteq V$ as $\doo(S = s)$ for a value $s$ in the domain of $S$ and the joint probability distribution on $V \cup \lat$ conditioned on this intervention by $\Pr[\cdot \mid do(S)]$. 

We assume that there exists an oracle that answers queries such as \emph{``Is $v_i$ independent of $v_j$ given $Z$ in the interventional distribution $\Pr[\cdot \mid \doo(S = s)]$?''} 

\begin{assumption}[Conditional Independence (CI)-Oracle] \label{assum:CIoracle}
Given any  $v_i, v_j \in V$ and $Z, S \subseteq V$ we have an oracle that tests whether $v_i \indep v_j \mid Z, {\doo(S = s)}$. 
\end{assumption}
Such conditional independence tests have been widely investigated with sublinear (in domain size) bounds on the sample size needed for implementing this oracle~\citep{canonne2018testing,zhang2011kernel}. 

\paragraph{Intervention Cost Models.}
We study the causal discovery problem under two cost models:

\begin{enumerate}
    \item \textit{Linear Cost Model.} In this model, each node $v \in V$ has a different cost $c(v) \in \mathbb{R}^+$ and the cost of intervention on a set $S \subset V$ is defined as $\sum_{v \in S} c(v)$ (akin to~\citep{neurips18}). That is, interventions that involve a larger number of, or more costly nodes, are more expensive. Our goal is to find an intervention set $\mathcal{S}$ minimizing $\sum_{S \in \mathcal{S}} \sum_{v \in S} c(v)$. We constrain the number of interventions to be upper bounded by  some budget $m$. Without such a bound, we can observe that for ancestral graph recovery, the optimal intervention set is $\mathcal{S} = \{\{v_1\},\{v_2\},\ldots,\{v_n\}\}$ with cost $\sum_{v \in V} c(v)$ as intervention on every variable is necessary, {as we need to account for the possibility of latent variables (See Lemma~\ref{lem:lb-strong} for more details)}. The optimality of $\mathcal{S}$ here follows from a characterization of any feasible set system we establish in Lemma~\ref{lem:lb-strong}.  
    
    \item \textit{Identity Cost Model.} As an intervention on a set of variables requires controlling the variables, and generating a new distribution, we want to use as few interventions as possible. In this cost model, an intervention on any set of observed variables has \textit{unit cost} (no matter how many variables are in the set). We assume that for any intervention, querying the CI-oracle comes free of cost. This model is akin to the model studied in~\cite{neurips17}.
\end{enumerate}

\pparagraph{Causal Discovery Goals.}
We will study two variations of the causal discovery problem. In the first, we aim to recover the ancestral graph $\Anc(G)$, which contains all the causal ancestral relationships between the observable variables $V$. In the second, our goal is to recover all the causal relations in $\Edg$, i.e., learn the entire causal graph $\GGG(V \cup \lat, \Edg)$. 
We aim to  perform both tasks using a set of intervention sets $\mathcal{S} = \{S_1,\ldots, S_m\}$ (each $S_i \subseteq V$) with minimal cost, with our cost models defined above.

For ancestral graph recovery, we will leverage a simple characterization of when a set of interventions $\mathcal{S} = \{S_1,\ldots, S_m\}$ is sufficient to recover $\Anc(G)$. In particular, $\mathcal{S}$ is sufficient if it is a \emph{strongly separating set system}~\citep{neurips17}. 

\begin{definition}[Strongly Separating Set System] \label{def:ssss}
A collection of subsets $\mathcal{S} = \{S_1, \cdots, S_m\}$ of the ground set $V$ is a strongly separating set system if for every distinct $u,v\in V$ there exists $S_i$ and $S_j$ such that $u\in S_i\setminus S_j$ and $v\in S_j\setminus S_i$.

\end{definition}

Ancestral graph recovery using a strongly separating set system is simple: we intervene on each of the sets $S_1,\dots,S_m$. Using CI-tests we can identify for every pair of $v_i$ and $v_j$, if there is a path from $v_i$ to $v_j$ or not in $G$ using the intervention corresponding to $S \in \mathcal{S}$ with $v_i \in S \text{ and } v_j \notin S$. We add an edge to $\Anc(G)$ if the test returns dependence. Finally, we take the transitive closure and output the resulting graph as $\Anc(G)$. In Lemma \ref{lem:lb-strong}, we show that in fact being strongly separating is \emph{necessary} for any set of interventions to be used to identify $\Anc(G)$.
% !TEX root = main.tex

\section{Linear Cost Model} \label{sec:lincost}
We begin with our results on recovering the  ancestral graph $\Anc(G)$ in the linear cost model. Recall that, given a budget of $m$ interventions, our objective is to  find a set of interventions $\mathcal{S} = \{ S_1, S_2, \cdots S_m \}$ that can be used to identify $\Anc(G)$ while minimizing $\sum_{S \in \mathcal{S}} \sum_{v \in S} c(v)$.

As detailed in Section~\ref{sec:prelim}, a strongly separating set system is sufficient to recover the ancestral graph. We show that it also necessary: a set of interventions to discover $\Anc(G)$ must be a strongly separating set system (Definition~\ref{def:ssss}). See proof in Appendix~\ref{app:lincost}.

\begin{lemma}
\label{lem:lb-strong}
Suppose $\mathcal{S} = \{ S_1, S_2, \cdots, S_m \}$ is a collection of subsets of $V$. For a given causal graph $G$ if $\Anc(G)$ is recovered using CI-tests by intervening on the sets $S_i \in \mathcal{S}$. Then, $\mathcal{S}$ is a strongly separating set system.
\end{lemma}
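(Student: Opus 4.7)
The plan is to argue the contrapositive of Lemma~\ref{lem:lb-strong}: if $\mathcal{S}$ fails to be strongly separating, I will exhibit two causal instances whose ancestral graphs on $V$ differ but whose interventional CI-structure under $\mathcal{S}$ agrees on every query, so that no procedure using $\mathcal{S}$ together with the CI-oracle can distinguish them and reliably recover $\Anc(G)$.

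First I would unpack the negation of Definition~\ref{def:ssss}: there exist distinct $u, v \in V$ such that no pair $S_i, S_j \in \mathcal{S}$ simultaneously satisfies $u \in S_i \setminus S_j$ and $v \in S_j \setminus S_i$, which collapses to the statement that either every $S \in \mathcal{S}$ containing $u$ also contains $v$ or every $S \in \mathcal{S}$ containing $v$ also contains $u$; by symmetry I assume the latter. I would then build two causal graphs $\GGG_1, \GGG_2$ on the same observable set $V$ that agree outside the $(u,v)$-substructure: in $\GGG_1$ put only the directed observable edge $v \to u$ with no latent between $u$ and $v$, while in $\GGG_2$ omit this edge and instead introduce a latent $l$ with $l \to u$ and $l \to v$. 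All other observables are left isolated. These graphs produce different ancestral graphs, since only in $\GGG_1$ is $v$ an ancestor of $u$, so any correct recovery procedure must distinguish them.

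The core step is to verify, for every $S \in \mathcal{S}$, that the post-intervention faithful distributions induced by $\GGG_1$ and $\GGG_2$ give the same CI-oracle answer on every conditioning set $Z \subseteq V$. I would split into three cases. If $v \in S$ then $u \in S$ as well by our choice of $(u,v)$, so both are clamped and any CI query involving $u$ or $v$ is trivial while queries on other observables are identical because they are isolated. If $v \notin S$ but $u \in S$, then intervention mutilates all incoming edges at $u$, leaving $u$ a post-intervention source in both graphs, so $u \perp v \mid Z$ holds for every observable $Z$ in both models. If neither $u$ nor $v$ is in $S$, then $u$ and $v$ are d-connected in both graphs, directly in $\GGG_1$ and through the open chain $v \leftarrow l \to u$ in $\GGG_2$, and no observable conditioning set can block either connection because $l$ is unobservable, so $u \not\perp v \mid Z$ in both models.

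The main obstacle is precisely this case analysis, because the single experiment that \emph{would} distinguish $\GGG_1$ from $\GGG_2$, namely intervention on a set containing $v$ but not $u$ (which keeps $v \to u$ alive in $\GGG_1$ but severs $l \to v$ in $\GGG_2$ and leaves $u \perp v$), is the one forbidden by our assumption on $\mathcal{S}$. Once the three cases above are checked, $\mathcal{S}$ cannot distinguish $\GGG_1$ from $\GGG_2$, so any algorithm using $\mathcal{S}$ must return the same ancestral graph on both inputs; at least one of these outputs is wrong, contradicting the assumption that $\Anc(G)$ is recovered and completing the contrapositive.
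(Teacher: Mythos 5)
Your proposal is correct and follows essentially the same route as the paper's proof: take the contrapositive, and for an unseparated pair exhibit the adversarial ambiguity between the direct edge $v \rightarrow u$ and a latent confounder $u \leftarrow l \rightarrow v$, which no intervention in $\mathcal{S}$ (all of which either omit $v$ or clamp $u$ alongside it) can resolve via CI-tests. Your write-up is in fact slightly more careful than the paper's, since you unpack the full negation of strong separation (including sets containing both $u$ and $v$) and check all three intervention cases explicitly with two concrete indistinguishable graphs.
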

%%\vspace*{-1ex}

Given this characterization, the problem of constructing the ancestral graph $\Anc(G)$ with minimum linear cost {\em reduces} to that of constructing a strongly separating set system with minimum cost. In developing our algorithm for finding such a set system, it will be useful to represent a set system by a binary  matrix, with rows corresponding to observable variables $V$ and columns corresponding to interventions (sets $S_1,\ldots, S_m$).

\begin{definition}[Strongly Separating Matrix] \label{def:SSM}
Matrix $U \in \{0,1\}^{n \times m}$ is a \emph{strongly separating matrix} if $\forall i,j \in [n]$ $\text{there exists } k, k' \in [m] \text{ such that } U(i, k) = 1, U(j, k) = 0$ and $U(i, k') = 0, U(j, k') = 1$.
\end{definition}
%\vspace*{-1ex}
Note that given a strongly separating set system $\mathcal{S}$, if we let $U$ be the matrix where $U(i,k)=1$ if $v_i \in S_k$ and $0$ otherwise, $U$ will be a strongly  separating matrix. The other direction is also true. 
Let $U(j)$ denote the $j$th row of $U$.
Using Definition~\ref{def:SSM} and above connection between recovering $\Anc(G)$ and strongly separating set system, we can reformulate the problem at hand as:
\begin{align} 
&\text{min}_U \sum_{j=1}^n c(v_j)\cdot \| U(j) \|_1  \label{eqn:obj} \\
&\text{ s.t. } U \in \{0,1\}^{n \times m} \text{ is a strongly separating matrix. }\nonumber
\end{align}
We can thus view our problem as finding an assignment of vectors in $\{0,1\}^m$ (i.e., rows of $U$) to nodes in $V$ that minimizes \eqref{eqn:obj}. %Henceforth, we will focus on this objective. 
Throughout, we will call $\norm{U(j)}_1$ the \emph{weight} of row $U(j)$, i.e., the number of $1$s in that row.
It is easy to see that $m\ge \log n$ is necessary for a feasible solution to exist as each row must be distinct.

We start by giving a 2-approximation algorithm for~\eqref{eqn:obj}. In Section~\ref{sec:kruskal-katona}, we show how to obtain an improved approximation under certain assumptions.

%\vspace*{-1ex}
\subsection{$2$-approximation Algorithm} \label{sec:2approx} 
%\vspace*{-1ex}
In this section, we present an algorithm (Algorithm~\SSMatrix) that constructs a strongly separating matrix (and a corresponding intervention set) which minimizes~\eqref{eqn:obj} to within a $2$-factor of the optimum. Missing details from section are collected in Appendix~\ref{app:2approx}.

\newcommand{\rvline}{\hspace*{-\arraycolsep}\vline\hspace*{-\arraycolsep}}

\vspace{2mm}
\pparagraph{Outline.}  Let $U_{\OPT}$ denote a strongly separating matrix minimizing~\eqref{eqn:obj}. Let $c_{\OPT} = \sum_{j=1}^n c(v_j) \|U_{\OPT}(j)\|_1$ denote the objective value achieved by this optimum $U_{\OPT}$. We start by relaxing the constraint on $U$ so that it does \emph{not need to be strongly separating}, but just must have unique rows, where none of the rows is all zero. In this case, we can optimize \eqref{eqn:obj} very easily. We simply take the rows of $U$ to be the $n$ unique binary vectors in $\{0,1\}^m \setminus \{0^m\}$ with lowest weights. That is, $m$ rows will have weight $1$, $\binom{m}{2}$ will have weight $2$, etc. We then assign the rows to the nodes in $V$ in descending order of their costs. So the $m$ nodes with the highest costs will be assigned the weight $1$ rows, the next $\binom{m}{2}$ assigned weight $2$ rows, etc. The cost  of this assignment is only lower than $c_{\OPT}$, as we have only  relaxed the constraint in \eqref{eqn:obj}.

We next convert this relaxed solution into a valid strongly separating matrix. Given $m + \log n$ columns, we can do this easily.
Since there are $n$ nodes, in the above assignment, all rows will have weight at most $\log n$. Let $\bar U \in \{0,1\}^{m + \log n}$ have its first $m$ columns equal to those of $U$. Additionally, use the last $\log n$ columns as `row weight indicators': if $\norm{U(j)}_1 = k$ then set $\bar U(j,m+k) = 1$.  We can see that $\bar U$ is a strongly  separating matrix. If two rows have different weights $k,k'$ in $\bar U$, then the last $\log n$ columns ensure that they satisfy the strongly separating condition. If they  have the same weight in $\bar U$, then they already satisfy the condition, as to be unique in $U$ they must have a at least 2 entries on which they differ.

To turn the above idea into a valid approximation algorithm that outputs $\bar U$ with just $m$ (not $m + \log n$) columns, we argue that we can `reserve' the last $\log n$ columns of $\bar U$ to serve as weight indicator columns. We are then left with just $m-\log n$ columns to work with. Thus we can only assign $m-\log n$ weight $1$ rows, $\binom{m-\log n}{2}$ weight 2 rows, etc. Nevertheless, if $m \geq { \gamma} \log n$ (for a constant $\gamma > 1$), this does not affect the assignment much: for any $i$ we can still `cover' the $\binom{m}{i}$ weight $i$ rows in $U$ with rows of weight $\le 2i$. Thus, after accounting for the weight indicator columns, each weight $k$ row in $U$ has weight $\le 2k+1$ in $\bar U$. Overall, this gives us a 3-approximation algorithm: when $k$ is $1$ the weight of a row may become as large as $3$.

To improve the approximation to a 2-approximation we \emph{guess} the number of weight $1$ vectors $a_1$ in the optimum solution $U_{\OPT}$ and assign the $a_1$ highest cost variables  to weight $1$ vectors, achieving optimal cost for these variables. There are $O(m)$ possible values for $a_1$ and so trying all guesses is still efficient. We then apply  our approximation algorithm to the remaining $m-a_1$ available columns of $U$ and $n - a_1$ variables. Since no variables  are assigned weight $1$ in this set, we achieve a tighter $2$-approximation using our approach. The resulting matrix has the form:
\[
U=\begin{pmatrix}
\mathbb{I}_{a_1} & \rvline &  \o0  & \rvline &   0  \\   \hline
0 & \rvline &  C_1 &  \rvline &
M_1  \\ \hline 
0 & \rvline &  C_2 &  \rvline &
M_2 \\ \hline 
\vdots & \rvline & \vdots &\rvline & \vdots
\end{pmatrix}
\]
 where $\mathbb{I}_{a_1}$ is the $a_1\times a_1$ identity matrix, the rows of $C_w$ are all weight $w$ binary vectors of length $m-\log n-a_1$, and the rows of $M_w$ are length $\log n$ binary vectors with 1's in the $w$th column. The entire approach is presented in Algorithm~\SSMatrix~and a proof of the approximation bound in Theorem~\ref{thm:ss_2approx} is present in Appendix~\ref{app:2approx}. %Overall we have:
\begin{algorithm}[t]
\begin{small}
\caption{\SSMatrix $(V,m)$}
\label{alg:ss_matrix}
\begin{algorithmic}[1]

\State  $c_{U_{min}} \leftarrow \infty$
\For{$a_1 \in \{ 0, 1, \cdots, {2m}/{3} \}$}
\State  $U \in \{0,1\}^{n \times m}$ be initialized with all zeros
\State  Assign the highest cost $a_1$ nodes with unit weight vectors such that $U(i,i) = 1$ for $i \leq a_1$
\State  Set $m' \leftarrow m - a_1$
%\State  Reserve the last $``\log n"$ columns of $U$ 
\State  Mark all vectors of weight \textit{at least} $1$ in $\{0,1\}^{m'-\log n}$ as available 
\For{unassigned $v_i \in V$ (in decreasing order of cost)}
\State Set $U(i, (a_1+1):m-\log n)$ to smallest available weight vector in $\{ 0 , 1 \}^{m'-\log n}$ and make this vector unavailable. Let the weight of the assigned vector be $k$
\State Set `row weight indicator' $U(i, m'-\log n + k) = 1$  

\EndFor 
\State  Compute cost of objective for $U$ be $c_{U}$
\If{$c_{U}$ < $c_{U_{min}}$} 
\State  $c_{U_{min}} \leftarrow {c}_{U}, U_{min} \leftarrow U$
\EndIf
\EndFor
\State  Return $U_{min}$
\end{algorithmic}
\end{small}
\end{algorithm}
\setlength{\textfloatsep}{2pt}

\begin{theorem}
\label{thm:ss_2approx}
Let $m \geq \gamma \log n$ for constant $\gamma>1$ and $U$ be the strongly separating matrix returned by~\SSMatrix.\!\footnote{In our proof, $\gamma = 66$ but this can likely be decreased.} Let $c_U = \sum_{j=1}^n c(v_j)\, \| U(j) \|_1$. Then, $c_U \leq 2 \cdot c_{\OPT}$, where $c_{\OPT}$ is the objective value associated with optimum set of interventions corresponding to $U_{\OPT}$.
\end{theorem}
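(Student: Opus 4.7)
The plan is to analyze one well-chosen iteration of the outer loop and show that its output already has cost at most $2c_{\OPT}$; since the algorithm returns the best iteration, this suffices. Throughout, assume by a standard exchange argument that in $U_{\OPT}$ the nodes listed in decreasing order of cost $c(v_1)\ge c(v_2)\ge\dots\ge c(v_n)$ are assigned rows of non-decreasing weight (any pair violating this can be swapped without increasing the objective). Let $a_1^*$ be the number of weight-$1$ rows of $U_{\OPT}$ and fix the iteration with $a_1 = \min(a_1^*,\, \lfloor 2m/3\rfloor)$, which is in the range searched by the algorithm.

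First I would verify that the returned matrix $U$ is strongly separating. Two rows inside the $\mathbb{I}_{a_1}$ block are trivially strongly separated. Two rows in the lower block have row-weight-indicator blocks $M_w$: if they have different data weights $k\neq k'$ in the $C$-portion, the indicator columns $(m-\log n + k)$ and $(m-\log n + k')$ provide the two required witnesses; if they share the same data weight $k\ge 1$, then they are distinct binary vectors of the same weight in $C_k$, so they must differ in at least two positions and those positions are the required witnesses. A row in the identity block versus one in the lower block is handled by combining an identity column with any $1$ in the lower block's row.

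Next I would bound the cost. Let $b_j^{\text{ALG}}$ and $b_j^{\text{OPT}}$ denote the weights assigned to the $j$-th node by the algorithm and by $U_{\OPT}$ respectively. For $j\le a_1$ both equal $1$, so these contributions match exactly. For $j>a_1$, the goal is the pointwise bound $b_j^{\text{ALG}} \le 2\, b_j^{\text{OPT}}$; summing weighted by $c(v_j)$ then yields $c_U \le 2c_{\OPT}$. If $a_1^* > 2m/3$ and $a_1 < j \le a_1^*$, then $b_j^{\text{OPT}}=1$ and it must be checked that $b_j^{\text{ALG}}\le 2$, which holds because the algorithm's next $m - \log n - a_1$ assignments use data-weight $1$ plus one indicator bit, giving total weight $2$, and there are enough such slots since $a_1^*\le m$. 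For $j>a_1^*$, suppose $b_j^{\text{OPT}}=k\ge 2$. Then among the first $j$ nodes, $U_{\OPT}$ uses at most $a_1^* + \sum_{i=2}^{k}\binom{m}{i}$ rows of weight $\le k$, so $j - a_1 \le \sum_{i=2}^{k}\binom{m}{i}$. The algorithm assigns node $j$ a data-weight vector plus one indicator; to guarantee $b_j^{\text{ALG}}\le 2k$ it suffices to show the number of available data-weight vectors of weight at most $2k-1$ covers $j - a_1$, i.e.
\[
\sum_{i=1}^{2k-1}\binom{m - \log n - a_1}{i} \;\ge\; \sum_{i=2}^{k}\binom{m}{i}.
\]

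The main obstacle is this combinatorial inequality. The useful direction is $m-\log n - a_1 \ge m/3 - \log n$ (using $a_1\le 2m/3$), and one needs that pulling in $2k-1$ terms of Pascal's triangle on the smaller universe dominates $k-1$ terms on the larger universe. A clean route is to compare $\binom{m/3-\log n}{2k-1}$ to $\binom{m}{k}$ using the identity $\binom{\alpha m}{2k}/\binom{m}{k} \ge (\alpha^2 m / 4k)^k$ up to lower-order factors, which forces a requirement of the form $m \ge \gamma \log n$ for a sufficiently large constant $\gamma>1$ (the paper states $\gamma = 66$ is enough). Handling the boundary cases $k=2$ and $k$ close to $m/\log m$ separately, where binomials behave differently, is the most delicate part of the calculation; everything else is a summation and a pointwise weight inequality.
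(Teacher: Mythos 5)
Your overall architecture (fix the guess $a_1=\min(a_1^*,\lfloor 2m/3\rfloor)$, verify strong separation, and compare weights node by node against a cost-monotone rearrangement of $U_{\OPT}$) matches the paper, and your separation argument is fine. The genuine gap is the step for weight-$1$ optimal nodes outside the identity block. When $a_1^*>2m/3$, the algorithm has only $m-a_1-\log n\approx m/3-\log n$ rows of total weight $2$ available (one data bit plus one indicator bit), while the number of nodes with $b_j^{\OPT}=1$ and $j>a_1$ can be as large as $m/3$. Whenever $a_1^*>m-\log n$ (for instance $n\le m$, where assigning all $n$ nodes distinct weight-$1$ vectors is an optimal strongly separating matrix), up to $\log n$ of these nodes are forced to total weight $3$, and this happens for \emph{every} guess $a_1\le 2m/3$, so no iteration satisfies your pointwise bound $b_j^{\mathrm{ALG}}\le 2\,b_j^{\OPT}$. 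The theorem survives only through an aggregate charging argument, which is exactly what the paper does: it shows weight-$1$ optimal nodes receive algorithm weight at most $3$ (not $2$), and then bounds the total cost over $V^{(1)}_{\OPT}$ by $2c^{(1)}_{\OPT}$ using that the at most $m/3$ overflow nodes each cost at most $c(v_{2m/3})$, while $c^{(1)}_{\OPT}\ge \tfrac{2m}{3}c(v_{2m/3})$. Your plan of summing a pointwise factor-$2$ bound cannot be repaired without this amortization.

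Two smaller issues in your $k\ge 2$ case. The inequality $\sum_{i=1}^{2k-1}\binom{m-\log n-a_1}{i}\ge\sum_{i=2}^{k}\binom{m}{i}$ is false for large $k$ (for $k$ near $m/2$ the right side is roughly $2^{m-1}$ while the left side is at most $2^{m/3}$), so you must also argue that only $k=O(\log n)\ll m$ matters, e.g.\ because the algorithm never assigns total weight exceeding $O(\log n)$, which makes the claim vacuous for larger $k$; the paper instead routes the comparison through the intermediate assignment over $m'=m/3$ columns, so its covering lemma only needs $\binom{m'}{t}$ rather than $\binom{m}{t}$ on the right. Also, when $a_1=\lfloor 2m/3\rfloor<a_1^*$ your counting should read $j-a_1\le(a_1^*-a_1)+\sum_{i=2}^{k}\binom{m}{i}$; the extra $m/3$ term is easily absorbed but should appear.
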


Using the interventions from the matrix $U$ returned by Algorithm~\SSMatrix, we obtain
a cost within twice the optimum for recovering $\Anc(G)$.

\subsection{$(1+\epsilon)$-approximation Algorithm}
\label{sec:kruskal-katona}
In~\citep{hyttinen2013experiment}, the authors show how to construct a collection $\mathcal{A}$ of $m$ strongly separating intervention sets with minimum average set size, i.e., $\sum_{A \in \mathcal{A}} |A|/m$. This is equivalent to minimizing the objective~\eqref{eqn:obj} in the linear cost model when the cost of intervening on any node equals $1$.  
In this section, we analyze an adaptation of their algorithm to the general linear cost model, and obtain a $(1+\epsilon)$-approximation for any given $0 < \epsilon \leq 1$, an improvement over the $2$-approximation of Section \ref{sec:2approx}. Our analysis requires mild restrictions on the number of interventions  and an upper bound on the maximum cost. The algorithm will not depend on $\epsilon$ but these bounds will.
Missing details from this section are collected in Appendix~\ref{app:kruskal-katona}.

\vspace{2mm}
\pparagraph{Algorithm~\epsSSMatrix~Outline}. 
The famous Kruskal-Katona theorem in combinatorics forms the basis of the scheme presented in~\citep{hyttinen2013experiment} for minimizing the average size of the intervention sets. To deal with with varying costs of node interventions, we augment this approach with a greedy strategy. Let $\mathcal{A}$ denote a set of $m$ interventions sets over the nodes $\{v_1, v_2 \cdots, v_n \}$ obtained using the scheme from~\citep{hyttinen2013experiment}. Construct a strongly separating matrix $\tilde{U}$ from $\mathcal{A}$ with $\tilde{U}(i,j) = 1$ iff $v_i \in A_j$ for $A_j \in \mathcal{A}$. Let $\zeta$ denote the ordering of rows of $\tilde{U}$ in the increasing order of weight. Our Algorithm~\epsSSMatrix~outputs the strongly separating matrix $U$ where, for every $i \in [n]$, $U(i) = \tilde{U}(\zeta(i))$ and the $i$th row of $U$ corresponds to the node with $i$th largest cost.

Let $c_{max} = {\text{max}_{v_i \in V} \,c(v_i)}/{\text{min}_{v_i \in V} \,c(v_i)}$ be the ratio of maximum cost to minimum cost of nodes in $V$. For ease of analysis, we assume that the cost of any node is least $1$.

\begin{theorem}\label{thm:approx_1+e}
Let $U$ be the strongly separating matrix returned by~\epsSSMatrix. 
% Let $c_U = \sum_{j=1}^n c(v_j)\, \| U(j) \|_1$. 
If $c_{max} \leq \frac{\epsilon n}{3 \binom{m}{t}}$ for $0< \eps \leq 1$ where $\binom{m}{k-1}<n\leq \binom{m}{k}$ and $t=\lfloor k-\epsilon k/3 \rfloor$, then, 
\[c_U:= \sum_{j=1}^n c(v_j)\, \| U(j) \|_1 \leq (1+\eps) \cdot c_{\OPT} \ ,\] where $c_{\OPT}$ is the objective value associated with optimum set of interventions corresponding to $U_{\OPT}$.
\end{theorem}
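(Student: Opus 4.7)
The strategy is to upper bound $c_U$ and lower bound $c_{\OPT}$ separately and show the ratio is $\le 1+\epsilon$, using three tools: the rearrangement inequality, the Kruskal--Katona bound on the structure of $\tilde U$, and the LYM inequality. The algorithm pairs the $i$th smallest-weight row of $\tilde U$ with the $i$th most expensive node; by the rearrangement inequality this is the optimal assignment of $\tilde U$'s weight multiset to the given cost vector, so $c_U=\sum_i c_{(i)}\, w_i$ where $c_{(1)}\ge\dots\ge c_{(n)}$ are sorted costs and $w_1\le\dots\le w_n$ are the sorted row-weights of $\tilde U$. Applying the same rearrangement inequality to $U_{\OPT}$, whose rows form a strongly separating (hence antichain) family of subsets of $[m]$ by Lemma~\ref{lem:lb-strong}, yields $c_{\OPT}\ge\sum_i c_{(i)}\, w_i^*$ where $w_1^*\le\dots\le w_n^*$ are the sorted weights of $U_{\OPT}$.

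Since $n\le\binom{m}{k}$, the Kruskal--Katona construction of $\tilde U$ uses only antichain sets of size at most $k$, so $w_i\le k$ and $c_U\le k\,C$ with $C=\sum_j c(v_j)\ge n$ (using the assumption $c(v_j)\ge 1$). For the lower bound on $c_{\OPT}$, the LYM inequality combined with the monotonicity of $\binom{m}{w}$ for $w\le m/2$ implies that at most $\binom{m}{t}$ sets in any antichain have size $\le t$, so $L:=|\{i:w_i^*\le t\}|\le\binom{m}{t}$. Substituting $w_i^*\ge 1$ for $i\le L$ and $w_i^*\ge t+1$ for $i>L$ into the rearrangement bound gives
\[c_{\OPT}\ge\sum_{i=1}^L c_{(i)}+(t+1)\sum_{i=L+1}^n c_{(i)}=(t+1)\,C-t\,C_L,\]
where $C_L:=\sum_{i=1}^L c_{(i)}\le L\cdot c_{max}\le\binom{m}{t}\,c_{max}\le\epsilon n/3\le\epsilon C/3$ by the hypothesis on $c_{max}$. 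Hence $c_{\OPT}\ge[(t+1)-t\epsilon/3]\,C$.

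Combining the two bounds yields $c_U/c_{\OPT}\le k/((t+1)-t\epsilon/3)$. Since $t=\lfloor k-\epsilon k/3\rfloor$ we have $t+1\ge k(1-\epsilon/3)$, so the denominator is at least $k(1-\epsilon/3)^2$ and the ratio is at most $1/(1-\epsilon/3)^2$. This is $\le 1+\epsilon$ whenever $(1+\epsilon)(1-\epsilon/3)^2\ge 1$, which holds in a neighborhood of $0$. The main obstacle is extending the tight $(1+\epsilon)$ guarantee to the entire range $\epsilon\in(0,1]$: the loose step is $c_U\le k\,C$, since the Kruskal--Katona optimum forces roughly $\binom{m}{k-1}$ of the rows to have weight $k-1$, and the algorithm's sorted pairing assigns those weight-$(k-1)$ rows to the \emph{highest-cost} nodes. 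Carrying this refined weight distribution through both the upper bound on $c_U$ and a matching LYM-based lower bound on $c_{\OPT}$ (so that the contribution from the top $\binom{m}{k-1}$ costs is compared $(k-1)$ to $(k-1)$ rather than $k$ to $t+1$) is the delicate constant-chasing step that, together with the $c_{max}$ hypothesis, closes the gap to exactly $1+\epsilon$.
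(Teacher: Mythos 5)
There is a genuine gap. Your argument correctly assembles the right ingredients (the sorted pairing via rearrangement, the $k-1/k$ weight structure of $\tilde U$, and the LYM bound $L\le\binom{m}{t}$ on the number of OPT rows of weight $\le t$), but the way you combine them — a single global upper bound $c_U\le kC$ against a single global lower bound $c_{\OPT}\ge\bigl((t+1)-t\epsilon/3\bigr)C$ — yields the ratio $1/(1-\epsilon/3)^2$, which exceeds $1+\epsilon$ once $\epsilon>(5-\sqrt{13})/2\approx 0.70$ (at $\epsilon=1$ it equals $9/4>2$). So the proof as written only establishes the theorem for small $\epsilon$, whereas the statement claims all $0<\epsilon\le 1$; the "delicate constant-chasing step" you defer is exactly the missing content. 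Moreover, the fix you sketch is not sound as stated: comparing the top $\binom{m}{k-1}$ cost nodes "$(k-1)$ to $(k-1)$" presumes $U_{\OPT}$ assigns them weight at least $k-1$, but $U_{\OPT}$ may give precisely these high-cost nodes weight as low as $1$ — that is the adversarial case the $c_{max}$ hypothesis exists to control — so a refined row-by-row matching of $U$'s weight profile against OPT's does not go through.

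The paper closes the gap with a different decomposition that converts the loss on the high-cost nodes into an additive $\epsilon/3$ rather than a second multiplicative $1/(1-\epsilon/3)$ factor. Let $S$ be the nodes receiving weight $\le t$ in $U_{\OPT}$; by LYM, $|S|\le\binom{m}{t}$. Their contribution to $c_U$ is bounded by $c_{max}\,\bar{k}_{|S|}\,\binom{m}{t}\le \epsilon\,\bar{k}_{|S|}\,n/3$, where $\bar{k}_{|S|}$ is the average of the $|S|$ smallest weights assigned by the algorithm; this is compared not to $c_{\OPT}(S)$ but to the global lower bound $c_{\OPT}(V)\ge \bar{k}_n\, n\ge\bar{k}_{|S|}\, n$, which follows from the unit-cost optimality of the Kruskal--Katona construction (flat antichain theorem) together with $c(v)\ge 1$ — note the $\bar{k}_{|S|}$ factors cancel, giving an additive $\epsilon/3$. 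For every node outside $S$, one compares weights nodewise: at most $k$ in $U$ versus at least $t+1\ge k(1-\epsilon/3)$ in $U_{\OPT}$, giving $c_U(V\setminus S)\le c_{\OPT}(V\setminus S)/(1-\epsilon/3)$. Summing, $c_U/c_{\OPT}\le \epsilon/3+1/(1-\epsilon/3)\le 1+\epsilon$ for all $0<\epsilon\le 1$. Your global bound effectively pays the $1/(1-\epsilon/3)$ factor twice; the paper's split pays it once plus $\epsilon/3$, which is what makes the full range of $\epsilon$ work.
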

%\vspace*{-1ex}

\begin{proof}
Suppose the optimal solution $U_{\OPT}$ includes $a^*_q$ vectors of weight $q$. Let $S$ be the $a^*_1+a^*_2+\ldots +a^*_t$ nodes with highest cost in $U_{\OPT}$. 
Since $a^*_q\leq \binom{m}{q}$, it immediately follows  that $|S|\leq \sum_{i=q}^{t} \binom{m}{q}$.  However, a slightly tighter analysis (see Lemma~\ref{lem:LYM_ineq}) implies $|S| \leq \binom{m}{t}.$
 Let $c_{\OPT}(S)$ be the total contribution of the nodes in $S$ to $c_{\OPT}$. Let $c_{U}(S)$ denote the sum of contribution of the nodes in $S$ to $c_U$ for the matrix $U$ returned by~\epsSSMatrix.  Let $\bar{k}_{|S|}$ and $\bar{k}_n$ be the average of the smallest $|S|$ and $n$ respectively  of the vector weights assigned by the algorithm. It is easy to observe that $\bar{k}_{|S|} \leq \bar{k}_n$.
\begin{align*}
  c_U(S) &= \sum_{v_i \in S} c(v_i) \norm{U(i)}_1  \leq  c_{max} \sum_{v_i \in S} \norm{U(i)}_1\\
  &= c_{max} \bar{k}_{|S|} |S|  \leq c_{max} \bar{k}_{|S|} \binom{m}{t} \leq   \eps \bar{k}_{|S|} n/3.
\end{align*}

As every node in $V\setminus S$ receives weight at least $t = k-\epsilon k/3$ in $U_{\OPT}$ and at most $k$ in $U$ returned by~\epsSSMatrix, we have $c_{U}(V\setminus S) \leq \frac{c_{\OPT}(V\setminus S)}{1-\epsilon/3}.$ 
Now, we give a lower bound on the cost of the optimum solution $c_{\OPT}(V)$. {We know that when costs of all the nodes are $1$, then~\epsSSMatrix achieves optimum cost denoted by $c'_{\OPT}(V)$} (see Appendix~\ref{app:kruskal-katona} for more details). As all the nodes of $V$ have costs more than $1$, we have: 

\begin{align*}
c_{\OPT}(V) &\geq c'_{\OPT}(V) = \bar{k}_n \cdot n\geq \bar{k}_{|S|} \cdot n.    
\end{align*}
Hence,
\[
\frac{c_U(V)}{c_{\OPT}(V)} 
\leq 
\frac{c_U(S)}{\bar{k}_{|S|} n} + \frac{c_U(V\setminus S)}{c_{\OPT}(V\setminus S)}
\leq  
\frac{\epsilon}{3} + \frac{1}{1-\epsilon/3}
\leq 
1+ \epsilon.
\]
This completes the proof.

\end{proof}
By bounding the binomial coefficients in Thm. \ref{thm:approx_1+e}, we obtain the following somewhat easier to interpret corollary:
\begin{corollary}
\label{cor:maxcost}

If $c_{max}\leq (\epsilon/6) n^{\Omega(\epsilon)}$ and either a) $n^{\epsilon/6} \geq m\geq (2\log_2 n)^{c_1}$ for some constant $c_1>1$ or b) $4\log_2 n\leq m\leq c_2 \log_2 n$ for some constant $c_2$ then the Algorithm~\epsSSMatrix~returns an $(1+\epsilon)$-approximation. 
\end{corollary}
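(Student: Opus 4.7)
The plan is to reduce the corollary to a purely combinatorial binomial-coefficient estimate, and then verify that estimate separately in each of the two regimes. The hypothesis of Theorem~\ref{thm:approx_1+e} that needs to be established is $c_{max} \le \frac{\eps n}{3\binom{m}{t}}$. Writing the corollary's assumption as $c_{max}\le (\eps/6) n^{c\eps}$ for some constant $c>0$ (the meaning of the $\Omega(\eps)$ in the exponent), the task reduces to showing
\[
\binom{m}{t} \le 2\, n^{1-c\eps},
\]
where we are free to choose $c$ as long as it depends only on $c_1, c_2$.

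The main tool is the telescoping identity
\[
\binom{m}{t} \;=\; \binom{m}{k-1} \prod_{i=t+1}^{k-1} \frac{i}{m-i+1} \;\le\; n\left(\frac{k-1}{m-k+2}\right)^{k-1-t},
\]
where the inequality uses $\binom{m}{k-1}<n$ (definition of $k$) and bounds each factor by its maximum. Since $t=\lfloor k(1-\eps/3)\rfloor$, the exponent obeys $k-1-t\ge \eps k/3-1$, so it is enough to show $\bigl((m-k+2)/(k-1)\bigr)^{\eps k/3-1}\ge n^{c\eps}/2$.

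For case (b), where $m=\Theta(\log n)$, the minimality of $k$ together with $\binom{m}{\lfloor m/2\rfloor}\gg n$ (which holds as soon as $m\ge 4\log_2 n$) forces $k/m$ to be bounded above by a constant strictly less than $1/2$ and bounded below by a positive constant depending on $c_2$. Hence $(m-k+2)/(k-1)\ge \eta>1$ for a constant $\eta=\eta(c_2)$, while $\eps k/3-1 = \Theta(\eps\log n)$, and the whole LHS is $\ge n^{\Omega(\eps)}$. For case (a), I would combine the sandwich $(m/j)^j\le \binom{m}{j}\le (em/j)^j$ at $j=k-1$ and $j=k$ with the definition of $k$ to extract the two estimates $(k-1)\log(m/k)<\log n$ and $k\log(m/k)\ge \log n-k$. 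Multiplying the latter by $\eps/3$ gives
\[
(\eps k/3)\log\!\bigl((m-k+2)/(k-1)\bigr) \;\gtrsim\; (\eps/3)\log n,
\]
from which the needed bound follows after absorbing the $-1$ in $\eps k/3-1$ and the additive $-k$ term into a slightly smaller constant $c$ in the exponent.

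The delicate point will be the corner of case (a) where $m$ is close to $n^{\eps/6}$ and $k$ collapses to $O(1/\eps)$: there the exponent $\eps k/3-1$ can be as small as $1$, so the whole argument rests on the single base factor $(m-k+2)/(k-1)\approx m/k\approx n^{\eps/6}$ being already at least $n^{c\eps}$ for any $c<1/6$. A similar but easier check near the other endpoint $m=(2\log_2 n)^{c_1}$ uses $c_1>1$ to make $m/k$ grow faster than any fixed power of $\log n$. Choosing $c$ uniformly smaller than all explicit constants produced in the two subcases, and verifying the mild side condition $\eps k/3\ge 1$ (which holds once $n$ is sufficiently large under the stated hypotheses), then yields the corollary.
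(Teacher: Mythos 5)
Your proposal is correct in substance and follows the same overall reduction and two-regime case analysis as the paper, but it swaps out the paper's central technical lemma for a more elementary estimate. The paper first proves Lemma~\ref{lem:mt_lb}, bounding the ratio $\binom{m}{t}/\binom{m}{k-1}$ via the binary entropy function and the MacWilliams--Sloane bounds to get $\binom{m}{t}\le 2n\cdot 2^{-(\eps k/6)\log_2(m/(2k))}$, and then shows $2^{(\eps k/6)\log_2(m/(2k))}\ge n^{\Omega(\eps)}$: in case (a) from $2k\le 2\log_2 n\le m^{1/c_1}$, hence $\log_2(m/(2k))\ge(1-1/c_1)\log_2 m$, combined with $k\ge\log_m n$; in case (b) from $k\ge\log_{ec_2}n$ and $\log_2(m/(2k))\ge 1$. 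You bound the same ratio by the telescoping product of consecutive binomial ratios, $\binom{m}{t}\le\binom{m}{k-1}\bigl(\tfrac{k-1}{m-k+2}\bigr)^{k-1-t}<n\bigl(\tfrac{k-1}{m-k+2}\bigr)^{k-1-t}$ with $k-1-t\ge\eps k/3-1$, which avoids the entropy machinery and Stirling-type constants entirely; what you give up is a little slack in the exponent (your $\eps k/3-1$ versus the paper's $\eps k/6$), which costs nothing here. Your observation that the corner $m\approx n^{\eps/6}$, $k=\Theta(1/\eps)$ is covered because $m\le n^{\eps/6}$ forces $\eps k/3\ge 2$ is exactly right and mirrors the paper's use of $k\ge\log_m n\ge 6/\eps$.

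Two steps should be tightened. In case (b), $\binom{m}{\lfloor m/2\rfloor}\gg n$ only forces $k\le m/2$, not $k/m$ bounded away from $1/2$; use instead $\binom{m}{\lfloor m/4\rfloor}\ge 4^{\lfloor m/4\rfloor}\ge n^2/4\ge n$ (valid once $m\ge 4\log_2 n$), so $k\le m/4$ and $(m-k+2)/(k-1)\ge 3$, and pair this with $k\ge\log_{ec_2}n$. In case (a), your route through $k\log(em/k)\ge\log n$ loses additive terms of order $k$ (the $-k$ itself, plus the slack in replacing $(m-k+2)/(k-1)$ by $m/(2k)$), so you must show $k\le\delta\log_2 n$ with $\delta$ strictly below $1/2$; this does hold, since $(m/(k-1))^{k-1}\le\binom{m}{k-1}<n$ and $m\ge(2\log_2 n)^{c_1}$ give $k-1\le\log_2 n/(c_1+(c_1-1)\log_2\log_2 n)=o(\log n)$ --- and it is precisely here that $c_1>1$ is used, not in making $m/k$ grow ``faster than any fixed power of $\log n$'' (it grows only like $(\log_2 n)^{c_1-1}$ up to lower-order factors). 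The paper's case (a) sidesteps this bookkeeping by writing $\log_2(m/(2k))\ge(1-1/c_1)\log_2 m$, which needs only the crude bound $k\le\log_2 n$. With these small repairs your argument is complete and yields the same $n^{\Omega(\eps)}$ thresholds, with constants depending on $c_1$ and $c_2$ as in the statement.
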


\section{Identity Cost Model} \label{sec:unweighted}
In this section, we consider the identity cost model, where the cost of intervention for any subset of variables is the same. Our goal is to construct the entire causal graph $\GGG$, while minimizing the number of interventions.  Our algorithm is based on parameterizing the causal graph based on a specific type of collider structure. 
 
Before describing our algorithms, we recall the notion of $d$-separation and introduce this specific type of colliders that we rely on. Missing details from section are collected in Appendix~\ref{app:unweighted}.

\vspace{2mm}

\pparagraph{Colliders.} Given a causal graph $\GGG(V \cup \lat, \Edg)$, let $v_i, v_j \in V$ and a set of nodes $Z \subseteq V$. We say $v_i$ and $v_j$ are {\em $d$-separated} by $Z$ if and only if every undirected path $\pi$ between $v_i$ and $v_j$ is {\em blocked} by $Z$. A path $\pi$ between $v_i$ and $v_j$ is blocked by $Z$ if at least one of the following holds.
\begin{description}
\item[Rule 1:] $\pi$ contains a node $v_k \in Z$ such that the path $\pi =v_i \cdots \rightarrow v_k \rightarrow \cdots v_j$ or 
$v_i \cdots \leftarrow v_k \leftarrow \cdots v_j$.
\item[Rule 2:] $\pi = v_i \cdots \rightarrow v_k \leftarrow \cdots v_j$ contains a node $v_k$ and both $v_k \notin Z$ and no descendant of $v_k$ is in $Z$. 
%\emph{does not} contain a node $v_k \in Z$ or a descendant of $v_k$ in $Z$ such that $\pi = v_i \cdots \rightarrow v_k \leftarrow \cdots v_j$. 
\end{description}

\begin{lemma}{\citep{pearl}}
If $v_i$ and $v_j$ are $d$-separated by $Z$, then $v_i  \indep v_j \mid Z$.
\end{lemma}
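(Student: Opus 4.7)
The plan is to derive this from the causal Markov condition, which (by Pearl's standard argument) implies the factorization
\[
\Pr[v_1,\ldots,v_n,l_1,\ldots,l_{|\lat|}] \;=\; \prod_{x \in V\cup\lat} \Pr[x \mid \Pa(x)].
\]
I would then show that any joint distribution obeying this factorization must satisfy the global Markov property with respect to $\GGG$, i.e., $d$-separation implies conditional independence. My approach would be the ``moralization plus separation'' proof, which I find cleanest of the standard routes.

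The steps are as follows. First, I would reduce to the ancestral subgraph: let $\GGG_A$ be the subgraph of $\GGG$ induced by $\{v_i,v_j\}\cup Z \cup \Anc(\{v_i,v_j\}\cup Z)$. A short calculation using the factorization shows the marginal on the nodes of $\GGG_A$ still factorizes according to $\GGG_A$, because the only factors involving nodes outside $\GGG_A$ can be marginalized out one leaf at a time without affecting the remaining factors. Second, I would build the \emph{moral graph} $\GGG_A^m$ by adding an undirected edge between every pair of nodes sharing a common child in $\GGG_A$, and then dropping edge orientations. The factorization over $\GGG_A$ implies the joint distribution factorizes over the cliques of $\GGG_A^m$, so by the Hammersley--Clifford style argument, ordinary undirected graph separation in $\GGG_A^m$ by $Z$ implies conditional independence given $Z$. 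Third, I would verify that $d$-separation by $Z$ in $\GGG$ implies separation by $Z$ in $\GGG_A^m$: any undirected path in $\GGG_A^m$ between $v_i$ and $v_j$ avoiding $Z$ would lift to a path in $\GGG$ that is \emph{not} blocked by $Z$ under Rules~1 and~2. This is where the two rules are used: Rule~1 handles chains and forks with a middle node in $Z$, and Rule~2 ensures that any collider on the lifted path either lies in $Z$ or has a descendant in $Z$ (both of which are required for the collider's parents to have been ``married'' in moralization restricted to the ancestral subgraph).

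The main obstacle is the last step, namely the careful correspondence between paths in $\GGG_A^m$ and $d$-connecting paths in $\GGG$. One must argue that (i) a moral edge between two parents $a,b$ of a collider $c$ in $\GGG_A$ really does witness that $c\in Z$ or some descendant of $c$ lies in $Z$, since the ancestral closure was taken with respect to $\{v_i,v_j\}\cup Z$; and (ii) conditioning on a non-collider on the path correctly corresponds to that node being a separator in $\GGG_A^m$. Once this correspondence is in place, the conclusion $v_i \indep v_j \mid Z$ follows immediately from the undirected separation lemma for factorizations over clique potentials.
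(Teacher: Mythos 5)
The paper does not prove this lemma at all: it is the classical soundness of $d$-separation (the global Markov property), imported as a black box with a citation to Pearl. So there is nothing in the paper to compare your argument against; what you have written is the standard textbook proof architecture (Lauritzen--Dawid--Larsen--Leimer): restrict to the ancestral closure of $\{v_i,v_j\}\cup Z$, moralize, observe that the DAG factorization is a factorization over the cliques of the moral graph, and invoke the (easy direction of the) undirected separation-implies-independence lemma. Steps one and two of your plan are correct as described; note only that the direction you need is ``factorization implies global Markov,'' which holds without any positivity assumption --- Hammersley--Clifford is the converse and is not needed here.

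There is, however, a genuine gap in your step three, and it sits exactly where you placed the ``main obstacle.'' Your claim (i) --- that a moral edge between parents $a,b$ of a common child $c$ in $\GGG_A$ witnesses that $c\in Z$ or $c$ has a descendant in $Z$ --- is false as stated. Moralization in the ancestral subgraph marries the parents of \emph{every} node of $\GGG_A$, and membership in $\GGG_A$ only guarantees that $c$ is an ancestor of $\{v_i,v_j\}\cup Z$; it may be an ancestor of $v_i$ or $v_j$ alone. In that case the collider at $c$ on the lifted path is \emph{not} activated by conditioning on $Z$, so the lifted path can be blocked by Rule~2 even though the moral path avoids $Z$. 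The standard repair is a rerouting (or shortest-path/induction) argument: if a collider $c$ on the lifted walk is an ancestor of, say, $v_i$ but not of $Z$, follow the directed path from $c$ down to $v_i$ (it avoids $Z$, else $c$ would be an ancestor of $Z$) and splice it in, turning $c$ into a non-collider and reducing the number of bad colliders; one then argues the resulting $d$-connecting walk can be converted to a $d$-connecting path. Without this rerouting step the correspondence between moral separation and $d$-separation does not go through, so you should either add it explicitly or switch to a direct induction on path length over active trails.
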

For the path $\pi = v_i \cdots \rightarrow v_k \leftarrow \cdots v_j$ between $v_i$ and $v_j$, $v_k$ is called a {\em collider} as there are two arrows pointing towards it. We say that $v_k$ is a collider for the pair $v_i$ and $v_j$, if there exists a path between $v_i$ and $v_j$ for which $v_k$ is a collider. As shown by Rule 2, colliders play an important role in $d$-separation. We give a more restrictive definition for colliders that we will rely on henceforth. 
\begin{figure}[h]
\centering
\includegraphics[scale=0.6]{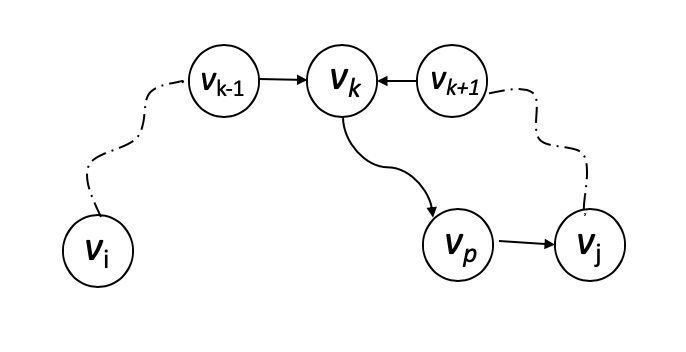}
\caption{$v_k$ is a $p$-collider for $v_i,v_j$ as it has a path to $v_p$, a parent of $v_j$.}
\label{fig:pcol}
\end{figure}
\begin{definition}[$p$-\textbf{colliders}] \label{def:pcollider}
Given a causal graph $\GGG(V \cup \lat, E \cup E_L)$. Consider $ v_i, v_j \in V$ and $v_k \in V$. We say $v_k$ is a $p$-collider for the pair $v_i$ and $v_j$, if there exists a path $v_i \cdots \rightarrow v_k \leftarrow \cdots v_j$ in $\GGG$ and either $v_k \in \Pa(v_i) \cup \Pa(v_j)$ or has at least one descendant in $\Pa(v_i) \cup \Pa(v_j)$. Let $P_{ij} \subset V$ denote all the $p$-colliders between $v_i$ and $v_j$.
\end{definition}
\noindent {Intervening on p-colliders essentially breaks down all the {\em primitive inducing paths}. Primitive inducing paths are those whose endpoints cannot be separated by any conditioning~\citep{richardson2002ancestral}. Now, between every pair of observable variables, we can define a set of $p$-colliders as above. Computing $P_{ij}$ for the pair of variables $v_i$ and $v_j$ explicitly requires the knowledge of $\GGG$, however as we show below we can use randomization to overcome this issue.} The following parameterization of a causal graph will be useful in our discussions.
\begin{definition} [$\tau$-causal graph]
A causal graph $\GGG(V \cup \lat, \Edg)$ is a $\tau$-causal graph if for every pair of nodes in $V$ the number of $p$-colliders is at most $\tau$, i.e., $v_i,v_j \in V$ ($i \neq j$), we have $|P_{ij}| \leq \tau$.
\end{definition}
Note that every causal graph is at most $n-2$-causal. In practice, we expect $\tau$ to be significantly smaller. Given a causal graph $\GGG$, it is easy to determine the minimum values of $\tau$ for which it is $\tau$-causal, as checking for $p$-colliders is easy. Our algorithm recovers $\GGG$ with number of interventions that grow as a function of $\tau$ and $n$.  
\vspace{2mm}

\pparagraph{Outline of our Approach.} Let $\GGG$ be a $\tau$-causal graph. 
As in~\citep{neurips17}, we break our approach into multiple steps. Firstly, we construct the ancestral graph $\Anc(G)$ using the strongly separating set system (Definition~\ref{def:ssss}) idea detailed in Section~\ref{sec:prelim}. For example, a strongly
separating set system can be constructed with $m =
2 \log n$ interventions by using the binary encoding of
the numbers $1,\cdots,n$~\citep{neurips17}.
%\raghav{should we provide $m = 2\log n$ idea here?} 
After that the algorithm has two steps. In the first step, we recover the observable graph $G$ from $\Anc(G)$. In the next step, after obtaining the observable graph, we identify all the latents $\lat$ between the variables in $V$ to construct $\GGG$. In both these steps, an underlying idea is to construct intervention sets  with the aim of making sure that all the $p$-colliders between every pair of nodes is included in at least one of the intervention sets.  As we do not know the graph $\GGG$, we devise randomized strategies to hit all the $p$-colliders, whilst ensuring that we do not create a lot of interventions.

A point to note is that, we design the algorithms to achieve an overall success probability of $1-O(1/n^2)$, however, the success probability can be boosted to any $1-O(1/n^c)$ for any constant $c$, by just adjusting the constant factors (see for example the proof of Lemma~\ref{lem:graphwhp}). Also for simplicity of discussion, we assume that we know $\tau$. However as we discuss in Appendix~\ref{app:unweighted} this assumption can be easily removed with an additional $O(\log \tau)$ factor.

\subsection{Recovering the Observable Graph}

$\Anc(G)$ encodes all the ancestral relations on observable variables $V$ of the causal graph $G$. To recover $G$ from $\Anc(G)$, we want to differentiate whether $v_i \rightarrow v_j$ represents an edge in $G$ or a directed path going through other nodes in $G$. We use the following observation, if $v_i$ is a parent of $v_j$, the path $v_i \rightarrow v_j$ is never blocked by any conditioning set $Z \subseteq V \setminus \{ v_i \}$. If $v_i \not \in \Pa(v_j)$, then we show that we can provide a conditioning set $Z$ in some interventional distribution $S$ such that $v_i \indep v_j \mid Z, \doo(S)$. For every pair of variables that have an edge in $\Anc(G)$, we design conditioning sets in Algorithm~\ref{alg:recoverG} that blocks all the paths between them.

Let $v_i \in \Anc(v_j) \setminus \Pa(v_j)$. We argue that conditioning on  $\Anc(v_j) \setminus \{ v_i \}$ in $\doo(v_i \cup P_{ij})$ blocks all the paths from $v_i$ to $v_j$. The first simple observation, from $d$-separation is that if we take a path that has no $p$-colliders between $v_i$ to $v_j$ (a $p$-collider free path) then it is blocked by conditioning on $\Anc(v_j) \setminus \{ v_i \}$ i.e., $v_i \indep v_j \mid \Anc(v_j) \setminus \{ v_i \}$.

The idea then will be to intervene on colliders $P_{ij}$ to remove these dependencies between $v_i$ and $v_j$ as shown by the following lemma.
\begin{lemma}
\label{lem:indep}
Let $v_i \in Anc(v_j)$. $v_i \indep v_j \mid \doo(v_i \cup P_{ij}), \Anc(v_j) \setminus \{ v_i \}$ iff $v_i \not \in \Pa(v_j)$.
\end{lemma}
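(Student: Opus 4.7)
\textbf{Proof Plan for Lemma~\ref{lem:indep}.}

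The plan is to prove the two directions of the ``iff'' separately, using $d$-separation in the mutilated graph $\GGG'$ obtained from $\GGG$ by removing all incoming edges to the intervened nodes $\{v_i\}\cup P_{ij}$. The easy direction is the contrapositive: if $v_i\in\Pa(v_j)$, then the edge $v_i\to v_j$ survives in $\GGG'$ (we only cut edges \emph{into} $v_i$ and $P_{ij}$, not out of them), and since $v_j$ itself is not in the conditioning set $Z:=\Anc(v_j)\setminus\{v_i\}$, this length-one path cannot be blocked; hence $v_i\not\indep v_j\mid Z,\doo(v_i\cup P_{ij})$.

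For the main direction, assume $v_i\notin\Pa(v_j)$. I must show that every undirected path $\pi$ from $v_i$ to $v_j$ in $\GGG'$ is blocked by $Z$. The driving observation is that in $\GGG'$, the nodes $v_i$ and every $v_c\in P_{ij}$ have \emph{no incoming edges}, so neither $v_i$ (which is an endpoint anyway) nor any $v_c\in P_{ij}$ can be a collider on $\pi$. I then split on whether $\pi$ contains a collider in $\GGG'$.

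In the \emph{no-collider case}, since $v_i$ has no incoming edges, $\pi$ must begin $v_i\to v_{a_1}$; inductively, ``no collider'' forces the out-edge at each $v_{a_k}$ to continue pointing forward, so $\pi$ is the directed path $v_i\to v_{a_1}\to\cdots\to v_j$. Because $v_i\notin\Pa(v_j)$, this path has length at least two, so an internal observable node $v_{a_1}$ exists (latents cannot lie on a directed path since they have no incoming edges by assumption) and $v_{a_1}\in\Anc(v_j)\setminus\{v_i\}=Z$, blocking $\pi$ by Rule~1.

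In the \emph{collider case}, let $v_c$ be any collider of $\pi$ in $\GGG'$. The main technical step, which is also the main obstacle, is to show that $v_c$ must be blocked by Rule~2, i.e.\ that neither $v_c$ nor any descendant of $v_c$ lies in $Z$. Assume for contradiction that $v_c\in Z$ or some descendant of $v_c$ lies in $Z$; either way, transitivity of ancestry gives $v_c\in\Anc(v_j)$, so there is a directed path $v_c\to\cdots\to v_p\to v_j$ in $\GGG$ with $v_p\in\Pa(v_j)$, and $v_p$ is a descendant of (or equal to) $v_c$. Combined with the fact that $v_c$ is a collider between $v_i$ and $v_j$ on $\pi$ in $\GGG$ (since $\GGG'\subseteq\GGG$), this exhibits $v_c$ as a $p$-collider by Definition~\ref{def:pcollider}, so $v_c\in P_{ij}$. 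But then $v_c$ has no incoming edges in $\GGG'$, contradicting its being a collider on $\pi$ in $\GGG'$. Hence every collider on $\pi$ is blocked, $\pi$ is blocked, and $v_i\indep v_j\mid Z,\doo(v_i\cup P_{ij})$. The delicate point is keeping ancestry/descendant relations indexed to the original graph $\GGG$ (they underlie both $Z$ and the $p$-collider definition) while reasoning about path structure in the mutilated graph $\GGG'$, and handling the fact that latents, having no incoming edges, can only appear on $\pi$ as forks and thus play no role in either case of the analysis.
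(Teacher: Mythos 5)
Your proof is correct and follows essentially the same route as the paper's: the converse direction uses the surviving edge $v_i \to v_j$ (plus faithfulness), and the forward direction blocks every path by observing that collider-free paths must be directed from $v_i$ and hence hit a node of $\Anc(v_j)\setminus\{v_i\}$, while any collider whose descendant (or itself) lies in the conditioning set would be a $p$-collider and is therefore intervened on. The only difference is organizational: you argue with two cases directly in the mutilated graph (folding the paper's ``contains a $p$-collider'' case into a contradiction), whereas the paper uses three cases on paths, but the underlying ideas coincide.
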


From Lemma~\ref{lem:indep}, we can recover the edges of the observable graph $G$ provided we know the $p$-colliders between every pair of nodes. However, since the set of $p$-colliders is unknown without the knowledge of $\GGG$, we construct multiple intervention sets by independently sampling every variable with some probability. This ensures that there exists an intervention set $S$ such that $\{ v_i \} \cup P_{ij} \subseteq S$ and $v_j \not \in S$ with high probability.

Formally, let $A_t \subseteq V$ for $t \in \{1, 2, \cdots, 72\tau'\log n\}$ be constructed by including  every variable $v_i \in V$ with probability $1-1/{\tau'}$ where $\tau' = \max\{\tau,2\}$. Let $\AAA_\tau = \{A_1,\cdots,A_{72\tau' \log n}\}$ be the collection of the set $A_t$'s. Algorithm~\ref{alg:recoverG} uses the interventions in $\AAA_\tau$. 

\begin{algorithm}
\begin{small}
\caption{\RecoverG($\Anc(G), \AAA_\tau$)}
\label{alg:recoverG}
    \begin{algorithmic}[1]
     \State  $E = \phi$
	\For{$v_i \rightarrow v_j$ in $\Anc(G)$}
		 \State  Let $\mathcal{A}_{ij} = \{ A \mid A \in \AAA_\tau$ such that $v_i \in A, v_j \not \in A \}$
		\If{$\forall$ $A \in \mathcal{A}_{ij}$,$v_i \notindep v_j \mid \Anc(v_j) \setminus \{ v_i \}, \doo(A)$  }
			 \State  $E = E \cup \{ (v_i, v_j) \}$
		\EndIf
	\EndFor
     \State  \textbf{return} $E$
\end{algorithmic}
\end{small}
\end{algorithm}

\begin{proposition}
\label{prop:obs}
Let $\GGG(V \cup L, E \cup E_L)$ be a $\tau$-causal graph with observable graph $G(V,E)$. There exists a procedure to recover the observable graph using $O(\tau \log n + \log n)$ many interventions with  probability at least $1-1/n^2$.
\end{proposition}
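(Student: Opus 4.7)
The plan is to assemble the result from the two ingredients already developed. I would first recover $\Anc(G)$ using a strongly separating set system of size $O(\log n)$ as described in Section \ref{sec:prelim}, and then run Algorithm \RecoverG with the random family $\AAA_\tau$ of $72\tau' \log n$ subsets on $\Anc(G)$, so that the total number of interventions is $O(\tau \log n + \log n)$, matching the proposition. Correctness reduces to showing that, with probability at least $1 - 1/n^2$, the algorithm correctly decides for every ordered pair $v_i \rightarrow v_j$ in $\Anc(G)$ whether $v_i$ is a parent of $v_j$ in $G$.

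Call a set $A \in \AAA_\tau$ \emph{good} for the pair $(v_i,v_j)$ if $\{v_i\} \cup P_{ij} \subseteq A$ and $v_j \notin A$. By Lemma \ref{lem:indep}, together with the elementary observation that intervening on a superset of $\{v_i\} \cup P_{ij}$ only cuts additional incoming edges and therefore preserves the conditional independence, whenever $\AAA_\tau$ contains a good $A$ the CI-test $v_i \indep v_j \mid \Anc(v_j)\setminus\{v_i\}, \doo(A)$ performed by \RecoverG returns independence iff $v_i \notin \Pa(v_j)$. If $v_i \in \Pa(v_j)$, the direct edge $v_i \rightarrow v_j$ is never blocked by any conditioning set nor by any intervention that excludes $v_j$, so the test returns dependence for every $A \in \mathcal{A}_{ij}$ and the algorithm correctly includes the edge. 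Hence the only source of error is the absence of any good $A$ for some pair with $v_i \in \Anc(v_j) \setminus \Pa(v_j)$.

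The probabilistic analysis then bounds this error. Since $|P_{ij}| \leq \tau \leq \tau'$ and each variable is included in $A_t$ independently with probability $1 - 1/\tau'$, a single $A_t$ is good with probability at least $(1-1/\tau')^{\tau' + 1} \cdot (1/\tau') = \Omega(1/\tau')$, because $(1-1/\tau')^{\tau'+1}$ is bounded below by an absolute constant for $\tau' \geq 2$. The probability that none of the $72\tau' \log n$ sets in $\AAA_\tau$ is good for a fixed pair is therefore at most $(1 - c/\tau')^{72\tau' \log n} \leq e^{-72 c \log n} \leq 1/n^4$ for a suitable absolute constant $c$, with the constant $72$ chosen large enough. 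A union bound over the at most $n^2$ ordered pairs appearing as edges in $\Anc(G)$ yields total failure probability at most $1/n^2$, as claimed.

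The main subtlety I would spend care on is the extension of Lemma \ref{lem:indep} from the exact intervention $\doo(\{v_i\} \cup P_{ij})$ to any superset excluding $v_j$: this needs a short $d$-separation argument showing that adding more intervened variables cannot unblock a path from $v_i$ to $v_j$, since interventions sever incoming edges at the intervened nodes rather than creating new $d$-connections. Once that is in place, the probabilistic bound above closes the argument and gives the proposition directly.
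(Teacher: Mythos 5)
Your proposal is correct and follows essentially the same route as the paper: recover $\Anc(G)$ with a $2\log n$-intervention strongly separating set system, then show each random set in $\AAA_\tau$ is ``good'' for a fixed pair with probability $\Omega(1/\tau')$, amplify over $72\tau'\log n$ sets to failure probability $1/n^4$, and union bound over $n^2$ pairs. The one place you go beyond the paper is in flagging that Lemma~\ref{lem:indep} must be applied with a superset of $\{v_i\}\cup P_{ij}$ rather than the exact set; the paper uses this implicitly, and your observation that extra interventions only delete edges (hence can only shrink descendant sets and cannot unblock a $d$-separated path) is the right justification.
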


\pparagraph{Lower Bound.} Complementing the above result, the following proposition gives a lower bound on the number of interventions by providing an instance of a $O(n)$-causal graph such that any non-adaptive algorithm requires $\Omega(n)$ interventions for recovering it. The lower bound comes because of the fact that the algorithm cannot rule out the possibility of latent.

\vspace{2mm}
\begin{proposition}
\label{prop:lb}
There exists a graph causal $\GGG(V \cup \lat, E \cup E_L)$ such that every non-adaptive algorithm requires $\Omega(n)$ many interventions to recover even the observable graph $G(V,E)$ of $\GGG$. 
\end{proposition}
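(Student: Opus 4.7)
The plan is to exhibit a single causal graph $\GGG^*$ on $n$ observable nodes together with an adversarial family of alternatives $\{\GGG^*_1, \ldots, \GGG^*_m\}$ with $m = \Omega(n)$ such that any non-adaptive algorithm that correctly recovers the observable graph $G$ must distinguish $\GGG^*$ from each $\GGG^*_i$. Concretely, I would place $\Omega(n)$ constant-size ``gadgets'' on the $n$ observable nodes, where each gadget admits two configurations $A$ and $B$ that are observationally equivalent (same CI-oracle answers under $S = \emptyset$) but differ in whether a single designated edge of $G$ is present. The graph $\GGG^*$ puts every gadget in configuration $A$, while $\GGG^*_i$ switches only gadget $i$ to configuration $B$. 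The heart of the argument is a quantitative claim that distinguishing configurations at gadget $i$ requires the intervention $S$ to match a gadget-specific ``unlocking pattern'' $\pi_i$ on the node set $V_i$ of the gadget, and that these patterns are mutually incompatible in the sense that any single $S$ unlocks at most $O(1)$ of them; a pigeonhole argument over the $\Omega(n)$ gadgets then forces $\Omega(n)$ interventions.

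For the gadget itself, I would use a small constant number of observable nodes with a shared latent plus candidate observable edges, chosen so that switching between $A$ and $B$ is detectable by a CI-query under $\doo(S)$ only when $S \cap V_i = \pi_i$. The gadget must be strictly richer than a two-node pair with a single latent, because for such pairs the distinguishing condition reduces to separating the two endpoints, and even a disjoint union of pair gadgets collapses to a single alternating-pattern intervention via a strongly separating set system of size $O(\log n)$. A natural candidate is a gadget on three or four observable nodes where the latent is shared by multiple nodes, so that the edge-versus-latent ambiguity can only be resolved by both including one specific node and excluding two or more others. Arranging the $\Omega(n)$ gadgets along the nodes with strategic overlap between adjacent ones then creates contradictions between $\pi_i$ and $\pi_j$ on the shared nodes, which yields the mutual-conflict property needed for the packing bound. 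Verification of both the unlocking claim and the mutual-conflict claim reduces to a d-separation case analysis inside the gadget, in the spirit of Lemma~\ref{lem:indep}.

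The main obstacle is the simultaneous satisfaction of four properties in the gadget design: (a) observational equivalence of $A$ and $B$; (b) interventional equivalence of $A$ and $B$ under every off-pattern intervention, so that no CI-test accidentally distinguishes them; (c) genuine distinguishability under the on-pattern intervention $\pi_i$; and (d) mutual conflict of the patterns across overlapping gadgets. Property (b) is the most delicate: one must rule out every $(S, Z)$ pair under which the CI-oracle could expose the edge-versus-latent difference through the rest of $\GGG^*$, which requires the latent structure inside the gadget to be persistent enough to ``mask'' the missing edge under any $S$ other than $\pi_i$, while simultaneously yielding cleanly under $\pi_i$. Resolving this tension -- constructing a latent that is robust to all off-pattern interventions yet still breakable by $\pi_i$ -- and showing that the resulting $\pi_i$'s are rigid enough that an overlap chain produces the $O(1)$-per-intervention packing is where I expect the bulk of the technical work to lie.
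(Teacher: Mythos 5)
Your plan has a genuine gap, and it is quantitative rather than a matter of unfinished case analysis: the packing step cannot work with constant-size gadgets. If each gadget $i$ lives on a node set $V_i$ of constant size $c$ and its ``unlocking pattern'' $\pi_i$ is a constraint only on those $c$ nodes (include some, exclude the others), then the set of interventions $S$ matching $\pi_i$ is a subcube of $\{0,1\}^n$ of density at least $2^{-c}$. Hence a uniformly random intervention matches each pattern with probability at least $2^{-c}$, so some single intervention matches at least $m/2^{c}$ of your $m=\Omega(n)$ patterns; equivalently, at most $2^{c}=O(1)$ patterns of constant width can be pairwise conflicting, no matter how cleverly you overlap adjacent gadgets. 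Worse, an $(n,c)$-universal family of interventions (of size $O(2^{c}\, c \log n)$) realizes every width-$c$ pattern on every $c$-subset of nodes, so under your own ``match $\pi_i$ iff distinguishable'' characterization, $O(\log n)$ non-adaptive interventions would always suffice against any constant-size-gadget instance. So the mutual-conflict property (d) you flag as delicate is not merely delicate -- it is impossible in the regime you set up, and the whole approach caps out at an $O(\log n)$-type bound, the same collapse you correctly identified for disjoint two-node pairs.

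The missing idea is that the hard instance must make the unlocking requirement for a \emph{single} edge have support of linear size. The paper's construction does exactly this with one dense graph rather than many local gadgets: take $G$ to be the complete DAG on the ordering $v_1,\dots,v_n$, and for each $i\geq n/4$ let the alternative $G_i$ delete the single edge $(v_1,v_i)$ (which is indistinguishable observationally because a latent $v_1 \leftarrow l_{1i} \rightarrow v_i$ could explain the dependence). A $d$-separation argument shows that any intervention $H$ that misses even one $v_j$ with $3\leq j\leq i-1$ (and does not contain $v_i$) leaves CI-tests unable to tell $G$ from $G_i$: conditioning on $v_j$ opens a collider path, and not conditioning leaves the path $v_1,v_j,v_i$ unblocked. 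So certifying the edge $(v_1,v_i)$ forces an intervention containing $\{v_1,v_3,\dots,v_{i-1}\}$ but not $v_i$ -- a pattern of width $\Theta(i)$ -- and these requirements for different $i$ are pairwise incompatible precisely because the set for $i$ must exclude $v_i$ while the set for any $i'>i$ must include it. That nested, linear-width conflict structure is what yields $\Omega(n)$ distinct interventions, and it is unobtainable from constant-size local configurations; if you want to salvage your framing, the gadget supports (and hence the patterns $\pi_i$) must be allowed to grow to size $\Theta(n)$, at which point you are essentially reconstructing the paper's dense instance.
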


%  \vspace*{-2ex}
\subsection{Detecting the Latents}
%\vspace*{-1ex}

We now describe algorithms to identify latents that effect the observable variables $V$ to learn the entire causal graph $\GGG(V \cup \lat, E \cup E_L)$. We start from the observable graph $G(V, E)$ constructed in the previous section.  Our goal will be to use the fact that $\GGG$ is a $\tau$-causal graph, which means that $|P_{ij}| \leq \tau$ for every pair $v_i, v_j$. Since we assumed that each latent variable (in $\lat$) effects at most two observable variables (in $V$), we can split the analysis into two cases: a) pairs of nodes in $G$ without an edge (non-adjacent nodes) and b) pairs of nodes in $G$ with a direct edge (adjacent). In Algorithm~\LatentsNEdges (Appendix~\ref{app:unweighted}), we describe the algorithm for identifying the latents effecting pairs of non-adjacent nodes. The idea is to block the paths by conditioning on parents and intervening on $p$-colliders. We use the observation that for any non-adjacent pair $v_i, v_j$ an intervention on the set $P_{ij}$ and conditioning on the parents of $v_i$ and $v_j$ will make $v_i$ and $v_j$ independent, unless there is a latent between them.

\begin{proposition} \label{prop:1}
Let $\GGG(V \cup L, E \cup E_L)$ be a $\tau$-causal graph with observable graph $G(V,E)$.
Algorithm~\LatentsNEdges with $O(\tau^2 \log n + \log n)$ interventions recovers all latents effecting pairs of non-adjacent nodes in the observable graph $G$ with probability at least $1 - 1/n^2$.
\end{proposition}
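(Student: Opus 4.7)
The plan is to establish Proposition~\ref{prop:1} in two steps: first, prove a structural lemma characterizing when a conditional-independence test detects a latent on a non-adjacent pair; second, show that a random collection of $O(\tau^2 \log n)$ intervention sets hits the required collider set for every non-adjacent pair with high probability. The lemma I would prove states: for any non-adjacent pair $v_i, v_j \in V$ and any intervention set $S \subseteq V$ with $P_{ij} \subseteq S$ and $v_i, v_j \notin S$,
\[
v_i \indep v_j \mid (\Pa(v_i) \cup \Pa(v_j)) \setminus S,\ \doo(S)
\]
if and only if there is no latent $l \in \lat$ with $l \to v_i$ and $l \to v_j$.

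The proof of this lemma is a $d$-separation case analysis on undirected paths between $v_i$ and $v_j$ in $\GGG$ in the post-intervention graph. Any collider-free path in this graph must have a parent of $v_i$ or of $v_j$ as its first or last interior node (since the arrows along a collider-free path cannot reverse at a non-spreading node), and this parent is either in the conditioning set (blocking by Rule~1) or lies in $S$, in which case its incoming edges are severed and the path does not survive the intervention. A path through a collider $v_k$ that is a $p$-collider has $v_k \in S$, and $\doo(S)$ deletes the incoming edges at $v_k$, destroying the collider substructure. A path through a collider $v_k$ that is not a $p$-collider is blocked by Rule~2: by definition $v_k \notin \Pa(v_i) \cup \Pa(v_j)$ and has no descendant there, so the collider is not opened by the conditioning set. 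The only paths that remain open are those of the form $v_i \leftarrow l \to v_j$ for a latent $l$ common to $v_i$ and $v_j$; these are unblockable since $l$ is unobserved. This proves the stated ``iff.''

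Next I would analyze the randomized construction. Algorithm~\LatentsNEdges~samples $T = \Theta((\tau')^2 \log n)$ intervention sets $B_1, \ldots, B_T$ with $\tau' = \max\{\tau, 2\}$, including each variable independently in each $B_t$ with probability $1 - 1/\tau'$. For a fixed non-adjacent pair $v_i, v_j$, the events $\{v_i, v_j \notin B_t\}$ and $\{P_{ij} \subseteq B_t\}$ are over disjoint sets of variables, hence independent, and the intersection holds with probability at least
\[
(1/\tau')^2 \cdot (1 - 1/\tau')^{\tau} = \Omega(1/(\tau')^2).
\]
With the constant in $T$ chosen large enough, a Chernoff bound yields at least one such good $B_t$ for a fixed pair with probability $1 - O(1/n^4)$; a union bound over the at most $\binom{n}{2}$ non-adjacent pairs gives overall success probability at least $1 - 1/n^2$. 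For each non-adjacent pair the algorithm runs the CI-test certified by the structural lemma on its good $B_t$ and declares a latent whenever dependence is observed. The total number of interventions is the $O(\tau^2 \log n)$ sets $B_t$ plus the $O(\log n)$ strongly separating interventions used earlier to obtain $\Anc(G)$, giving the claimed $O(\tau^2 \log n + \log n)$ bound.

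The main obstacle is the structural lemma, specifically verifying that the refined notion of $p$-collider (rather than a plain collider) is exactly what is needed. A subtle point is that a single path may contain multiple colliders and multiple non-collider interior nodes, so one must check that the blocking conditions from Rules~1 and~2 can be applied simultaneously along a single path without interfering with one another, and that no non-$p$-collider is inadvertently opened by a descendant of some intervened $p$-collider. The requirement in Definition~\ref{def:pcollider} that a $p$-collider either lies in, or has a descendant in, $\Pa(v_i) \cup \Pa(v_j)$ is exactly tuned to rule this out, so pinning down the implication in both directions is where the argument has to be most careful.
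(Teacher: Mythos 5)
Your proposal is correct and takes essentially the same route as the paper's proof: a $d$-separation lemma showing that intervening on a set containing $P_{ij}$ (and avoiding $v_i,v_j$) while conditioning on the parents of $v_i,v_j$ separates the pair iff no latent joins them (the paper's Lemma~\ref{lem:blocking}), followed by the identical random-set hitting argument with per-set success probability $\Omega(1/\tau'^2)$, $O(\tau'^2\log n)$ repetitions, and a union bound over the $O(n^2)$ pairs. One small caution: since the algorithm cannot identify the ``good'' set, the decision rule must be to declare a latent only when dependence is observed for \emph{every} candidate set $D$ with $v_i,v_j\notin D$ (as Algorithm~\LatentsNEdges does), which your lemma does justify because the path through the latent remains open under every such intervention and conditioning.
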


\pparagraph{Latents Affecting Adjacent Nodes in $G$.}  Suppose we have an edge $v_i \rightarrow v_j$ in $G(V, E)$ and we want to detect whether there exists a latent $l_{ij}$ that effects both of them. Here, we cannot block the edge path $v_i \rightarrow v_j$ by conditioning on any $Z \subseteq V$ in any given interventional distribution $\doo(S)$ where $S$ does not contain $v_j$. However, intervening on $v_j$ also disconnects $v_j$ from its latent parent. Therefore, CI-tests are not helpful. Hence, we make use of another test called \emph{do-see} test~\citep{neurips17}, that compares two probability distributions. We assume there exists an oracle that answers whether two distributions are the same or not. This is a well-studied problem with sublinear (in domain size) bound on the sample size needed for implementing this oracle~\citep{chan2014optimal}.

\begin{algorithm}[!t]
\begin{small}
\caption{LatentsWEdges($\GGG(V \cup \lat, E \cup E_L), \BBB_\tau$)}
\label{alg:latentWE}
    \begin{algorithmic}[1]
     \State  Consider the edge $ v_i \rightarrow v_j \in {E}$.
     \State  Let $\mathcal{B}_{ij} = \{ B\setminus \{v_i\} \mid B \in \BBB_\tau \text{~s.t.~} v_i \in B, v_j \not \in B \}$
    % \If{$\Pr[ v_j \mid v_i,  \Pa(v_j), \doo(\Pa(v_i) \cup B)] \neq \Pr[ v_j \mid \Pa(v_j), \doo(\{v_i\} \cup \Pa(v_i) \cup B)]$ for every $B \in \mathcal{B}_{ij}$}
    \If{$\forall B\in \mathcal{B}_{ij}, \Pr[ v_j \mid v_i,  \Pa(v_j), \doo(\Pa(v_i) \cup B)]  \neq \Pr[ v_j \mid \Pa(v_j), \doo(\{v_i\} \cup \Pa(v_i) \cup B)]$}
     \State  $\lat \leftarrow \lat \cup {l_{ij}}, E_L \leftarrow E_L \cup \{ (l_{ij}, v_i), (l_{ij}, v_j) \}$
    \EndIf
     \State  \textbf{return} $\GGG(V \cup \lat, E \cup E_L)$    
\end{algorithmic}
\end{small}
\end{algorithm}

\begin{assumption}[Distribution Testing (DT)-Oracle] \label{assum:DToracle}
Given any $v_i, v_j \in V$ and $Z, S \subseteq V$ tests whether two distributions $\Pr[v_j \mid v_i,  Z, \doo(S)]$ and $\Pr[v_j \mid Z, \doo(S \cup \{v_i\})]$ are identical or not.
\end{assumption}

The intuition of the do-see test is as follows: if $v_i$ and $v_j$ are the only two nodes in the graph $G$ with $v_i \rightarrow v_j$, then, $\Pr[v_j \mid v_i] = \Pr[v_j \mid \doo(v_i)]$ \text{ iff there exists no latent that effects both of them}. This follows from the {\em conditional invariance} principle~\citep{bareinboim2012local} (or page 24, property 2 in~\citep{pearl}). Therefore, the presence or absence of latents can be established by invoking a DT-oracle.

As we seek to minimize the number of interventions, our goal is to create intervention sets that contain $p$-colliders between every pair of variables that share an edge in $G$. However, in Lemmas~\ref{lem:first},~\ref{lem:second} we argue that it is not sufficient to consider interventions with only $p$-colliders. We must also intervene on $Pa(v_i)$ to detect a latent between $v_i \rightarrow v_j$. The main idea behind~\LatentsWEdges~is captured by the following two lemmas. 

\begin{lemma} [No Latent Case] \label{lem:first}
Suppose $v_i \rightarrow v_j \in G$ and $v_i, v_j \not \in B$, and $P_{ij} \subseteq B$ then, $\Pr[v_j \mid v_i, \Pa(v_j), \doo(\Pa(v_i) \cup B)] = \Pr[v_j \mid  \Pa(v_j), \doo(\{v_i\} \cup \Pa(v_i) \cup B)]$ if there is no latent $l_{ij}$ with $ v_i \leftarrow l_{ij} \rightarrow v_j$.
\end{lemma}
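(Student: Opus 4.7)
The plan is to translate the claimed distributional equality into a d-separation statement via Rule 2 of Pearl's do-calculus (action--observation exchange), and then verify that d-separation in the appropriate mutilated graph using the fact that $\Pa(v_i)\cup B$ is intervened on and $\Pa(v_j)$ is conditioned on. Concretely, Rule 2 reduces the equality to showing $v_i \perp v_j \mid \Pa(v_j)\cup \Pa(v_i)\cup B$ in the graph $\GGG^{*}$ obtained from $\GGG$ by deleting every edge entering $\Pa(v_i)\cup B$ and every edge leaving $v_i$. In $\GGG^{*}$ the direct edge $v_i\to v_j$ is gone, so every $v_i$--$v_j$ path is a back-door path (starts with an arrowhead at $v_i$).

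I would then analyze an arbitrary $v_i$--$v_j$ path $\pi$ in $\GGG^{*}$ in two cases. First, if $\pi$ contains an internal node $w\in \Pa(v_i)\cup B$, then $w$ is a root of $\GGG^{*}$ (its incoming edges were deleted), so $w$ must be a non-collider on $\pi$; since $w$ lies in the conditioning set, it blocks $\pi$ by the standard Rule 1 of d-separation. Second, suppose $\pi$ avoids $\Pa(v_i)\cup B$ except at the endpoints. Because $\pi$ avoids $B\supseteq P_{ij}$, no collider on $\pi$ is a $p$-collider, so by Definition~\ref{def:pcollider} neither such a collider nor any of its $\GGG$-descendants lies in $\Pa(v_i)\cup \Pa(v_j)$; moreover its descendants in $\GGG^{*}$ (a subset of those in $\GGG$) cannot reach $\Pa(v_i)\cup B$ either, since the latter are roots of $\GGG^{*}$. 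Thus every collider on $\pi$ is deactivated by the conditioning set $\Pa(v_j)\cup \Pa(v_i)\cup B$ and blocks $\pi$. If $\pi$ is collider-free, I inspect the last edge into $v_j$: if it is $v_p\to v_j$ with $v_p\in \Pa(v_j)\cap V$, then $v_p$ is a non-collider in the conditioning set and blocks $\pi$; if it is $l'\to v_j$ for a latent $l'$, the other child of $l'$ cannot be $v_i$ (this would contradict the no-$l_{ij}$ hypothesis), so $l'$ is a fork on $\pi$ and the path must continue through some observable $v_m\neq v_i$, letting me iterate until either a non-collider in $\Pa(v_j)$ blocks $\pi$ or a collider is forced, contradicting the collider-free assumption.

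Combining the two cases, every back-door $v_i$--$v_j$ path in $\GGG^{*}$ is blocked by $\Pa(v_j)\cup \Pa(v_i)\cup B$, which is exactly the hypothesis of do-calculus Rule 2 and yields the claimed equality. The step I expect to be the main obstacle is the collider-free latent subcase: I need to argue rigorously that a latent-heavy back-door path in $\GGG^{*}$ must, before it reaches $v_j$, pass through an observable parent of $v_j$ in the conditioning set, using that latents are roots with exactly two observable children and that the no-$l_{ij}$ hypothesis precisely excludes the one configuration that could otherwise complete an unblockable path. The other delicate point is being careful to distinguish descendants in $\GGG$ from descendants in $\GGG^{*}$ when invoking the $p$-collider definition, so that a non-$p$-collider on the path really is deactivated by the whole conditioning set $\Pa(v_j)\cup \Pa(v_i)\cup B$ and not just by $\Pa(v_i)\cup \Pa(v_j)$.
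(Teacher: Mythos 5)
Your proof is correct, but it takes a genuinely different route from the paper's. You package the whole statement as an instance of Rule 2 of the do-calculus (action/observation exchange for $v_i$), reducing the claim to a single $d$-separation check, $v_i \indep v_j \mid \Pa(v_i)\cup B\cup \Pa(v_j)$, in the graph with edges into $\Pa(v_i)\cup B$ and out of $v_i$ deleted; your case analysis of the back-door paths is sound (internal nodes of $\Pa(v_i)\cup B$ become roots and hence blocking non-colliders; a non-$p$-collider is deactivated because none of its descendants can reach the conditioning set, using that $\Pa(v_i)\cup B$ are roots of the mutilated graph; and a collider-free path must enter $v_j$ through a conditioned observable parent once the fork $v_i\leftarrow l_{ij}\rightarrow v_j$ is excluded by hypothesis), and you correctly flag the need to distinguish descendants in $\GGG$ from descendants in the mutilated graph. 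The paper instead argues from first principles: it expands both sides as mixtures over the latent parents $L_j$ of $v_j$, shows the mixing weights $\Pr[L_j\mid\cdot]$ agree on the two sides (for the observational side via the auxiliary $d$-separation Lemma~\ref{lem:latblocking}, which gives $L_j \indep v_i$ under the stated conditioning --- this is where $P_{ij}\subseteq B$ and the absence of $l_{ij}$ enter), and then equates the conditional kernels $\Pr[v_j\mid L_j,\ldots]$ via the conditional invariance principle for a variable given all of its parents. The $d$-separation content is essentially equivalent in the two arguments, since any dangerous back-door path into $v_j$ must arrive through a latent parent of $v_j$, and separating $v_i$ from those latents is exactly what the paper's auxiliary lemma does; your version buys a shorter, more standard derivation at the price of invoking Rule 2 as a black box, while the paper's is self-contained modulo the invariance principle. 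One small point to fix when writing yours up: since $v_i\in\Pa(v_j)$, the sets $Z=\{v_i\}$ and $W=\Pa(v_j)$ in your Rule 2 application are not disjoint, so the conditioning set should be read as $\Pa(v_j)\setminus\{v_i\}$ (the lemma statement itself has the same notational overlap).
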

\begin{lemma} [Latent Case] \label{lem:second}
Suppose $v_i \rightarrow v_j \in G$ and $v_i, v_j \not \in B$, and $P_{ij} \subseteq B$, then, $\Pr[v_j \mid v_i, \Pa(v_j), \doo(\Pa(v_i) \cup B)] \neq \Pr[v_j \mid  \Pa(v_j), \doo(\{v_i\} \cup \Pa(v_i) \cup B)]$ if there is a latent $l_{ij}$ with $ v_i \leftarrow l_{ij} \rightarrow v_j$.
\end{lemma}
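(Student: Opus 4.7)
My plan follows the template of the proof of Lemma~\ref{lem:first} (the No Latent case) but additionally tracks the back-door path created by $l_{ij}$. Let $W = \Pa(v_i) \cup B$ and $Z = \Pa(v_j) \setminus \{v_i\}$, so the claim becomes $\Pr[v_j \mid v_i, Z, \doo(W)] \neq \Pr[v_j \mid Z, \doo(\{v_i\} \cup W)]$. I will work in the mutilated graph obtained from $\GGG$ by deleting all incoming edges to the intervened variables, which is the standard way to reason about interventional distributions.

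The first step is a path enumeration arguing that, under conditioning on $Z$ in this mutilated graph, the only $d$-connecting paths between $v_i$ and $v_j$ are (i) the direct edge $v_i \to v_j$ and (ii) the latent fork $v_i \leftarrow l_{ij} \to v_j$. For any other path $\pi$: if $\pi$ passes through a $p$-collider $v_k \in P_{ij}$, then $v_k \in B \subseteq W$, so the intervention removes the incoming edges of $v_k$, making it a source and blocking $\pi$ at $v_k$; if $\pi$ passes through a node in $\Pa(v_i)$, the same argument applies; if $\pi$ enters $v_j$ through a non-collider parent $v_p \in Z$, conditioning on $v_p$ blocks $\pi$; if $\pi$ enters $v_j$ through a parent $v_p \in \Pa(v_j)\setminus\{v_i\}$ at which $\pi$ forms a collider, then by Definition~\ref{def:pcollider} $v_p$ is itself a $p$-collider (since $v_p \in \Pa(v_j)$) and hence in $B \subseteq W$, so intervention again breaks $\pi$; finally, any remaining collider on $\pi$ is by definition not a $p$-collider, so it has no descendant in $\Pa(v_i)\cup\Pa(v_j)$, and $Z$ cannot open it.

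Next, I marginalize over $l_{ij}$ and invoke the Markov condition. On both sides, $\Pr[v_j \mid v_i, Z, l, \doo(W)] = \Pr[v_j \mid Z, l, \doo(\{v_i\} \cup W)]$, since once we condition on every parent of $v_j$ (the observable set $\Pa(v_j)$ together with $l_{ij}$), the distribution of $v_j$ is just its local conditional and is insensitive to whether $v_i$ was set by observation or by intervention. The two sides therefore differ only through the induced distribution on $l_{ij}$. On the RHS, the extra intervention on $v_i$ cuts the $l_{ij} \to v_i$ edge, so in this further-mutilated graph the only observable descendant of $l_{ij}$ is $v_j \notin Z$; hence $l_{ij}$ is $d$-separated from $Z$ and $\Pr[l_{ij}\mid Z, \doo(\{v_i\}\cup W)] = \Pr[l_{ij}]$. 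On the LHS the edge $l_{ij}\to v_i$ remains intact, and faithfulness of the interventional distribution (assumed throughout the paper) forces $l_{ij} \notindep v_i \mid Z, \doo(W)$, so the posterior $\Pr[l_{ij}\mid v_i, Z, \doo(W)]$ is not equal to $\Pr[l_{ij}]$.

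Combining these observations, the LHS equals $\sum_l \Pr[v_j\mid v_i, Z, l, \doo(W)]\,\Pr[l\mid v_i, Z, \doo(W)]$ while the RHS equals $\sum_l \Pr[v_j\mid v_i, Z, l, \doo(W)]\,\Pr[l]$; since the edge $l_{ij} \to v_j$ together with faithfulness guarantees that $\Pr[v_j \mid v_i, Z, l, \doo(W)]$ is a non-trivial function of $l$, the two mixtures are distinct and the lemma follows. The main obstacle I anticipate is the path-enumeration step, specifically the case in which $\pi$ enters $v_j$ through a parent $v_p$ at which $\pi$ forms a collider: absent the observation that such a $v_p$ is automatically a $p$-collider (and hence lies in $B$), conditioning on $v_p$ would in fact \emph{open} $\pi$ and the whole argument would collapse. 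The definition of $p$-colliders (Definition~\ref{def:pcollider}) is tailored precisely to close off this gap.
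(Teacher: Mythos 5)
Your overall strategy is the same as the paper's: decompose both interventional quantities by marginalizing over the latent parent(s) of $v_j$, use the invariance/local-Markov principle to equate the component conditionals $\Pr[v_j \mid \text{parents}]$ across the two regimes, and then argue that the two mixing distributions over $l_{ij}$ differ because on the left-hand side $v_i$ is observed while the edge $l_{ij}\to v_i$ is intact, whereas on the right-hand side that edge is severed by $\doo(v_i)$. Your closing step (two mixtures with distinct weights and genuinely $l$-dependent components are distinct) is at the same level of rigor as the paper's own proof, which explicitly waves at ``pathological cases.'' However, there is a concrete gap in your invariance step: you condition on $\Pa(v_j)\cup\{l_{ij}\}$ and assert this is ``every parent of $v_j$.'' It is not --- $v_j$ may have \emph{other} latent parents $l_{tj}$ with $t\neq i$, and $\Pa(\cdot)$ in this paper collects only observable parents. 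Without conditioning on these, the equality $\Pr[v_j\mid v_i,Z,l_{ij},\doo(W)]=\Pr[v_j\mid Z,l_{ij},\doo(\{v_i\}\cup W)]$ does not follow from the Markov condition alone: one must additionally show that the posterior over each such $l_{tj}$ is unaffected by whether $v_i$ is observed or intervened. This is precisely the content of the paper's auxiliary Lemma~\ref{lem:latblocking} (showing $l_{tj}\indep v_i$ given $\Pa(v_j)$ under both $\doo(\Pa(v_i)\cup B)$ and $\doo(\{v_i\}\cup\Pa(v_i)\cup B)$), and it is why the paper marginalizes over the full set $L_j$ of latent parents rather than over $l_{ij}$ alone. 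Your path enumeration has the matching hole: a path can enter $v_j$ through a latent parent $l_{tj}\to v_j$, which falls into neither of your two ``enters $v_j$ through a parent'' cases, and the fork at $l_{tj}$ is not blocked by anything you condition or intervene on; such a path must be argued blocked further back, towards $v_t$.

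A secondary, more local issue: you claim that if $\pi$ passes through a node $p\in\Pa(v_i)$, the intervention ``makes it a source and blocks $\pi$ at $p$.'' Removing incoming edges does eliminate any path configuration that enters $p$ along an incoming edge, but if $p$ occurs on $\pi$ as a fork $v_i\leftarrow p\to\cdots$, the path survives in the mutilated graph and an unconditioned fork does not block. Such paths generally still die near the $v_j$ end (by the $p$-collider argument you correctly identify as the crux), but the justification you give for this case is not the right one. Both issues are repairable with the machinery the paper already builds, but as written the proof does not go through for graphs in which $v_j$ has latent parents other than $l_{ij}$.
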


From Lemmas~\ref{lem:first},~\ref{lem:second}, we know that to identify a latent $l_{ij}$ between $v_i \rightarrow v_j$, we must intervene on all the $p$-colliders between them with $Pa(v_i) \cup \{ v_i \}$. To do this, we again construct random intervention sets. Let $B_t \subseteq V$ for $t \in \{1, 2, \cdots, 72\tau'\log n\}$ be constructed by including  every variable $v_i \in V$ with probability $1-{1}/{\tau'}$ where $\tau'=\max\{\tau,2\}$. Let $\BBB_\tau = \{B_1,\cdots,B_{72\tau'\log n}\}$ be the collection of the sets. Consider a pair $v_i \rightarrow v_j$.  To obtain the interventions given by the above lemmas, we iterate over all sets in $\mathcal{B_\tau}$ and identify all the  sets containing $v_i$, but not $v_j$. From these sets, we remove $v_i$ to obtain $\mathcal{B}_{ij}$. These new interventions are then used in~\LatentsWEdges~to perform the required distribution tests using a DT-oracle on the interventions $B \cup Pa(v_i)$ and $B \cup Pa(v_i) \cup \{ v_i \}$ for every $B \in \mathcal{B}_{ij}$. We can show:

\begin{proposition} \label{prop:2}
Let $\GGG(V \cup L, E \cup E_L)$ be a $\tau$-causal graph with observable graph $G(V,E)$.\\ \LatentsWEdges~with $O(n\tau \log n + n \log n)$ interventions recovers all latents effecting pairs of adjacent nodes in the observable graph $G$ with probability at least $1-{1}/{n^2}$.
\end{proposition}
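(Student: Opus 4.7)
The plan is to decompose the argument into three parts: per-edge correctness from the two characterization lemmas, a high-probability union-bound argument that the random family $\BBB_\tau$ yields a ``good'' intervention set for every adjacent pair simultaneously, and a direct count of the distinct interventions performed.

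For per-edge correctness, fix a directed edge $v_i \to v_j$ in $G$. By construction, every $B \in \mathcal{B}_{ij}$ satisfies $v_i, v_j \notin B$. Lemmas~\ref{lem:first} and~\ref{lem:second} then imply that whenever such a $B$ additionally satisfies $P_{ij} \subseteq B$, the DT-oracle on the pair $(\Pa(v_i)\cup B,\,\{v_i\}\cup\Pa(v_i)\cup B)$ returns ``equal'' if and only if there is no latent $l_{ij}$. Combined with the fact that the latent case forces the two distributions to differ for every qualifying $B$, the algorithm's $\forall$-condition decides correctly provided $\mathcal{B}_{ij}$ contains at least one $B$ with $P_{ij}\subseteq B$.

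For the probability argument, observe from Definition~\ref{def:pcollider} that each $v_k\in P_{ij}$ is an internal node of some path between $v_i$ and $v_j$, and in particular $v_i,v_j\notin P_{ij}$. Consequently, for a random $B_0\in \BBB_\tau$ the three events $\{v_i\in B_0\}$, $\{v_j\notin B_0\}$, and $\{P_{ij}\subseteq B_0\}$ are on disjoint node sets and hence mutually independent. Since every node is placed in $B_0$ with probability $1-1/\tau'$, the joint probability is at least
\[
(1-1/\tau')\cdot(1/\tau')\cdot(1-1/\tau')^{|P_{ij}|}\;\geq\;(1/\tau')(1-1/\tau')^{\tau'+1}\;\geq\;\frac{1}{8\tau'},
\]
using $|P_{ij}|\leq \tau\leq \tau'$ and $\tau'\geq 2$. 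Across the $|\BBB_\tau|=72\tau'\log n$ independent draws, the per-edge failure probability is at most $(1-1/(8\tau'))^{72\tau'\log n}\leq e^{-9\log n}=n^{-9}$, and a union bound over the at most $n^2$ edges of $G$ yields overall success probability at least $1-n^{-7}\geq 1-1/n^2$.

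For the intervention count, every intervention performed by~\LatentsWEdges\ is of the form $\Pa(v_i)\cup B$ or $\{v_i\}\cup \Pa(v_i)\cup B$ for some $v_i\in V$ and some $B\in \BBB_\tau$, so the number of distinct intervention sets used is at most $2n\cdot|\BBB_\tau| = O(n\tau'\log n)$, which gives $O(n\tau\log n)$ when $\tau\geq 2$ and $O(n\log n)$ when $\tau\in\{0,1\}$, matching the claimed $O(n\tau\log n+n\log n)$ bound. The main obstacle is really the independence step in the probability calculation, which hinges on the elementary observation that any $p$-collider lies strictly in the interior of a $v_i$-$v_j$ path and is therefore distinct from both endpoints; once Lemmas~\ref{lem:first} and~\ref{lem:second} are invoked, the remainder is a routine sampling-and-counting exercise.
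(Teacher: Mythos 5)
Your proof is correct and follows essentially the same route as the paper's: per-edge correctness via Lemmas~\ref{lem:first} and~\ref{lem:second} on a set $B$ with $v_i,v_j\notin B$ and $P_{ij}\subseteq B$, a sampling argument showing such a $B$ exists in $\BBB_\tau$ for every edge with high probability, and a count of $2n|\BBB_\tau|$ interventions. The only difference is cosmetic: you inline the probability calculation (with the sharper constant $1/(8\tau')$ and the explicit remark that $v_i,v_j\notin P_{ij}$ justifies independence) where the paper simply invokes the analogous computation from Lemma~\ref{lem:graphwhp}, which yields $1/(18\tau')$.
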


\pparagraph{Putting it all Together.} Using Propositions~\ref{prop:obs},~\ref{prop:1}, and~\ref{prop:2}, we get the following result.
% about recovering $\GGG$. 
Note that $\tau \leq n-2$.
\begin{theorem}
\label{thm:lat}
Given access to a $\tau$-causal graph $\GGG= \GGG(V \cup \lat, E  \cup E_L)$ through Conditional Independence (CI) and Distribution Testing (DT) oracles, Algorithms~\RecoverG,~\LatentsNEdges, and~\LatentsWEdges put together recovers $\GGG$ with $O(n\tau \log n + n \log n)$ interventions, with probability at least $1-O(1/n^2)$ $($where $|V|=n)$.
\end{theorem}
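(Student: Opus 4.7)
The proof is a pipeline argument chaining Propositions~\ref{prop:obs},~\ref{prop:1}, and~\ref{prop:2}. The plan is to execute four stages sequentially: (i) recover the ancestral graph $\Anc(G)$ using the deterministic $2\log n$-size strongly separating set system given by the binary encoding of $\{1,\dots,n\}$ (as described in Section~\ref{sec:unweighted}); (ii) refine $\Anc(G)$ into the observable graph $G(V,E)$ by running Algorithm~\RecoverG~on the random intervention family $\AAA_\tau$, invoking Proposition~\ref{prop:obs}; (iii) detect all latents affecting non-adjacent pairs of $G$ via Algorithm~\LatentsNEdges, invoking Proposition~\ref{prop:1}; and (iv) detect all latents affecting adjacent pairs of $G$ via Algorithm~\LatentsWEdges~on the random family $\BBB_\tau$, invoking Proposition~\ref{prop:2}. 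The sequential order matters, since stages (iii)--(iv) rely on having the observable graph $G$ and the parent sets $\Pa(v_i),\Pa(v_j)$ in hand so that their CI-tests and do-see tests reference the correct conditioning sets.

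For the intervention count, the budgets from the four stages add to
$$O(\log n) + O(\tau \log n + \log n) + O(\tau^2 \log n + \log n) + O(n\tau \log n + n \log n).$$
Since $\tau \le n-2$, the $\tau^2\log n$ contribution from Proposition~\ref{prop:1} is dominated by the $n\tau\log n$ term from Proposition~\ref{prop:2}, and all lower-order terms are absorbed. The overall total is therefore $O(n\tau\log n + n\log n)$, matching the bound stated in the theorem. (If desired, the families $\AAA_\tau$ and $\BBB_\tau$ can even be reused across stages, but this is not needed to get the asymptotic bound.)

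For the success probability, stage (i) is deterministic, while stages (ii), (iii), and (iv) each succeed with probability at least $1 - 1/n^2$ by their respective propositions. A union bound over the three randomized stages yields total failure probability at most $3/n^2$, so the combined algorithm returns the correct $\GGG$ with probability $1 - O(1/n^2)$. The only delicate point in the composition is that conditioning on an incorrect parent set would invalidate the tests in stages (iii)--(iv); however, conditional on stage (ii) returning the true $G$, each later proposition applies verbatim, so no extra argument beyond the union bound is needed. The main obstacle is therefore not conceptual but bookkeeping: verifying that the $O(\tau^2 \log n)$ term genuinely gets absorbed under $\tau \le n-2$, and that the three $1/n^2$ failure budgets can simply be summed because the algorithms' random choices are independent draws of $\AAA_\tau$ and $\BBB_\tau$.
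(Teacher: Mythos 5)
Your proposal is correct and matches the paper's own (largely implicit) argument: the theorem is obtained exactly by chaining Propositions~\ref{prop:obs},~\ref{prop:1}, and~\ref{prop:2}, summing the intervention budgets, noting that $\tau \le n-2$ makes the $O(\tau^2\log n)$ term subsumed by $O(n\tau\log n)$, and union-bounding the three $1/n^2$ failure probabilities. (One small remark: the union bound needs no independence between the random families, so your final caveat about independent draws is unnecessary.)
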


% \section{Conclusion}
% We have studied how to recover a causal graph in presence of latents while minimizing intervention cost. In the linear cost setting, we give a $2$-approximation algorithm for ancestral graph recovery. This approximation factor can be improved to $(1+\epsilon)$ under some additional assumptions. Removing these assumptions would be an interesting direction for future work. In the identity cost setting, we give a randomized algorithm to recover the full causal graph, through a novel characterization based on $p$-colliders. In this setting, understanding the optimal intervention cost is open, and an important direction for research.

% While we focus on non-adaptive settings, where all the interventions are constructed at once in the beginning, an adaptive (sequential) setting has received recent attention~\citep{he2008active,shanmugam2015learning}, and is an interesting  direction in both our cost models.
\section{Experiments}\label{sec:experiments}
In this section, we compare the total number of interventions required to recover causal graph $\GGG$ parameterized by $p$-colliders (See Section~\ref{sec:unweighted}) vs.\ maximum degree utilized by~\citep{neurips17}. 

{Since the parameterization of these two results are different, a direct comparison between them is not always possible. If $\tau=o(d^2/n)$, we use fewer interventions than ~\citet{neurips17} for recovering the causal graph. Roughly, for any $0\leq \eps \leq 1$,  (a) when $\tau < n^\eps, d > n^{(1+\eps)/2}$, our bound is better, 
(b) when $\tau > n^\eps, \tau < d < n^{(1+\eps)/2}$, then we can identify latents using the algorithms of~\citet{neurips17}  after using our algorithm for observable graph recovery, and 
(c) when $\tau > d > n^\eps, d < n^{(1+\eps)/2}$, the bound in~\citet{neurips17} is better.
}

In this section, our main motivation is to show that $p$-colliders can be a useful measure of complexity of a graph.  As discussed in Section~\ref{sec:intro}, even few nodes of high degree could make $d^2$ quite large. 

\noindent\textbf{Setup}.
We demonstrate our results by considering sparse random graphs generated from the families of:  (i) Erd\"os-R\'enyi random graphs $G(n,c/n)$ for constant $c$, (ii) Random bipartite graphs generated using $G(n_1,n_2,c/n)$ model, with partitions $L$, $R$ and edges directed from $L$ to $R$, (iii) Random directed trees with degrees of nodes generated from power law distribution. In each of the graphs that we consider, we include latent variables by sampling $5\%$ of $\binom{n}{2}$ pairs and adding a latent between them. 

\noindent\textbf{Finding $p$-colliders.}
Let $\GGG$ contain observable variables and the latents. To find $p$-colliders between every pair of observable nodes of $\GGG$, we enumerate all paths between them and check if any of the observable nodes on a path can be a possible $p$-collider. As this became practically infeasible for larger values of $n$, we devised an algorithm that runs in polynomial time (in the size of the graph) by constructing an appropriate flow network and finding maximum flow in this network. Please refer to Appendix~\ref{app:experiments} for more details. 

\begin{figure}[h]
\centering
\includegraphics[width=2in]{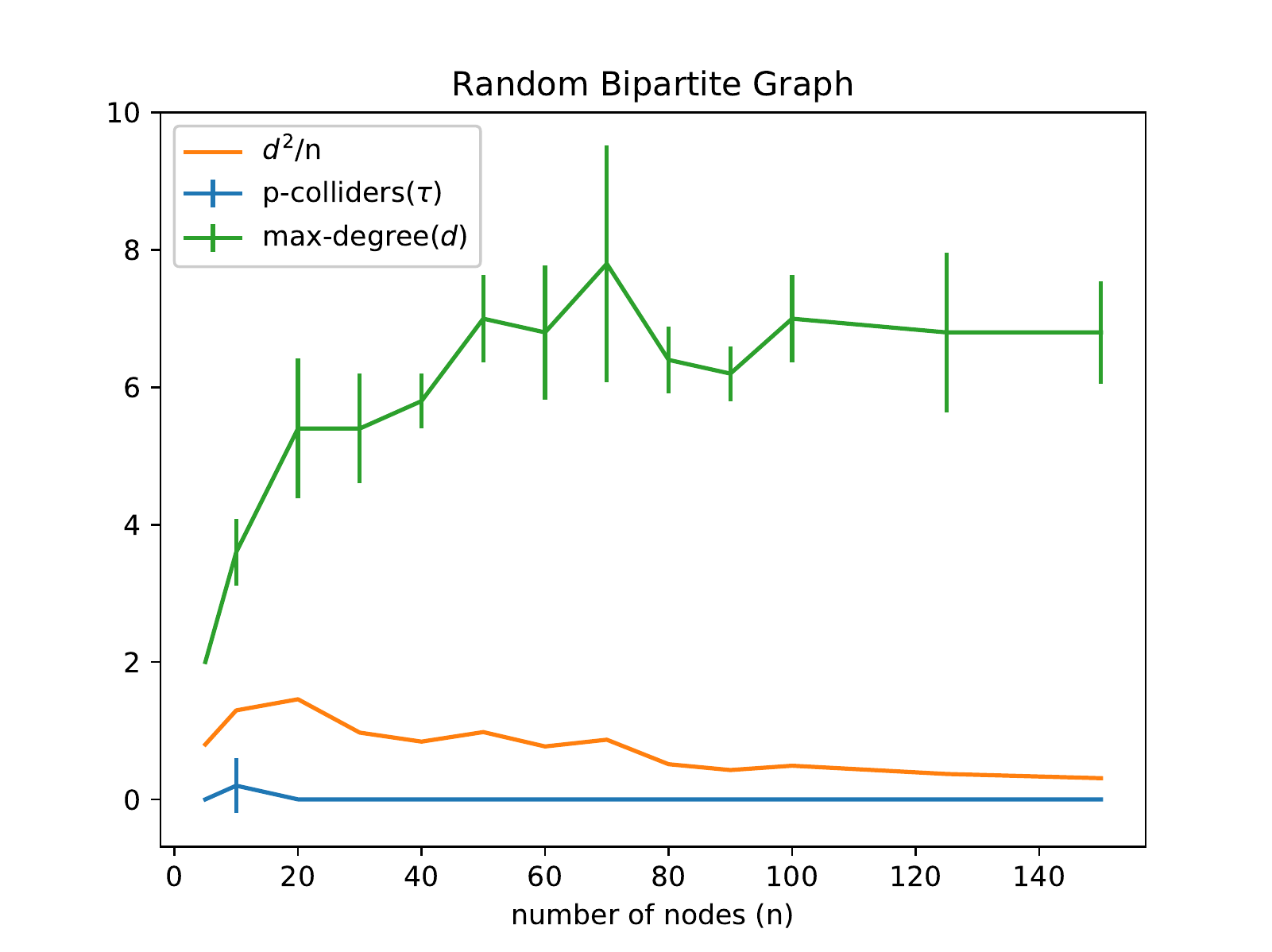}
\caption{Comparison of $\tau$ vs.\ maximum degree in sparse random bi-partite graphs.
}
\label{fig:plot_main}
\end{figure}

\noindent\textbf{Results.}
In our plots (Figure~\ref{fig:plot_main}), we compare the maximum undirected degree $(d)$ with the maximum number of $p$-colliders between any pair of nodes (which defines $\tau$).
We ran each experiment $10$ times and plot the mean value along with one standard deviation error bars. 

For random bipartite graphs, that can be used to model causal relations over time, we use equal partition sizes $n_1 = n_2 = n/2$ and plot the results for $\GGG(n/2,n/2, c/n)$ for constant $c = 5$. We observe that the behaviour is uniform for small constant values of $c$. In Figure~\ref{fig:plot_main}, we observe that the maximum number of $p$-colliders($\tau$) is close to zero for all values of $n$ while the values of $d^2/n$ using the mean value of $d$, is significantly higher. So, in the range considered our algorithms use fewer interventions. We show similar results for other random graphs in Appendix~\ref{app:experiments}.

Therefore, we believe that minimizing the number of interventions based on the notion of $p$-colliders is a reasonable direction to consider.

\section{Concluding Remarks}

We have studied how to recover a causal graph in presence of latents while minimizing the intervention cost. In the linear cost setting, we give a $2$-approximation algorithm for ancestral graph recovery. This approximation factor can be improved to $(1+\epsilon)$ under some additional assumptions. Removing these assumptions would be an interesting direction for future work. In the identity cost setting, we give a randomized algorithm to recover the full causal graph, through a novel characterization based on $p$-colliders. In this setting, understanding the optimal intervention cost is open, and an important direction for research.

While we focus on non-adaptive settings, where all the interventions are constructed at once in the beginning, an adaptive (sequential) setting has received recent attention~\citep{he2008active,shanmugam2015learning}, and is an interesting  direction in both our cost models.

\subsection*{Acknowledgements}
The first two named authors would like to thank Nina Mishra, Yonatan Naamad, MohammadTaghi Hajiaghayi and Dominik Janzing for many helpful discussions during the initial stages of this project.
This work was partially supported by NSF grants CCF-1934846, CCF-1908849, and   CCF-1637536.
\newpage
% \bibliography{references}
% \bibliographystyle{plain}
% \addcontentsline{toc}{section}{References}
\bibliographystyle{plainnat}
\bibliography{references}

\newpage
\appendix
\noindent\rule{\textwidth}{1pt}
\begin{center}
{\Large \sf Appendix }
\end{center}
\noindent\rule{\textwidth}{1pt}

% !TEX root = main.tex

\section{Missing Details from Section~\ref{sec:lincost}} \label{app:lincost}
\begin{lemma} [Lemma~\ref{lem:lb-strong} Restated]
Suppose $\mathcal{S} = \{ S_1, S_2, \cdots, S_m \}$ is a collection of subsets of $V$. For a given causal graph $G$ if $\Anc(G)$ is recovered using CI-tests by intervening on the sets $S_i \in \mathcal{S}$. Then, $\mathcal{S}$ is a strongly separating set system.
\end{lemma}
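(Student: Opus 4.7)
The plan is to prove the contrapositive: if $\mathcal{S}$ is not strongly separating, then CI tests on the interventional distributions $\{P[\cdot \mid \doo(S)] : S \in \mathcal{S}\}$ cannot uniquely determine $\Anc(G)$. Unpacking Definition~\ref{def:ssss}, failure to be strongly separating means there exist distinct $u, v \in V$ such that, without loss of generality, every $S \in \mathcal{S}$ containing $u$ also contains $v$ (the symmetric case is handled by swapping the roles of $u$ and $v$). I will exhibit two causal graphs $\GGG_1$ and $\GGG_2$ on $V$ whose ancestral graphs differ on the pair $(u,v)$ but whose interventional distributions coincide under every $S \in \mathcal{S}$, so a CI-test based algorithm cannot tell them apart and must fail on at least one.

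For the adversarial construction, take $\GGG_1$ to contain a direct edge $u \to v$, and take $\GGG_2$ to replace this edge with a single latent $l \in \lat$ having $l \to u$ and $l \to v$, which is consistent with the assumption that each latent has exactly two observable children. All other aspects of the two graphs are identical---in the cleanest instance, the remaining variables in $V \setminus \{u,v\}$ are isolated with identical CPDs in both graphs, so any intervention on them has the same effect in either model. Note $u \in \Anc(v)$ in $\GGG_1$ but $u \notin \Anc(v)$ in $\GGG_2$, so the two ancestral graphs differ. Choose the CPDs so that the observational marginal $P[u,v]$ agrees in both models, using strictly positive conditional probabilities so that faithfulness holds in both graphs; for example, set $u \sim \mathrm{Bern}(1/2)$ and $v \mid u$ a noisy copy of $u$ in $\GGG_1$, and realize the same joint in $\GGG_2$ as a non-degenerate mixture $P[u,v] = \sum_{l} P[l]\,P[u \mid l]\,P[v \mid l]$ with two latent states.

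It then remains to verify $P_{\GGG_1}[\cdot \mid \doo(S)] = P_{\GGG_2}[\cdot \mid \doo(S)]$ for every $S \in \mathcal{S}$, which I argue by cases on whether $v \in S$. If $v \in S$, then $\doo(S)$ removes all incoming edges into $v$ in both mutilated graphs, erasing the sole structural discrepancy between $\GGG_1$ and $\GGG_2$, so the post-intervention joints coincide. If $v \notin S$, then the assumption on $\mathcal{S}$ forces $u \notin S$ as well, so neither distinguishing variable is intervened on; the joint over $(u,v)$ then reduces to the observational marginal, which we matched by construction, while the behavior of the other variables is unaffected by the edge-vs-latent swap. Consequently every CI query returns the same answer in both models, contradicting recovery of $\Anc(G)$. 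The main obstacle is producing a genuinely matched pair of interventional distributions while also satisfying the interventional-faithfulness assumption; the latent-vs-edge swap, combined with a non-degenerate two-state mixture realization of $P[u,v]$, handles both requirements simultaneously.
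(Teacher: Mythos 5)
Your proposal is correct and follows essentially the same route as the paper: exhibit an adversarial pair of causal models --- a direct edge between the unseparated pair versus a latent confounder --- whose interventional distributions agree under every $S \in \mathcal{S}$, so that CI-tests cannot recover the differing ancestral relation. If anything, your single WLOG case (``every set containing $u$ also contains $v$'') together with the explicit matching of observational marginals is a slightly cleaner and more complete case analysis than the paper's split into ``neither node appears'' and ``only one node appears in the set system.''
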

\begin{proof}
Suppose $\mathcal{S}$ is not a strongly separating set system. If there exists a pair of nodes $(v_i, v_j)$ such that every set $S_k \in \mathcal{S}$ contains none of them, then, we cannot recover the edge between these two nodes as we are not intervening on either $v_i$ or $v_j$ and the results of an independence test $v_i \indep v_j$ might not be correct due to the presence of a latent variable $l_{ij}$ between them. Now, consider the case when only one of them is present in the set system. Let $(v_i, v_j)$ be such that $\forall S_k :  S_k \cap \{ v_i, v_j \}  = \{ v_i \} \Rightarrow v_i \in S_k, v_j \not \in S_k $. We choose our graph $G_{ij}$ to have two components $\{v_i, v_j\}$ and $V \setminus \{v_i, v_j\}$; and include the edge $v_j \rightarrow v_i$ in it. Our algorithm will conclude  from CI-test $v_i \indep v_j \mid \doo(S_k)$ that $v_i$ and $v_j$ are independent. However, it is possible that $v_i \notindep v_j$ because of a latent $l_{ij}$ between $v_i$ and $v_j$, but $v_i \indep v_j \mid \doo(S_k)$ as intervening on $v_i$ disconnects the $l_{ij} \rightarrow v_i$ edge. Therefore, our algorithm cannot distinguish the two cases $v_j \rightarrow v_i$ and $v_i \leftarrow l_{ij} \rightarrow v_j$ without intervening on $v_j$. For every $\mathcal{S}$ that is not a strongly separating set system, we can provide a $G_{ij}$ such that by intervening on sets in $\mathcal{S}$, we cannot recover $\Anc(G_{ij})$ correctly. 
\end{proof}
\subsection{Missing Details from Section~\ref{sec:2approx}} \label{app:2approx}

We first argue that the matrix returned by Algorithm~\SSMatrix~is indeed a strongly separating matrix.

\begin{lemma} \label{lem:ss_correctness}
The matrix $U$ returned by Algorithm~\SSMatrix~is a strongly separating matrix.  
\end{lemma}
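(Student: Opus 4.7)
The plan is to verify the strong separation condition by case analysis on the positions of $i$ and $j$ relative to $a_1$, using the block structure of $U$ displayed before the algorithm. First I would articulate precisely the three column blocks: columns $1,\ldots,a_1$ hold the identity matrix $\mathbb{I}_{a_1}$ on the top $a_1$ rows and are zero below; columns $a_1+1,\ldots,m-\log n$ hold, for each row $i>a_1$, a distinct binary vector of weight $k\geq 1$ (the smallest unused such vector), and are zero for rows $i\leq a_1$; and columns $m-\log n+1,\ldots,m$ are the weight indicator columns, where row $i>a_1$ has a single $1$ in column $m-\log n+k$ with $k$ equal to the middle-block weight of row $i$, and rows $i\leq a_1$ are zero.

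Then I would handle three cases. (i) If $i,j\leq a_1$, the identity block directly gives $U(i,i)=1$, $U(j,i)=0$, $U(i,j)=0$, $U(j,j)=1$. (ii) If $i\leq a_1<j$, then $U(i,i)=1$ while $U(j,i)=0$ (row $j$ is zero in the first $a_1$ columns), and since row $j$ receives a middle-block vector of weight at least $1$, there is some column $k'$ in $\{a_1+1,\ldots,m-\log n\}$ with $U(j,k')=1$ and $U(i,k')=0$ (row $i$ is zero outside column $i$). The case $j\leq a_1<i$ is symmetric.

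(iii) If $i,j>a_1$, let $k,k'$ be the weights of rows $i,j$ in the middle block. If $k\neq k'$, the weight indicator columns supply the separation: $U(i,m-\log n+k)=1$ and $U(j,m-\log n+k)=0$, and similarly with $k'$. If $k=k'$, the middle-block vectors of rows $i,j$ are distinct binary vectors of the same Hamming weight; as $\mathrm{supp}(U(i))\neq \mathrm{supp}(U(j))$ and both supports have the same cardinality, there must exist an index in $\mathrm{supp}(U(i))\setminus\mathrm{supp}(U(j))$ and another in $\mathrm{supp}(U(j))\setminus\mathrm{supp}(U(i))$, which give the two required separating columns.

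I do not expect any real obstacle: all facts used are either immediate from the algorithm's construction or the elementary set-theoretic remark that two distinct equal-weight binary vectors must differ on both sides of their supports. The only subtlety to flag is to read the indices of the weight indicator block consistently with the displayed matrix (columns $m-\log n+1,\ldots,m$), so that the assignment $U(i,m-\log n+k)=1$ lands inside the reserved indicator block for every weight $k\leq \log n$ produced by the greedy loop.
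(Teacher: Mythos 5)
Your proof is correct and follows essentially the same route as the paper's: separate rows of unequal weight via the reserved indicator columns, and rows of equal weight via the fact that two distinct equal-weight binary vectors must differ in both directions of their supports. Your explicit case analysis for the identity-block rows ($i\leq a_1$) is a bit more careful than the paper's terse argument, which handles those rows only implicitly, but the underlying idea is identical.
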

\begin{proof}
Consider any two nodes $v_i$, $v_j$ with corresponding row vectors $U(i)$ and $U(j)$. Suppose $\|U(i)\|_1 = \|U(j)\|_1$.

By construction, $U(i) \neq U(j)$ they will differ in at least one coordinate. However, they have equal weights, so, there must exist one more coordinate such that the strongly separating condition holds. If $U(i)$ and $U(j)$ have weights $r^i \neq r^j$, then, $U(i, m'-\log n+r^i) = U(j, m'-\log n+r^j) = 1$ and $U(i, m'-\log n+r^j) = U(j, m'-\log n+r^i) = 0$ by construction outlined in the Algorithm~\SSMatrix. This proves that the matrix $U$ returned is a strongly separating matrix.
\end{proof}
The following inequalities about Algorithm~\SSMatrix~will be useful in analyzing its performance.
\begin{lemma}\label{lem:ss_covering}.
For $m \geq 66 \log n$ and $m'$ as defined in Algorithm~\SSMatrix, we have the following
\begin{itemize}
\item[(a)] $\sum_{t = 1}^{\log n} \binom{m'-\log n}{t} \geq n$ (i.e., there are enough vectors of weight $\le \log n$ only using $m'-\log n$ columns to assign a unique vector to each variable).
\item[(b)] Let $i^*$ be the smallest integer s.t.\ $\sum_{t = 1}^{i^*} {m' \choose t} \geq n$. Then, $\sum_{t = 1}^{2i - 1} {m' - \log n \choose t} \geq \sum_{t = 1}^{i} {m' \choose t}$ for all $i \in \{ 2, \ldots, i^* \}$.
\end{itemize}
\end{lemma}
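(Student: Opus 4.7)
The plan is to exploit the assumption $m \geq 66\log n$ together with the algorithm's range $a_1 \in \{0,\ldots,2m/3\}$, which give $m' = m - a_1 \geq m/3 \geq 22\log n$, and in particular $m' - \log n \geq 21\log n$. Both parts then reduce to routine binomial coefficient estimates.

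For part (a), I would keep only the largest term of the sum and apply the standard lower bound $\binom{N}{K} \geq (N/K)^K$: since $(m'-\log n)/\log n \geq 21$, this gives $\binom{m'-\log n}{\log n} \geq 21^{\log_2 n} = n^{\log_2 21}$, which is comfortably larger than $n$.

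For part (b), I would proceed in three steps. First, the same single-term bound yields $\binom{m'}{\log n} \geq 22^{\log_2 n} \geq n$, so $i^* \leq \log n$ and it suffices to verify the inequality for $i \in \{2,\ldots,\log n\}$. Second, for such $i$ and for every $t \leq i$, the ratio $\binom{m'}{t-1}/\binom{m'}{t} = t/(m'-t+1)$ is at most $1/21$ (using $t \leq \log n$ and $m' \geq 22\log n$), so the right-hand side is a geometric series and
\[
\sum_{t=1}^{i}\binom{m'}{t} \;\leq\; \tfrac{21}{20}\binom{m'}{i} \;<\; 2\binom{m'}{i}.
\]
Third, keep only the largest term on the left and verify the pointwise inequality $\binom{m'-\log n}{2i-1} \geq 2\binom{m'}{i}$ for all $i \in \{2,\ldots,\log n\}$.

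The main obstacle is this last pointwise inequality. I would write the ratio as a quotient of falling factorials, use the crude bounds $m'-\log n - j \geq m' - 3\log n \geq \tfrac{19}{22}\, m'$ (valid for every $j \leq 2i-2$ since $i \leq \log n$) and $(2i-1)!/i! \leq (2i-1)^{i-1}$, thereby reducing the ratio to at least $(19/22)^{2i-1}\cdot (m'/(2i-1))^{i-1}$. Since $m'/(2i-1) \geq 11$ throughout the relevant range, the second factor is $\geq 11^{i-1}$: already $i=2$ gives $11 \cdot (19/22)^3 > 7$, and for larger $i$ the factor $11^{i-1}$ grows exponentially in $i$ while $(19/22)^{2i-1}$ decays only polynomially in $n$, so the ratio grows to $\Omega(n^{3})$ at $i=\log n$. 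The large constant $66$ in the hypothesis is precisely the slack needed to make the two boundary cases $i=2$ (smallest exponent on $m'/(2i-1)$) and $i=\log n$ (largest exponent on the decaying factor) go through simultaneously.
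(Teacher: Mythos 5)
Your proof is correct, and part (b) takes a genuinely different route from the paper. Part (a) is identical: both keep the single term $\binom{m'-\log n}{\log n}$ and use $\binom{N}{K}\ge (N/K)^K$ with $m'-\log n\ge 21\log n$. For part (b), the paper bounds $\sum_{t=1}^i\binom{m'}{t}\le e^i(m'/i)^i$ and then proves $(em'/i)^i\le\binom{m'-\log n}{2i-1}$ by induction on $i$, with an inductive step that runs through the binary entropy function and the MacWilliams--Sloane estimates on $\binom{a}{2i-1}/\binom{a}{2i+1}$. You instead observe that $i^*\le\log n$, bound the right-hand sum by a geometric series with ratio $1/21$ (so it is at most $\tfrac{21}{20}\binom{m'}{i}$), and then verify the pointwise inequality $\binom{m'-\log n}{2i-1}\ge 2\binom{m'}{i}$ directly via falling factorials, obtaining the lower bound $(19/22)^{2i-1}\bigl(m'/(2i-1)\bigr)^{i-1}=(19/22)\bigl((19/22)^2\cdot 11\bigr)^{i-1}\cdot\bigl(m'/(11(2i-1))\bigr)^{i-1}\ge (19/22)\cdot 8.2^{\,i-1}>7$ for all $i\ge 2$ in the relevant range. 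I checked the three ingredients: $i^*\le\log n$ follows from $\binom{m'}{\log n}\ge 22^{\log_2 n}\ge n$; the geometric-series ratio $t/(m'-t+1)\le\log n/(21\log n)=1/21$ holds for all $t\le i\le\log n$; and the falling-factorial bounds $m'-\log n-j\ge m'-3\log n\ge\tfrac{19}{22}m'$ for $j\le 2i-2$ and $(2i-1)!/i!\le(2i-1)^{i-1}$ are valid. Your argument is more elementary (no induction, no entropy-function machinery) and yields a cleaner quantitative picture of why the constant $66$ suffices; the paper's entropy-based induction is what one would reach for if the ratio of consecutive binomials were closer to $1$, but in this regime your direct comparison is simpler and equally rigorous.
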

\begin{proof}
\begingroup
\allowdisplaybreaks
Let $m \geq { 66} \log n$. From Algorithm~\SSMatrix, we have $m' = m - a_1$ for all guesses $1 \leq a_1 \leq \frac{2m}{3}$. By reserving the last $``\log n"$ columns in Algorithm~\SSMatrix, we want to make sure that $m'-\log n$ can fully cover $n$ nodes with weight at most $\log n$. We have :
\begin{align*}
    m' = m - a_1 \geq \frac{m}{3} &\geq 22\log n \mbox{ and }\\
    \sum_{t = 1}^{\log n} {m'-\log n \choose t} \geq {m' - \log n \choose \log n}
    &\geq \left(\frac{21\log n}{\log n} \right)^{\log n} > n.
\end{align*}
Moving onto Part (b). Let $i^*$ be the minimum value of $i$ such that $\sum_{t = 1}^{i^*} {m' \choose t} \geq n$. Consider $i$ such that $2 \leq i \leq i^*$: 
\begin{align*}
\sum_{t = 1}^{2i - 1} {m' - \log n \choose t}  &\geq {m' - \log n \choose 2i-1 }. 
\end{align*}
\text{Consider now the right hand side: }
\begin{align*}
\sum_{t = 1}^{i} {m' \choose t} &\leq  \sum_{t = 0}^{i} {m' \choose t}  \leq \sum_{t=0}^i \frac{m'^t}{t!} \leq \sum_{t=0}^i \frac{i^t}{t!} \left( \frac{m'}{i} \right)^t  \leq \e^i \left( \frac{ m'}{i} \right)^{i}.   
\end{align*}
We inductively show that $\text{ for all } i \geq 2$
\[ 
    \frac{\left( \frac{em'}{i} \right)^i}{\binom{m'-\log n}{2i-1} } \leq 1.
\]
Let $i = 2$. For $m \geq \frac{m'}{3} \geq 50\left( \frac{22}{21} \right)^3$ we have
\begin{align*}
    \frac{(em'/2)^2}{\binom{m'-\log n}{3}} &\leq \frac{(em'/2)^2}{(\frac{m'-\log n}{3})^3} \leq 50\left( \frac{22}{21} \right)^3\frac{m'^2}{m'^3} \leq 1. \\
\end{align*}
Assume the inequality is correct for some $i > 2$. Now, we show that it must also hold for $i+1$.
\begin{align*}
    \frac{\left( \frac{em'}{i+1} \right)^{i+1}}{\binom{m'-\log n}{2i+1} } &=      \frac{\left( \frac{em'}{i+1} \right)^i \frac{em'}{i+1}\binom{m'-\log n}{2i-1}}{\binom{m'-\log n}{2i-1}\binom{m'-\log n}{2i+1} } 
\leq  \frac{\left( \frac{em'}{i} \right)^i \frac{em'}{i+1}\binom{m'-\log n}{2i-1}}{\binom{m'-\log n}{2i-1}\binom{m'-\log n}{2i+1} } 
    \leq \frac{ \frac{em'}{i+1}\binom{m'-\log n}{2i-1}}{\binom{m'-\log n}{2i+1} }.
\end{align*}
For ease of notation, denote $a = m'-\log n \geq m'(1-\frac{1}{22}) \geq 21\log n$.

Consider the binary entropy function $H(x)= -x
\log x - (1-x)\log (1-x)$. For $x \in [\frac{2i-1}{a}, \frac{2i+1}{a} ]$, $H(x)$ is an increasing function. For some value of $x$ in the range we have : 
\[ \frac{H(\frac{2i+1}{a}) - H(\frac{2i-1}{a})}{\frac{2i+1}{a} - \frac{2i-1}{a}} = H'(x) = \log \left( \frac{1}{x} - 1 \right) \geq \log \left( \frac{a}{2i-1} - 1\right)\]
\[ \Longrightarrow  H\left(\frac{2i+1}{a}\right) - H\left(\frac{2i-1}{a}\right) \geq \frac{2}{a} \log \left( \frac{a}{2i-1} - 1\right).\]

Now, consider the fraction $${ \binom{a}{2i-1}}/{\binom{a}{2i+1} }.$$
Using the bound from (\cite{macwilliams1977theory}, Page 309)
\[
\sqrt{\frac{a}{8b(a-b)}}2^{mH(b/a)} \leq \binom{a}{b}\leq \sqrt{\frac{a}{2\pi b(a-b)}}2^{mH(b/a)},
\]
\begin{align*}
    { \binom{a}{2i-1}}/{\binom{a}{2i+1} } &\leq {\sqrt{8(2i+1)(a-2i-1)/2\pi(2i-1)(a-2i+1)}}/{2^{a H(\frac{2i+1}{a}) - H(\frac{2i-1}{a})}}\\
    &\leq {\sqrt{20/3\pi}}/{2^{a H(\frac{2i+1}{a}) - H(\frac{2i-1}{a})}}\\
    &\leq {\sqrt{20/3\pi}}/{2^{2 \log \left( \frac{a}{2i-1} - 1\right)}} \\
    &= \frac{ \sqrt{20/3\pi} }{\left( \frac{a}{2i-1} - 1\right)^2}.
\end{align*}
Combining the above, we have : 
\begin{align*}
\frac{\left( \frac{em'}{i+1} \right)^{i+1}}{\binom{m'-\log n}{2i+1} } &\leq \frac{ \frac{m'}{i+1} \sqrt{20e^2/3\pi}}{\left( \frac{a}{2i-1} - 1\right)^2}\\
&\leq \frac{ 4 m' i \sqrt{20e^2/3\pi}}{\left( a-2i \right)^2}\\
&\leq \frac{ 4m'\log n   \sqrt{20e^2/3\pi}}{m'^2\left( 1-3/22 \right)^2} = \frac{ 4\log n   \sqrt{20e^2/3\pi}}{m'\left( 1-3/22 \right)^2} \leq \frac{21.2 \log n}{m'} \leq 1.
\end{align*}

Therefore, we have for all $i \geq 2$
\[ 
    {\left( \frac{em'}{i} \right)^i}/{\binom{m'-\log n}{2i-1} } \leq 1 
\]
\[ \Longrightarrow
\sum_{t = 1}^{i} {m' \choose t} \leq \left( \frac{em'}{i} \right)^i \leq \binom{m'-\log n}{2i-1} \leq \sum_{t = 1}^{2i - 1} {m' - \log n \choose t}.
\]
\endgroup
\end{proof}

Let $c_U = \sum_{j=1}^n c(v_j) \| U(j)\|_1$ be value of objective for the matrix $U$ returned by Algorithm~\SSMatrix.

Consider $U_{\OPT}$, and let $V^{(1)}_{\OPT}$ represent all nodes that are assigned weight $1$ in it (nodes which have only one $1$ in their row). Let $c^{(1)}_{\OPT}$ denote the sum of  cost of the nodes in $V^{(1)}_{\OPT}$. In our Algorithm~\SSMatrix, we maintain a guess for the size of $V^{(1)}_{\OPT}$ as $a_1$. We want to guess the exact value of $|V^{(1)}_{OPT}| \leq m$. However, we only guess $a_1$ until $\frac{2m}{3}$, so that the remaining columns can be used to obtain a valid separating matrix (for each of our guesses) as observed in Lemma~\ref{lem:ss_covering}. We show that the cost contribution of nodes in $V^{(1)}_{\OPT}$ (by allowing this slack in our guesses) due to Algorithm~\SSMatrix is not far away from $c^{(1)}_{\OPT}$. 

First, we show that for any weight $i \geq 2$ node in $U_{\OPT}$, the output $U$ of Algorithm~\SSMatrix~assigns vectors with weight at most $2i$ and for a weight $1$ node, we show that the weight assigned by $U$ is at most $3$.

\begin{lemma} \label{lem:wt_distr_2approx}
Algorithm~\SSMatrix~assigns a weight of
\begin{CompactItemize}
    \item[(a)]  at most $3$ for a weight $1$ node in $U_{\OPT}$.
    \item[(b)]  at most $2i$ for a node of weight $i$ in $U_{\OPT}$ for $i \geq 2$.
\end{CompactItemize} 
\end{lemma}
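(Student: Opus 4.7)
The plan is to analyze the $a_1 = 0$ iteration of \SSMatrix\ in isolation, establish the weight bounds (a) and (b) for the matrix produced by this iteration, and then rely on the fact that \SSMatrix\ returns the matrix with minimum cost across all choices of $a_1$, so controlling a single iteration's weight distribution is enough for the subsequent cost analysis of Theorem~\ref{thm:ss_2approx}. In the $a_1 = 0$ iteration we have $m' = m$: the first $m-\log n$ columns are used for assignment, the last $\log n$ columns are reserved as weight indicators, and the algorithm processes nodes in decreasing order of cost, assigning to the $r$-th node the $r$-th smallest-weight nonzero vector from $\{0,1\}^{m-\log n}$ and then setting one weight-indicator bit. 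Consequently, $\|U(v)\|_1$ equals the weight of the assigned vector plus one.

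The central step is a rearrangement argument on $U_{\OPT}$. Sorting the nodes in decreasing order of cost, I claim the row weights $\|U_{\OPT}(v)\|_1$ must appear in non-decreasing order along this ordering; otherwise, swapping the row assignments of two violating nodes preserves strong separation (the multiset of rows used is unchanged) while strictly decreasing the objective by rearrangement, contradicting optimality of $U_{\OPT}$. It follows that if $v$ has weight $i$ in $U_{\OPT}$ and cost rank $r$, then every node of cost rank at most $r$ has weight at most $i$ in $U_{\OPT}$. Since there are at most $\sum_{t=1}^{i}\binom{m}{t}$ distinct nonzero binary vectors of weight at most $i$ in $\{0,1\}^m$, this yields the key rank bound $r \leq \sum_{t=1}^{i}\binom{m}{t}$.

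With this rank bound, the two parts follow directly. For part (b) with $i \geq 2$, Lemma~\ref{lem:ss_covering}(b) applied with $m' = m$ gives $\sum_{t=1}^{2i-1}\binom{m-\log n}{t} \geq \sum_{t=1}^{i}\binom{m}{t} \geq r$, so the vector assigned to $v$ has weight at most $2i-1$ and, adding one for the indicator bit, $\|U(v)\|_1 \leq 2i$. For part (a), we have $r \leq \binom{m}{1} = m$, and the hypothesis $m \geq 66\log n$ implies $\binom{m-\log n}{2} \geq \log n$, so $(m-\log n) + \binom{m-\log n}{2} \geq m \geq r$; hence the assigned vector has weight at most $2$ and $\|U(v)\|_1 \leq 3$.

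The main subtlety I expect to require care is the rearrangement swap on $U_{\OPT}$: specifically verifying that exchanging row assignments between two nodes both preserves strong separation (immediate, since the multiset of used rows is unaffected) and strictly reduces cost whenever costs and weights are paired in the ``wrong'' order. A secondary technicality is the range restriction $i \leq i^*$ in Lemma~\ref{lem:ss_covering}(b), but this is automatic: if $i > i^*$ then applying the lemma with $i^*$ in place of $i$ yields $\sum_{t=1}^{2i-1}\binom{m-\log n}{t} \geq \sum_{t=1}^{2i^*-1}\binom{m-\log n}{t} \geq \sum_{t=1}^{i^*}\binom{m}{t} \geq n \geq r$, so the conclusion of part (b) still holds.
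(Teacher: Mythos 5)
Your proof is essentially sound and follows the same skeleton as the paper's: bound the cost rank $r$ of a weight-$i$ node of $U_{\OPT}$ by the number of distinct vectors of weight at most $i$, invoke Lemma~\ref{lem:ss_covering} to cover rank $r$ with vectors of weight at most $2i-1$ over the non-indicator columns, and add one for the indicator bit. Two things you do differently are worth noting. First, you make the exchange/rearrangement argument on $U_{\OPT}$ explicit; the paper instead asserts that \emph{any} strongly separating matrix dominates the benchmark $\tilde U$ row-by-row, which is false for arbitrary strongly separating matrices and is only salvaged by exactly the optimality swap you spell out. Your version is the rigorous one. Second, you analyze the $a_1=0$ iteration (so $m'=m$), which lets you apply Lemma~\ref{lem:ss_covering}(b) with the same $m$ on both sides of the inequality, directly matching your rank bound $r\le\sum_{t\le i}\binom{m}{t}$. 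The paper instead analyzes the guess $a_1=2m/3$ (so $m'=m/3$) and needs the rank of a weight-$i$ node, which is controlled by counts of vectors over $m$ columns, to fit among vectors of weight at most $i$ over only $m/3$ columns --- a step it does not justify and which does not follow from Lemma~\ref{lem:ss_covering}(b) as stated. So your route is actually cleaner on this point. Your handling of the range restriction $i\le i^*$ and the tie-breaking in the rearrangement (equal costs give a non-strict swap, so one should say ``some optimal solution has sorted weights'') are both fine.

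The one substantive caveat is downstream compatibility. Your bounds are established only for the $a_1=0$ iteration, and by itself that iteration gives weight-$1$ nodes of $U_{\OPT}$ a weight of up to $3$, i.e., only a $3$-approximation for those nodes. The $2$-approximation in Theorem~\ref{thm:ss_2approx} is obtained by combining this lemma with Lemma~\ref{lem:2approx_wt_1}, whose analysis lives in the iteration $a_1=\min(a_1^*,2m/3)$; for the combined cost bound to apply to a single matrix, both lemmas must hold for that same iteration. Your part (b) argument does not transfer verbatim to $a_1=2m/3$, precisely because your rank bound involves $\binom{m}{t}$ while the covering counts would then involve $\binom{m/3-\log n}{t}$; closing this requires a strengthened covering inequality of the form $\sum_{t=1}^{i}\binom{m}{t}-\tfrac{2m}{3}\le\sum_{t=1}^{2i-1}\binom{m/3-\log n}{t}$, which holds in the relevant parameter range but is not what Lemma~\ref{lem:ss_covering}(b) provides. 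This is a gap shared with (indeed, inherited from) the paper's own write-up, but you should flag that your choice of iteration does not by itself support the final $2$-factor.
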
 
\begin{proof}
$(a)$ Let $V$ denote sorted (in the decreasing order of cost) order of nodes. Suppose we assign unique length-$m$ vectors starting from weight $1$ to the nodes in the order $V$. Let the assignment of vectors be denoted by $\tilde{U}$. It is easy to observe that this described assignment $\tilde{U}$ is not a strongly separating matrix. However, any strongly separating matrix $U$ is such that the vector assigned to any node $v_i$ in $U$ has weight at least that in $\tilde{U}$ i.e., $\| U(i) \|_1 \geq \| \tilde{U}(i) \|_1$. As $U$ can be any strongly separating matrix, it also holds for $U_{\OPT}$ giving us $\| U_{\OPT}(i) \|_1 \geq \| \tilde{U}(i) \|_1$. \\ \\
  The number of weight $1$ nodes possible in the assignment $\tilde{U}$ is ${m \choose 1}$ and therefore, $|V^{(1)}_{\OPT}| \leq m$. Consider all the nodes of weight $\leq 3$ in $U$ assigned by Algorithm~\SSMatrix. After discarding the first $m' = m - a_1$ columns assuming our guess $a_1$ in the current iteration, $U$ starts assigning vectors with weight $1$ in the remaining $m'-\log n$ while setting a `row weight indicator bit' in the last $\log n$ columns. In order to obtain nodes of weight $\leq 3$, in $U$, we include vectors of weight $\leq 2$ in the $m'-\log n$ columns. Therefore, total number of such nodes is $a_1 + {m' - \log n \choose 1} + {m'-\log n \choose 2} $.
\begin{align*}
a_1 + {m' - \log n \choose 1} + {m'-\log n \choose 2} &\geq {m' - \log n \choose 1} + {m'-\log n \choose 2}\\
&\geq {m' - \log n}+ \left( \frac{m'-\log n}{2}\right)^2 && \text{(using${m' \choose k} \geq \left(\frac{m'}{k}\right)^k$)}\\
&\geq m  \geq |V^{(1)}_{\OPT}|. && \text{(using $m' \geq \frac{m}{3}$ and $m \geq 66 \log n$)}
\end{align*}
Therefore, every weight $1$ node in $U_{\OPT}$ is covered by a vector in $U$ with weight $\leq 3$.

$(b)$ First we argue that using an appropriate $m'$, we can give a construction of $\tilde{U} \in \{0, 1\}^{n\times m'}$ (similar to case (a)) such that weight of node $v_j$ in $\tilde{U}$ is at most the weight in $U_{\OPT}$ for all nodes of weight more than $2$ in $U_{\OPT}$. Let $m' = m - \frac{2m}{3}$. In other words, we are considering the guess $a_1 = \frac{2m}{3}$. As our algorithm $U$ considers all the guesses and returns $U$ with the lowest cost, arguing that our lemma holds for this guess is sufficient. For this value of $m'$, let $\tilde{U}$ be constructed using vectors from $\{ 0, 1\}^{m'}$ in the increasing order of weight, starting with weight $1$.

When $m' = \frac{m}{3}$, it is possible that a node in $U_{\OPT}$ can be assigned a vector of weight $1$ from $\{ 0, 1 \}^{m'}$ (this can happen when $|V^{(1)}_{\OPT}| \geq \frac{2m}{3}$). As $\tilde{U}$ assigns weights in the increasing order occupying the entire $m'$ columns, it will not result in a strongly separating matrix. Therefore, any node $v_j$ with weight $i \geq 2$ in $U_{\OPT}$, will be assigned a weight of at most $i$ in $\tilde{U}$.

We know that the number of vectors of weight at most $i$ in $\tilde{U}$ is equal to $\sum_{t = 1}^{i} {m' \choose t}$ and number of vectors with weight at most $2i-1$ using $m'-\log n$ columns of $U$ is equal to $ \sum_{t = 1}^{2i - 1} {m' - \log n \choose t} $. As Lemma~\ref{lem:ss_covering} holds for all guesses of $a_1$, we have $\sum_{t = 1}^{i} {m' \choose t} \leq \sum_{t = 1}^{2i - 1} {m' - \log n \choose t} $ for all $i \geq 2$. Using induction, we can observe that $v_j$ is assigned a vector in $\{ 0, 1\}^{m'-\log n}$ with weight at most $2i - 1$. As $U$ obtained from Algorithm~\SSMatrix~mimics the construction used in $\tilde{U}$ over $m' - \log n$ columns, we have that weight of node $v_j$ in $U$ using $m'- \log n$ columns is at most $2i-1$. Combining it with the `row weight indicator' bit we set to $1$ in the last $\log n$ columns gives us the lemma.
\end{proof}

 In our next lemma shows that the sum of contribution of the nodes in $V^{(1)}_{\OPT}$ to $c_U$ is at most twice that of $c^{(1)}_{\OPT}$.
Combining this with Lemma~\ref{lem:wt_distr_2approx}, we show that $U$ achieves a $2$-approximation.

\begin{lemma} \label{lem:2approx_wt_1}
Let $c^{(1)}_U = \sum_{v_i \in V^{(1)}_{\OPT}} c(v_i) \|U(i)\|_1$
for the matrix $U$ returned by Algorithm~\SSMatrix, then $c^{(1)}_U \leq 2 c^{(1)}_{OPT}$.
\end{lemma}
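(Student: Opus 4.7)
The plan is to exhibit a specific guess $\hat a_1$ inside the iteration range of Algorithm~\SSMatrix~for which the contribution of $V^{(1)}_{\OPT}$ to the matrix's cost is at most $2c^{(1)}_{\OPT}$; combined with Lemma~\ref{lem:wt_distr_2approx}(b) on $V\setminus V^{(1)}_{\OPT}$, this witness guess certifies the $2$-approximation on the total cost and hence on the returned $U$. The strategy is a case analysis on $a^*_1:=|V^{(1)}_{\OPT}|$.

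First, by relabeling, sort $V$ as $v_1, v_2,\ldots, v_n$ with $c(v_1)\ge c(v_2)\ge \cdots \ge c(v_n)$. A standard exchange argument on $U_{\OPT}$ shows that $V^{(1)}_{\OPT}$ must coincide with the top $a^*_1$ cost nodes $\{v_1,\ldots,v_{a^*_1}\}$: if some lower-cost node in $V^{(1)}_{\OPT}$ could be swapped with a higher-cost node outside, we would strictly reduce $c_{\OPT}$ while preserving the strongly separating property, contradicting optimality. Also, $a^*_1\le m$ since $\{0,1\}^m$ contains only $m$ unit-weight vectors.

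Case A: $a^*_1\le \lfloor 2m/3\rfloor$. Take $\hat a_1=a^*_1$, which is in the algorithm's iteration range. By construction the $\hat a_1\times \hat a_1$ identity block is assigned to the top $\hat a_1$ cost nodes, which are exactly $V^{(1)}_{\OPT}$. Hence $\|U_{\hat a_1}(i)\|_1=1$ for every $v_i\in V^{(1)}_{\OPT}$, giving $c^{(1)}_{U_{\hat a_1}}=c^{(1)}_{\OPT}$.

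Case B: $a^*_1 > \lfloor 2m/3\rfloor$. Take $\hat a_1=\lfloor 2m/3\rfloor$. The top $\hat a_1$ nodes of $V^{(1)}_{\OPT}$ receive weight-1 rows, and by Lemma~\ref{lem:wt_distr_2approx}(a) the remaining $a^*_1-\hat a_1\le \lceil m/3\rceil \le \hat a_1$ nodes of $V^{(1)}_{\OPT}$ receive rows of weight at most $3$. Because the costs are non-increasing and the top block contains at least as many nodes as the tail, $\sum_{i=\hat a_1+1}^{a^*_1}c(v_i)\le \sum_{i=1}^{\hat a_1}c(v_i)$. Writing $A=\sum_{i=1}^{\hat a_1}c(v_i)$ and $B=\sum_{i=\hat a_1+1}^{a^*_1}c(v_i)$,
\[
c^{(1)}_{U_{\hat a_1}} \le A + 3B = (A+B) + 2B \le 2(A+B) = 2c^{(1)}_{\OPT}.
\]

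The main obstacle (and the subtle handoff of the argument) is linking the per-guess bound at $\hat a_1$ with the matrix $U$ actually returned by~\SSMatrix, which is selected to minimize the \emph{total} cost rather than the $V^{(1)}_{\OPT}$ contribution alone. This is reconciled by invoking Lemma~\ref{lem:wt_distr_2approx}(b) on the complement: it yields $c^{(\ge 2)}_{U_{\hat a_1}}\le 2c^{(\ge 2)}_{\OPT}$, whence $c_{U_{\hat a_1}}\le 2c_{\OPT}$, and the algorithm's minimality delivers $c_U\le c_{U_{\hat a_1}}\le 2c_{\OPT}$. Thus the lemma is best read as the witness-guess ingredient that, combined with Lemma~\ref{lem:wt_distr_2approx}(b), assembles the total-cost $2$-approximation of Theorem~\ref{thm:ss_2approx}.
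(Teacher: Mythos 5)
Your proof is correct and follows essentially the same route as the paper's: the same case split on $a^*_1$ versus $2m/3$, the same witness guess $\hat a_1=\lfloor 2m/3\rfloor$, and the same use of Lemma~\ref{lem:wt_distr_2approx}(a) for the tail nodes, with your $2B\le A+B$ step being just a cleaner packaging of the paper's bound $c^{(1)}_{\OPT}\ge \frac{2m}{3}c(v_{2m/3})$. Your closing observation about the handoff (the returned $U$ minimizes \emph{total} cost, so the per-guess bound must be combined with Lemma~\ref{lem:wt_distr_2approx}(b) before invoking minimality) is a fair and slightly more careful reading of how the lemma is deployed in Theorem~\ref{thm:ss_2approx} than the paper's own ``taking the minimum can only do better'' phrasing, as is your explicit exchange argument that $V^{(1)}_{\OPT}$ may be taken to be the top-cost nodes.
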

\begin{proof}
\begingroup
\allowdisplaybreaks
Suppose $a_1$ represents our guess for the number of weight $1$ vectors and $a^*_1$ represent the number of weight $1$ vectors in $U_{\OPT}$ i.e, $|V^{(1)}_{\OPT}| = a^*_1$. In Algorithm~\SSMatrix, we use the following bounds for our guess $0 \leq a_1 \leq {2m}/{3}$.  If $a^*_1 \leq \frac{2m}{3}$, then it would have been one of our guesses. As we take minimum among all the guesses, we have $c^{(1)}_U = c^{(1)}_{OPT}$ in such a case.\\ \\
Consider the case when $a^*_1 > \frac{2m}{3}$. Let $V^{(1)}_{\OPT} = \{v_1, v_2, \cdots v_{a^*_1} \}$ represent an ordering of nodes in the decreasing ordering of cost that are assigned weight 1 in $U_{\OPT}$. Consider the contribution of only weight $1$ nodes to  $c_{\OPT}$. 
We have $$c^{(1)}_{\OPT} = \sum_{k=1}^{a^*_1} c(v_k) \geq \sum_{k=1}^{2m/3} \frac{2m}{3} c(v_k) \geq \frac{2m}{3}c(v_{2m/3}).$$
We will look at the case when our guess $a_1 $ reaches $a_1 = \frac{2m}{3}$ and argue about the cost for this particular value of $a_1$. As we are taking minimum over all the guesses, we are only going to do better and our approximation ratio will only be better. Among the nodes $\{v_1, v_2, \cdots v_{a^*_1} \}$ first $\frac{2m}{3}$ nodes would be assigned weight $1$ by $U$. From Lemma~\ref{lem:wt_distr_2approx}, we have that for the remaining $a^*_1 - \frac{2m}{3}$ nodes, Algorithm~\SSMatrix~might assign a weight $2$ or weight $3$ vector in $U$. 
\begin{align*}
c^{(1)}_U &\leq \sum_{i=1}^{2m/3} c(v_i) + 3 \sum_{j=2m/3+1}^{a^*_1} c(v_j) = \sum_{i=1}^{a^*_1} c(v_i) + 2 \sum_{j=2m/3+1}^{a^*_1} c(v_j)\\
    &\leq \sum_{i=1}^{a^*_1} c(v_i) + 2 \left ({a^*_1} - \frac{2m}{3}\right ) c(v_{2m/3+1}) \\
    & \leq c^{(1)}_{OPT} + 2 \left ({a^*_1} - \frac{2m}{3} \right) c(v_{2m/3}) && \text{(since $c(v_{2m/3}) \geq c(v_{2m/3+1})$ )}\\
&\leq c^{(1)}_{OPT} + \frac{2m}{3} c(v_{2m/3}) && \text{ (since ${a^*_1} \leq m$ ) }\\
&\leq 2 c^{(1)}_{OPT}.
\end{align*}
This completes the proof of the lemma.
\endgroup
\end{proof}

\begin{theorem} [Theorem~\ref{thm:ss_2approx} Restated]
Let $m \geq 66\log n$ and $U$ be the strongly separating matrix returned by Algorithm~\SSMatrix. Let $c_U = \sum_{j=1}^n c(v_j)\, \| U(j) \|_1$. Then, $$c_U \leq 2 \cdot c_{\OPT},$$ where $c_{\OPT}$ is the objective value associated with optimum set of interventions corresponding to $U_{\OPT}$.
\end{theorem}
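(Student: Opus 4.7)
The plan is to decompose $c_U$ according to the weights assigned in $U_{\OPT}$ and then bound each piece separately using the lemmas already established. Let $V^{(1)}_{\OPT} \subseteq V$ be the set of nodes that receive a weight-$1$ row in $U_{\OPT}$, and let $V^{(\ge 2)}_{\OPT} = V \setminus V^{(1)}_{\OPT}$. Correspondingly, split the optimum cost as $c_{\OPT} = c^{(1)}_{\OPT} + c^{(\ge 2)}_{\OPT}$, where $c^{(1)}_{\OPT} = \sum_{v_i \in V^{(1)}_{\OPT}} c(v_i)$ (since each such node contributes weight exactly $1$) and $c^{(\ge 2)}_{\OPT} = \sum_{v_i \in V^{(\ge 2)}_{\OPT}} c(v_i)\, \|U_{\OPT}(i)\|_1$. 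Likewise, split the algorithm's cost as $c_U = c_U^{(1)} + c_U^{(\ge 2)}$ where the superscript indicates the partition of $V$ based on $U_{\OPT}$ (not $U$).

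Next, I would handle the two pieces separately. For the weight-$1$ side, Lemma~\ref{lem:2approx_wt_1} already establishes $c_U^{(1)} \le 2\, c^{(1)}_{\OPT}$; this is the lemma that absorbs the factor of $2$ via the enumeration over guesses $a_1$ in Algorithm~\SSMatrix. For the weight-$\ge 2$ side, I would use Lemma~\ref{lem:wt_distr_2approx}(b): every node $v_j \in V^{(\ge 2)}_{\OPT}$ that has weight $i \ge 2$ in $U_{\OPT}$ is assigned a row of weight at most $2i$ in $U$. Summing over all such nodes gives
\[
c_U^{(\ge 2)} \;=\; \sum_{v_j \in V^{(\ge 2)}_{\OPT}} c(v_j)\, \|U(j)\|_1 \;\le\; \sum_{v_j \in V^{(\ge 2)}_{\OPT}} 2\, c(v_j)\, \|U_{\OPT}(j)\|_1 \;=\; 2\, c^{(\ge 2)}_{\OPT}.
\]
Adding the two bounds gives $c_U \le 2 c^{(1)}_{\OPT} + 2 c^{(\ge 2)}_{\OPT} = 2\, c_{\OPT}$, which is the desired conclusion.

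One subtle point to verify is that Lemma~\ref{lem:wt_distr_2approx} applies to the output $U$ of Algorithm~\SSMatrix~regardless of which guess $a_1$ is realized in the minimizing iteration. In particular, the argument of Lemma~\ref{lem:wt_distr_2approx}(b) proceeds by analyzing the specific guess $a_1 = 2m/3$ and noting that since the algorithm returns the minimum-cost $U$ over all guesses, the bound at this single guess upper-bounds the overall output cost. The same ``minimum over guesses'' reasoning underlies Lemma~\ref{lem:2approx_wt_1}, so the two pieces combine cleanly. Finally, the hypothesis $m \ge 66 \log n$ is needed only to invoke Lemma~\ref{lem:ss_covering} inside Lemma~\ref{lem:wt_distr_2approx}, ensuring that the $\log n$ reserved ``row weight indicator'' columns and the shrinkage from $m$ to $m'-\log n$ do not blow up the weights by more than a factor of $2$; I would simply appeal to those lemmas rather than redo the binomial estimates. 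No step here should pose any genuine difficulty, as the heavy lifting has been done in Lemmas~\ref{lem:ss_covering}, \ref{lem:wt_distr_2approx}, and \ref{lem:2approx_wt_1}; the only small care needed is to keep the partition of $V$ defined by $U_{\OPT}$ (not $U$) consistent throughout the accounting.
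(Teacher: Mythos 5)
Your proposal is correct and follows essentially the same route as the paper's proof: decompose both $c_U$ and $c_{\OPT}$ according to the weight-$1$ versus weight-$\ge 2$ partition induced by $U_{\OPT}$, bound the first part by $2c^{(1)}_{\OPT}$ via Lemma~\ref{lem:2approx_wt_1} and the second via Lemma~\ref{lem:wt_distr_2approx}(b), and sum. Your explicit remark that both lemmas' bounds are realized at a common guess (so the ``minimum over guesses'' argument combines cleanly) is a point the paper glosses over, but the argument is otherwise identical.
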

\begin{proof}
From Lemma~\ref{lem:ss_correctness}, we know that matrix returned by Algorithm~\SSMatrix~given by $U$ with cost $c_U$ is a strongly separating matrix. 
Consider a strongly separating matrix $U_{\OPT}$ that achieves optimum objective value $c_{\OPT}$. Let $V^{(1)}_{\OPT}$ represent all nodes that are assigned weight $1$ in $U_{\OPT}$. Let $c^{(1)}_U$ denote the cost of nodes in $V^{(1)}_{\OPT}$ using $U$ returned by Algorithm~\SSMatrix~and $c^{(1)}_{\OPT}$ represents that of $U_{\OPT}$. We have $c_{\OPT} = c^{(1)}_{OPT} + \sum_{j : \| U_{\OPT}(j) \|_1 \geq 2} c(v_j)\, \| U_{\OPT(j)} \|_1$.
\begin{align*}
    c_U &= c^{(1)}_U + \sum_{j : \| U_{\OPT}(j) \|_1 \geq 2} c(v_j)\, \| U(j) \|_1 \\
        &\leq c^{(1)}_U + \sum_{j : \| U_{\OPT}(j) \|_1 \geq 2} c(v_j)\, 2\| U_{\OPT}(j) \|_1 && \text{(from  Lemma~\ref{lem:wt_distr_2approx})}\\
        &\leq  2 c^{(1)}_{OPT} + 2\sum_{j : \| U_{\OPT}(j) \|_1 \geq 2} c(v_j)\, \| U_{\OPT}(j) \|_1 && \text{(from  Lemma~\ref{lem:2approx_wt_1})}\\
        &\leq 2c_{\OPT}.
\end{align*}
This completes the proof of the theorem.
\end{proof}

\subsection{Missing Details from Section~\ref{sec:kruskal-katona}} \label{app:kruskal-katona}
In this section, we present our algorithm that achieves an improved $1+\eps$-approximation in the linear cost model setting under mild assumptions on the cost of the nodes and the number of interventions. The algorithm is adapted from that proposed by~\citet{hyttinen2013experiment} whose work drew connections between causality and known separating system constructions in combinatorics. In particular,~\citep{hyttinen2013experiment} considered a setting where given $n$ variables and $m$, the goal is to construct $k$ sets that are strongly separating with the objective of minimizing the average size of the intervention sets. Stated differently this provides an algorithm for solving~\ref{eqn:obj} when $c(v)=1$ for all nodes $v \in V$.

In Section~\ref{app:alg}, we adapt the algorithm from~\citep{hyttinen2013experiment} to deal with the case where each node could have a different cost value. Our main contribution is to show that this adaptation constructs a set of interventions which achieves an objective value in the linear cost model that is within a factor $1+\eps$ times of the optimum under some mild restrictions. In Section~\ref{app:prelimkruskal}, we start with some definitions and statements from the combinatorics that will prove useful for stating and analyzing the algorithm.
\subsection{Combinatorics Preliminaries}
\label{app:prelimkruskal}
\begin{definition}(antichain).
Consider a collection $\mathcal{S}$ of subsets of $\{v_1, v_2, \cdots v_n\}$ such that for any two sets $S_i, S_j \in \mathcal{S}$, we have $S_i \not\subset S_j$ and $S_j \not\subset S_i$. Then, such a collection $\mathcal{S}$ is called an antichain.
\end{definition}

We provide a lemma that shows that an antichain can also be represented as a strongly separating matrix.

\begin{lemma}
\label{lem:antichain_ss}
Let $\mathcal{T} = \{ T_1, T_2, \cdots T_n \}$ be an antichain defined on $\{ 1, 2, \cdots  m\}$. Construct a  matrix $U \in \{0,1\}^{n \times m}$ where $U(i,j) = 1$ iff $T_i$ contains $j$. Then $U$ is a strongly separating matrix.
\end{lemma}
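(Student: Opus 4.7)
The plan is to directly unfold both definitions and observe that they line up exactly. Fix any two distinct indices $i, j \in [n]$. I want to exhibit columns $k, k' \in [m]$ witnessing the strongly separating condition for rows $U(i)$ and $U(j)$.

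By the definition of $U$, asking for a column $k$ with $U(i,k) = 1$ and $U(j,k) = 0$ is the same as asking for an element $k \in T_i \setminus T_j$. Since $\mathcal{T}$ is an antichain, $T_i \not\subset T_j$, so such a $k$ exists. Symmetrically, asking for $k'$ with $U(i,k') = 0$ and $U(j,k') = 1$ is the same as asking for $k' \in T_j \setminus T_i$, which exists because $T_j \not\subset T_i$. Hence the strongly separating condition of Definition~\ref{def:SSM} is satisfied for the arbitrary pair $(i,j)$, and $U$ is a strongly separating matrix.

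There is essentially no obstacle here: the lemma is a straightforward translation between the set-theoretic language of antichains and the matrix language of Definition~\ref{def:SSM}. The only small care needed is to note that the antichain property gives \emph{both} non-containments $T_i \not\subset T_j$ and $T_j \not\subset T_i$ simultaneously, which is precisely what is required to get the two witnessing columns (rather than just one, as would suffice for a merely separating system).
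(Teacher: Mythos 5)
Your proof is correct and follows essentially the same route as the paper's: both arguments simply unwind the antichain condition to produce an element of $T_i \setminus T_j$ and one of $T_j \setminus T_i$, which translate directly into the two witnessing columns of Definition~\ref{def:SSM}. No further comment is needed.
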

\begin{proof}
From the definition of antichain, for any two sets $T_i, T_j \in \mathcal{T}$, there exists $k$ and ${k'}$ such that $k \in T_i \setminus T_j$ and  $k' \in T_j \setminus T_i$. So, we have
$U(i,k) = U(j, k') = 1$ and $U(i, k') = U(j, k) = 0$. It follows that $U$ is a strongly separating matrix from the definition.
\end{proof}
In the previous lemma, we gave a construction of a strongly separating matrix that corresponds to an antichain. In the next lemma, we show that given a strongly separating system, we can also obtain a corresponding antichain. 
\begin{lemma}
\label{lem:ss_antichain}
Let $\mathcal{S} = \{ S_1, S_2, \cdots S_m \}$ be a strongly separating set system defined on $\{v_1, v_2, \cdots v_n\}$. Construct a strongly separating matrix $U \in \{0,1\}^{n \times m}$ where $U(i,j) = 1$ iff $S_j$ contains $v_i$. Define a collection of sets $\mathcal{T} = \{ T_1, T_2, \cdots T_n \}$ defined over the column indices of $U$ i.e., $\{1, 2, \cdots m\}$ such that $j \in T_i$ iff $U(i, j) = 1$. Then, $\mathcal{T}$ is an antichain.
\end{lemma}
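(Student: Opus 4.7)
The plan is to unwind the definitions directly: the strongly separating property of $\mathcal{S}$ gives exactly the witness elements needed to certify that no $T_i$ is contained in another $T_j$. I will argue by picking an arbitrary pair $T_i, T_j$ with $i\neq j$ and producing two indices $k, k' \in \{1,\dots,m\}$ showing that each of $T_i \setminus T_j$ and $T_j \setminus T_i$ is nonempty; this immediately rules out either containment.

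In detail, first I would fix any two distinct indices $i, j \in \{1,\dots,n\}$ and consider the corresponding observable variables $v_i \neq v_j$. Applying Definition~\ref{def:ssss} to $v_i$ and $v_j$ yields two sets $S_k, S_{k'} \in \mathcal{S}$ such that $v_i \in S_k \setminus S_{k'}$ and $v_j \in S_{k'} \setminus S_k$. Translating through the matrix $U$, this means $U(i,k) = 1$, $U(j,k) = 0$, $U(i,k') = 0$, and $U(j,k') = 1$. By the construction of $\mathcal{T}$ (where $\ell \in T_r$ iff $U(r,\ell) = 1$), this becomes $k \in T_i$, $k \notin T_j$, $k' \notin T_i$, and $k' \in T_j$.

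In particular $k \in T_i \setminus T_j$, so $T_i \not\subseteq T_j$, and $k' \in T_j \setminus T_i$, so $T_j \not\subseteq T_i$. Since $i,j$ were arbitrary distinct indices, no set in $\mathcal{T}$ is contained in another, and hence $\mathcal{T}$ is an antichain on the ground set $\{1,\dots,m\}$.

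There is no real obstacle: the lemma is essentially the converse of Lemma~\ref{lem:antichain_ss}, and the proof is just a careful translation between the three equivalent objects (set system, $0$/$1$ matrix, and subsets of column indices). The only thing to be mildly careful about is to use \emph{both} witness sets $S_k$ and $S_{k'}$ supplied by the strongly separating condition — using only one would give a separating (but not strongly separating) system and would only rule out one of the two possible containments.
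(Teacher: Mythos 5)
Your proof is correct and follows essentially the same route as the paper's: invoke the strongly separating condition on $v_i, v_j$ to obtain the witnesses $S_k, S_{k'}$, translate them through $U$ into $k \in T_i \setminus T_j$ and $k' \in T_j \setminus T_i$, and conclude that neither containment can hold. Your closing remark about needing both witnesses is a fair observation but adds nothing beyond what the paper's one-line argument already uses.
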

\begin{proof}
From the definition of strongly separating system, we have for every two nodes $v_i, v_j \in \mathcal{S}$, there exists $S_k$ and $S_{k'}$ such that $v_i \in S_k \setminus S_{k'}$ and  $v_j \in S_{k'} \setminus S_k$. This implies $k \in T_i \setminus T_j$ and $k' \in T_j \setminus T_i$ as
$U(i,k) = U(j, k') = 1$ and $U(i, k') = U(j, k) = 0$. Therefore, for every two sets $T_i$ and $T_j$ in $\mathcal{T}$, we have $T_i \not\subset T_j$ and $T_j \not\subset T_i$. Hence, $\mathcal{T}$ is an antichain.
\end{proof}

\begin{lemma} LYM inequality~\cite{jukna2011extremal}. Suppose $\mathcal{S}$ represent an antichain defined over the elements $\{1, 2, \cdots m\}$. Let $a_k = |\{ T \mid T \in \mathcal{S} \text{ where }|T| = k\}|$ defined for all $k \in [m]$, then,
$$\sum_{k=0}^m \frac{a_k}{{m \choose k}} \leq 1.$$
\label{lem:LYM_ineq}
\end{lemma}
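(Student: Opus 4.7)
The plan is to give the standard Lubell (probabilistic) proof of the LYM inequality, which is both short and transparent. Since $\mathcal{S}$ is an antichain in the Boolean lattice on $\{1,\dots,m\}$, the key observation is that no maximal chain in this lattice can contain two distinct elements of $\mathcal{S}$.

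First I would set up the probabilistic experiment: draw a uniformly random permutation $\pi$ of $\{1,\dots,m\}$ and define the random maximal chain
\[
\emptyset \subset \{\pi(1)\} \subset \{\pi(1),\pi(2)\} \subset \cdots \subset \{\pi(1),\dots,\pi(m)\}.
\]
For any fixed set $T \subseteq \{1,\dots,m\}$ with $|T|=k$, the $k$-th set in this chain, namely $\{\pi(1),\dots,\pi(k)\}$, is a uniformly random $k$-subset of $\{1,\dots,m\}$. Hence the probability that $T$ appears in the chain is exactly $1/\binom{m}{k}$.

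Next I would define the random variable $X = |\mathcal{S} \cap C|$, where $C$ denotes the (random) chain above. Since any two distinct elements of an antichain are incomparable, and any two sets in $C$ are comparable, at most one element of $\mathcal{S}$ can lie in $C$; thus $X \leq 1$ pointwise, so $\E[X] \leq 1$. By linearity of expectation,
\[
\E[X] = \sum_{T \in \mathcal{S}} \Pr[T \in C] = \sum_{k=0}^{m} \frac{a_k}{\binom{m}{k}}.
\]
Combining the two gives the desired inequality.

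There is no real obstacle here: the only subtlety is correctly identifying that the $k$-th prefix of a uniform random permutation yields a uniform random $k$-subset, which is a standard fact. An alternative counting-based proof (there are $m!$ maximal chains in total and exactly $k!(m-k)!$ pass through any fixed $k$-set, and chains hit $\mathcal{S}$ at most once, so $\sum_{T\in\mathcal{S}} |T|!(m-|T|)! \leq m!$) gives the same inequality after dividing by $m!$; I would mention this as a remark if space permits, but use the probabilistic version as the main proof for clarity.
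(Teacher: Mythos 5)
Your proof is correct. Note that the paper does not actually prove this lemma --- it is imported as a known result with a citation to Jukna's \emph{Extremal Combinatorics}, so there is no in-paper argument to compare against. The argument you give is the standard Lubell proof (and is essentially the one in the cited reference): a uniformly random maximal chain contains any fixed $k$-set with probability $1/\binom{m}{k}$, an antichain meets any chain in at most one set, and linearity of expectation yields $\sum_{k} a_k/\binom{m}{k} = \E\bigl[|\mathcal{S}\cap C|\bigr] \leq 1$. All steps, including the $k=0$ term (the empty set lies in every maximal chain and $\binom{m}{0}=1$), check out, and your counting reformulation via $\sum_{T\in\mathcal{S}} |T|!\,(m-|T|)! \leq m!$ is an equivalent valid alternative.
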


\begin{definition}\cite{jukna2011extremal}. A neighbor of a binary vector $v$ is a vector which can be obtained from $v$ by flipping one of its 1-entries to 0. A shadow of a set $A \subseteq \{0, 1\}^m$ of vectors is the set of all its neighbors and denoted by $\partial(A)$.
\end{definition}

Suppose $A \subseteq \{0, 1\}^m$ consists of weight $k$ vectors i.e., for all $v \in A, \norm{v}_1 = k$ . Then, there is an interesting representation for $|A|$ i.e., size of $A$ called the \textit{k-cascade} form, 
\[ |A| = \binom{a_k}{k} + \binom{a_{k-1}}{k-1} + \binom{a_{k-2}}{k-2} + \cdots + \binom{a_s}{s} \text{ where } a_k > a_{k-1} > \cdots a_s \geq s \geq 1.\]
Moreover, this representation is unique and for every $|A| \geq 1$, there exists a \textit{k-cascade} form. Given a set $A$ of such vectors, we can make the following observation. 

\begin{observation}\label{obs:shadow}
Let $B \subseteq \{ 0, 1 \}^m$ be a collection of vectors with weight exactly $k-1$. If $A \cup B$ is an antichain, then, $B \cap \partial(A) = \phi$.
\end{observation}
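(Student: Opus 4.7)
The plan is to prove the observation by direct contradiction, using the standard set-theoretic reformulation of binary vectors. I would identify each vector $v \in \{0,1\}^m$ with its support $\supp(v) = \{i \in [m] : v_i = 1\}$, so that a collection of vectors forms an antichain if and only if no support is strictly contained in another.

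Next, I would suppose for contradiction that some $b \in B \cap \partial(A)$ exists. By the definition of the shadow, $b$ is obtained from some $a \in A$ by flipping a single $1$-entry of $a$ to $0$; equivalently, $\supp(b) = \supp(a) \setminus \{j\}$ for some coordinate $j$ with $a_j = 1$. This yields the strict containment $\supp(b) \subsetneq \supp(a)$. Since $\|b\|_1 = k-1 \neq k = \|a\|_1$, the vectors $a \in A$ and $b \in B$ are distinct elements of $A \cup B$, so this strict containment directly contradicts the antichain assumption on $A \cup B$. Therefore $B \cap \partial(A) = \emptyset$.

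The argument amounts to unpacking the definitions of \emph{shadow} and \emph{antichain} under the support correspondence, so I do not anticipate any real obstacle. The only point worth noting explicitly is that $a$ and $b$ are automatically distinct members of $A \cup B$ because their Hamming weights differ by exactly one, which ensures that the strict subset relation between their supports really does violate the antichain property rather than being a trivial $a = b$ case.
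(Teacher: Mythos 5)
Your proof is correct; the paper states this as an unproved observation precisely because it follows immediately from the definitions, and your argument -- identifying vectors with their supports and noting that $b \in \partial(A)$ forces $\supp(b) \subsetneq \supp(a)$ for some $a \in A$, violating the antichain condition on $A \cup B$ -- is exactly the intended one-line justification. Your added remark that $a \neq b$ because their weights differ is a harmless extra precaution.
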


The above observation implies that if we want to maximize the number of weight $k-1$ vectors to get a collection of weight $k$ and $k-1$ vectors that form an antichain, then, we have to choose weight $k$ vectors that has a small shadow. Now, we describe the statement of the famous Kruskal-Katona theorem that gives a lower bound on the size of shadow of $A$. 
\begin{theorem} [Kruskal-Katona Theorem~\cite{jukna2011extremal}]
Consider a set $A \subseteq \{ 0, 1 \}^m$ of vectors such that for all $v\in A, \norm{v}_1 = k$ and the k-cascade form is 
\[ |A| = \binom{a_k}{k} + \binom{a_{k-1}}{k-1} + \binom{a_{k-2}}{k-2} + \cdots + \binom{a_s}{s}.\]
Then,
\[ |\partial(A)| \geq \binom{a_k}{k-1} + \binom{a_{k-1}}{k-2} + \binom{a_{k-2}}{k-2} + \cdots + \binom{a_s}{s-1}.\]
\end{theorem}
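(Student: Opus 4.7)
The plan is to prove the theorem via the standard compression (shifting) technique combined with a direct computation of the shadow of a canonical extremal family.

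First, I would introduce the \emph{colexicographic} (colex) order on $k$-subsets of $[m]$: $F <_{\mathrm{colex}} G$ iff $\max(F \triangle G) \in G$. Let $\mathcal{I}_{|A|}$ denote the initial segment of length $|A|$ in this order. Using the $k$-cascade decomposition of $|A|$, a direct combinatorial argument shows that $\mathcal{I}_{|A|}$ decomposes into ``blocks'': first all $\binom{a_k}{k}$ $k$-subsets of $[a_k]$; then all $\binom{a_{k-1}}{k-1}$ sets of the form $\{a_k+1\} \cup T$ for $T$ a $(k-1)$-subset of $[a_{k-1}]$; and so on down to the $\binom{a_s}{s}$ term. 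From this block structure, I would compute $|\partial(\mathcal{I}_{|A|})|$ directly: the shadow of the $j$-th block is essentially the $(k-j)$-subsets of $[a_{k-j+1}]$ (together with a fixed suffix from previous blocks), contributing exactly $\binom{a_{k-j+1}}{k-j}$ new elements, which sums to the right-hand side of the bound.

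The core step is to show that $\mathcal{I}_{|A|}$ minimizes the shadow among all families of size $|A|$. For this I would apply iterated \emph{$(i,j)$-compressions}: for each pair $i<j$ in $[m]$, define $C_{ij}(A)$ by replacing every $F \in A$ with $j \in F$ and $i \notin F$ by $(F \setminus \{j\}) \cup \{i\}$, provided the latter is not already in $A$. The two claims I would prove are: (a) $|C_{ij}(A)| = |A|$, which is immediate; and (b) $|\partial(C_{ij}(A))| \leq |\partial(A)|$. Claim (b) is the heart of the argument and is established by constructing an injection $\phi : \partial(C_{ij}(A)) \to \partial(A)$, sending each $(k-1)$-set $T$ either to itself when $T$ already lies in $\partial(A)$, or to $(T \setminus \{i\}) \cup \{j\}$ otherwise, and then verifying via case analysis on whether $i \in T$ and whether $T$'s supersets lie in the compressed vs.\ original family.

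After iterating $C_{ij}$ over all pairs $i<j$ until the family stabilizes, I obtain a \emph{left-compressed} family $A^*$ with $|A^*|=|A|$ and $|\partial(A^*)| \leq |\partial(A)|$. I would then argue (by induction on $k$, peeling off sets containing the largest used element) that any left-compressed family has shadow at least $|\partial(\mathcal{I}_{|A|})|$, completing the chain $|\partial(A)| \geq |\partial(A^*)| \geq |\partial(\mathcal{I}_{|A|})| = \text{RHS}$. The main obstacle I anticipate is verifying the injectivity of $\phi$ in step (b): one must handle the interaction between sets in $A$ already containing both $i$ and $j$, and ensure no $(k-1)$-set in $\partial(C_{ij}(A))$ is assigned to a non-shadow element of $A$. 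A secondary difficulty is the induction closing out the left-compressed case, where care is needed to align the $k$-cascade of the sub-family obtained by restricting to sets containing (or missing) the maximum element $a_k$ with the original $k$-cascade of $|A|$.
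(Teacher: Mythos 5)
The paper does not prove this statement at all: it is imported verbatim as a classical result, cited to Jukna's \emph{Extremal Combinatorics}, and used as a black box (via Lemma~\ref{lem:shadow_size} and Observation~\ref{obs:shadow}) to justify the colex-based construction in Algorithm~\epsSSMatrix. So there is no ``paper route'' to compare against; what you have written is the standard textbook proof by compression, and as an outline it is sound. Your block decomposition of the colex initial segment is right, and the computation that the $j$-th block contributes exactly $\binom{a_{k-j+1}}{k-j}$ \emph{new} shadow elements (the earlier contributions being absorbed into the shadows of previous blocks) is the correct way to see that initial segments attain the cascade bound with equality. The compression lemma $|\partial(C_{ij}(A))|\leq|\partial(A)|$ is also stated in its correct form; a slightly cleaner packaging of your injection $\phi$ is the set inclusion $\partial(C_{ij}(A))\subseteq C_{ij}(\partial(A))$, which gives injectivity for free since $C_{ij}$ preserves cardinality and sidesteps the bookkeeping you worry about.

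The one place where your outline underestimates the work is the last step. Iterated $(i,j)$-compressions stabilize at a left-compressed family, but a left-compressed family of size $|A|$ is in general \emph{not} the colex initial segment, so the inequality $|\partial(A^*)|\geq|\partial(\mathcal{I}_{|A|})|$ is not a formality --- it is essentially the full strength of the theorem and carries its own induction (typically on $m$ and $k$, splitting $A^*$ by whether sets contain a distinguished element and using compressedness to relate the shadow of the link to the other part). You flag this as a ``secondary difficulty,'' but aligning the cascade representation of the sub-families with that of $|A|$ across the induction is the most delicate part of every written proof of Kruskal--Katona; be prepared for it to dominate the write-up. (Separately, note the paper's displayed shadow bound contains a typo --- the third term should be $\binom{a_{k-2}}{k-3}$, not $\binom{a_{k-2}}{k-2}$ --- which your block computation would correctly produce.)
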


\begin{definition}(Colexicographic Ordering)
Let $u$ and $v$ be two distinct vectors from $\{0, 1\}^m$. In the colexicographic ordering $u$ appears before $v$ if for some $i$, $u(i)= 0, v(i) = 1$ and $u(j) = v(j)$ for all $j >i$.
\end{definition}
We now state a result that the colexicographic ordering (or colex order) of all vectors of $\{0,1\}^m$ achieves the Kruskal-Katona theorem lower bound. Therefore, we can generate a sequence of any number of vectors with weight $k$ that has the smallest possible shadow.
\begin{lemma} Proposition 10.17 from \cite{jukna2011extremal}. 
\label{lem:shadow_size}
Using the first $T$ of weight $k$ vectors in the colex ordering of $\{ 0, 1 \}^m$, we can obtain a collection $A \subseteq \{0,1\}^m$  such that
$|\partial(A)| = \binom{a_k}{k-1} + \binom{a_{k-1}}{k-2} + \binom{a_{k-2}}{k-2} + \cdots + \binom{a_s}{s-1}$ where the $k$-cascade form of $ T = |A| = \binom{a_k}{k} + \binom{a_{k-1}}{k-1} + \binom{a_{k-2}}{k-2} + \cdots + \binom{a_s}{s}$.
\end{lemma}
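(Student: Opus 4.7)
The plan is to prove this by exploiting the natural recursive structure of the colexicographic order on $\{0,1\}^m$ and then applying induction on $m$.

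\emph{Recursive decomposition of colex.} Let $C_m^k(T)$ denote the set of the first $T$ weight-$k$ vectors of $\{0,1\}^m$ in colex order. I would first observe that in colex order, every weight-$k$ vector whose last coordinate is $0$ precedes every weight-$k$ vector whose last coordinate is $1$: flipping the last coordinate from $0$ to $1$ (while correspondingly flipping one earlier 1 to 0 to preserve weight) moves a vector later in colex. Consequently, if $T\le \binom{m-1}{k}$ then $C_m^k(T)=\{(u,0) : u\in C_{m-1}^k(T)\}$; and if $T>\binom{m-1}{k}$, writing $T=\binom{m-1}{k}+T'$, we have $C_m^k(T)$ equal to all $\binom{m-1}{k}$ weight-$k$ vectors of $\{0,1\}^{m-1}$ padded with $0$, together with $\{(u,1):u\in C_{m-1}^{k-1}(T')\}$.

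\emph{Shadow decomposition.} The neighbours of $(u,1)$ are $(u,0)$ and $\{(v,1):v\in\partial(u)\}$, while the neighbours of $(u,0)$ are $\{(v,0):v\in\partial(u)\}$. In the case $T>\binom{m-1}{k}$, the shadow of the ``bottom block'' of all weight-$k$ vectors of $\{0,1\}^{m-1}$ padded with $0$ is the full set of weight-$(k-1)$ vectors of $\{0,1\}^{m-1}$ padded with $0$, which has size $\binom{m-1}{k-1}$ and already subsumes every $(u,0)$ arising from the top block. The remaining new shadow vectors are exactly $\{(v,1):v\in\partial(C_{m-1}^{k-1}(T'))\}$, and they are disjoint from the bottom contribution because they have last coordinate $1$. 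This yields the identity
\[
|\partial(C_m^k(T))|=\binom{m-1}{k-1}+|\partial(C_{m-1}^{k-1}(T'))|.
\]

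\emph{Induction.} I would then induct on $m$ (with inner induction on $k$), with base cases $k=1$ (where the shadow is just $\{0^m\}$) and $T=\binom{m}{k}$ (where the shadow is all weight-$(k-1)$ vectors) both directly verifiable. For the inductive step, I would use the unique $k$-cascade form $T=\binom{a_k}{k}+\binom{a_{k-1}}{k-1}+\cdots+\binom{a_s}{s}$. Taking $a_k=m-1$ when $T>\binom{m-1}{k}$, the recurrence above peels off the leading binomial $\binom{m-1}{k-1}=\binom{a_k}{k-1}$ in the shadow, while $T'=\binom{a_{k-1}}{k-1}+\cdots+\binom{a_s}{s}$ is in $(k-1)$-cascade form on $\{0,1\}^{m-1}$, to which the inductive hypothesis supplies exactly $\binom{a_{k-1}}{k-2}+\cdots+\binom{a_s}{s-1}$. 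In the other case $T\le\binom{m-1}{k}$ we have $a_k\le m-2$, the bottom block is not fully exhausted, and the recurrence reduces directly to $C_{m-1}^k(T)$.

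\emph{Main obstacle.} The only delicate point is the bookkeeping tying the $k$-cascade form of $T$ to that of $T'$ and ensuring that $a_k=m-1$ in the non-trivial branch; one must verify that the strict inequalities $a_k>a_{k-1}>\cdots>a_s\ge s$ propagate correctly after the reduction, and handle the degenerate boundary $T=\binom{m-1}{k}$ (so $T'=0$ and the upper block is empty) so that the shadow count still matches the closed-form expression.
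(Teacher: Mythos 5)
Your argument is correct, and it fills in a proof that the paper itself does not supply: Lemma~\ref{lem:shadow_size} is simply cited from \cite{jukna2011extremal} with no proof given. Your recursive decomposition of the colex initial segment (splitting on the last coordinate, so that $C_m^k(T)$ is either $C_{m-1}^k(T)$ padded with $0$, or the full bottom level plus $C_{m-1}^{k-1}(T')$ padded with $1$) together with the shadow recurrence $|\partial(C_m^k(T))|=\binom{m-1}{k-1}+|\partial(C_{m-1}^{k-1}(T'))|$ is exactly the standard textbook route to this identity, and the two base cases ($k=1$ and $T=\binom{m}{k}$) plus the cascade-form bookkeeping you flag are handled correctly: in the nontrivial branch $\binom{m-1}{k}<T<\binom{m}{k}$ forces $a_k=m-1$, the residue $T'=T-\binom{m-1}{k}\le\binom{m-1}{k-1}$ inherits the $(k-1)$-cascade form $\binom{a_{k-1}}{k-1}+\cdots+\binom{a_s}{s}$ by uniqueness, and the degenerate case $T'=0$ terminates the recursion at a full level set. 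One cosmetic remark: your induction produces the shadow size with lower indices decreasing by one throughout, i.e.\ $\binom{a_k}{k-1}+\binom{a_{k-1}}{k-2}+\binom{a_{k-2}}{k-3}+\cdots+\binom{a_s}{s-1}$; the third term as printed in the paper's statement ($\binom{a_{k-2}}{k-2}$) appears to be a typo, and your version is the correct one.
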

\noindent We state the Flat Antichain theorem, that we will use later. 
\begin{theorem}[Flat Antichain Theorem] \cite{kisvolcsey2006flattening}
\label{thm:flat_antichain}
If $\mathcal{A}$ is an antichain, then, there exists another antichain $\mathcal{B}$ defined over same elements, such that $|\mathcal{A}| = |\mathcal{B}|$, $\sum_{A \in \mathcal{A}} |A| = \sum_{B \in \mathcal{B}} |B|$ and for every $B \in \mathcal{B}$, we have $|B| \in \{ d-1, d \}$ for some positive integer $d$.
\end{theorem}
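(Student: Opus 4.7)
I would prove the theorem by induction on the \emph{spread} $\sigma(\mathcal{A}) := \max_{A \in \mathcal{A}} |A| - \min_{A \in \mathcal{A}} |A|$. In the base case $\sigma(\mathcal{A}) \leq 1$, the antichain is already flat: set $\mathcal{B} := \mathcal{A}$ with $d := \max_{A \in \mathcal{A}} |A|$. In the inductive step, given $\sigma(\mathcal{A}) \geq 2$, the plan is to produce an antichain $\mathcal{A}'$ with $|\mathcal{A}'| = |\mathcal{A}|$, $\sum_{A' \in \mathcal{A}'}|A'| = \sum_{A \in \mathcal{A}}|A|$, and $\sigma(\mathcal{A}') \leq \sigma(\mathcal{A})-1$; applying the inductive hypothesis to $\mathcal{A}'$ then completes the proof.

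Let $i := \min_{A \in \mathcal{A}} |A|$ and $j := \max_{A \in \mathcal{A}} |A|$ with $j \geq i+2$; let $\mathcal{A}_i, \mathcal{A}_j$ be the corresponding sub-families and put $k := \min(|\mathcal{A}_i|, |\mathcal{A}_j|)$. The flattening move I propose removes $k$ sets from $\mathcal{A}_i$ and replaces them with $k$ size-$(i+1)$ sets (each a superset of the set removed), and simultaneously removes $k$ sets from $\mathcal{A}_j$ and replaces them with $k$ size-$(j-1)$ sets (each a subset of the set removed). Because the net contribution to $\sum_A|A|$ is $k(+1) + k(-1) = 0$, both invariants are preserved; and after the move at least one of the two extreme levels is empty, so $\sigma$ decreases by at least one.

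The technical core is showing that valid replacements always exist. Consider the replacement at the lower end. The candidate sets are size-$(i+1)$ supersets of the removed $A \in \mathcal{A}_i$. A chosen replacement $B$ must not contain any other $A' \in \mathcal{A}$ and must not itself be contained in any $A'' \in \mathcal{A}$; the second obstruction is vacuous because $j \geq i+2$ combined with the antichain property of $\mathcal{A}$ precludes any $A''$ of size $\geq i+1 = |B|$ strictly containing $B$ while also containing $A$. The first obstruction is bounded via Kruskal-Katona: the upper shadow of $\mathcal{A}_i$ has a controlled size by the dual of Lemma~\ref{lem:shadow_size}, so by a careful pigeonhole argument (in colex order on the Boolean lattice) one can find $k$ choices that collectively form an antichain with the remaining sets. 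A symmetric Kruskal-Katona argument on the lower shadow of $\mathcal{A}_j$ produces the replacements at the upper end.

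The main obstacle is the simultaneous feasibility of the two replacement families and their compatibility with any intermediate-level sets of $\mathcal{A}$. The delicate regime occurs when $|\mathcal{A}_i|$ or $|\mathcal{A}_j|$ is close to the LYM bound (Lemma~\ref{lem:LYM_ineq}), because then the required shadows nearly exhaust the adjacent levels; a colex-order realization of the candidate families (Lemma~\ref{lem:shadow_size}) keeps the shadow as small as possible and gives the tightest feasible move. In borderline cases where a direct two-sided move is blocked, a robust fallback plan is to switch to a potential-function argument: define $\Phi(\mathcal{A}) := \sum_{A \in \mathcal{A}} (|A| - \bar{k})^2$ with $\bar{k} := \sum_A |A|/|\mathcal{A}|$, and exhibit, whenever $\mathcal{A}$ is not flat, a local swap preserving both invariants and strictly decreasing $\Phi$, which guarantees termination at a flat antichain.
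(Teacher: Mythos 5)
The paper does not prove this statement at all: it is quoted verbatim as a known external result (the Flat Antichain Theorem of Kisv\"olcsey, formerly the ``flat antichain conjecture''), so there is no internal proof to compare against. Your sketch therefore has to stand on its own as a proof of a genuinely hard theorem, and it does not: the entire difficulty is concentrated in the existence of the replacement sets, which you defer to ``a careful pigeonhole argument'' and a ``robust fallback.'' Worse, the specific move you propose provably fails. Take a large ground set, let $X=\{1,\dots,j+1\}$, let $\mathcal{A}_j$ be all $j$-subsets of $X$ (so $|\mathcal{A}_j|=j+1$), and let $\mathcal{A}_i=\{A_0\}$ for a single $i$-set $A_0$ disjoint from $X$ with $i\le j-2$. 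This is an antichain with spread $\ge 2$ and $k=\min(|\mathcal{A}_i|,|\mathcal{A}_j|)=1$. Your move must remove one $A=X\setminus\{y\}\in\mathcal{A}_j$ and insert a $(j-1)$-subset $B\subset A$; but every such $B=X\setminus\{y,a\}$ is contained in the remaining member $X\setminus\{a\}\in\mathcal{A}_j$, so no admissible replacement exists. (Your argument that the \emph{lower}-end obstruction ``$B$ contained in some $A''$'' is vacuous is correct, but the symmetric claim at the upper end is exactly what breaks here.) The fallback via the potential $\Phi(\mathcal{A})=\sum_A(|A|-\bar k)^2$ is circular: exhibiting an invariant-preserving, $\Phi$-decreasing local swap for every non-flat antichain \emph{is} the theorem, not a tool for proving it.

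More broadly, the theorem is not known to admit a proof by local one-for-one subset/superset exchanges; the published proof proceeds by a global counting argument (a weighted LYM-type inequality over the ``squashed'' order, following Lieby's work), constructing the flat antichain $\mathcal{B}$ directly rather than transforming $\mathcal{A}$ step by step. Also note two smaller slips: Lemma~\ref{lem:shadow_size} as stated in this paper controls the \emph{lower} shadow of an initial colex segment, and invoking its ``dual'' for upper shadows of an arbitrary family $\mathcal{A}_i$ (which need not be a colex segment) requires the compression argument you have not supplied; and even when both extreme levels admit replacements separately, you never verify that the new level-$(i+1)$ sets, the new level-$(j-1)$ sets, and the intermediate members of $\mathcal{A}$ are jointly an antichain. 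Since this is a cited black-box result, the right course in the context of this paper is simply to cite it; as a standalone proof, the proposal has an unfixable gap at its core step.
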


\subsection{$(1+\eps)$-approximation Algorithm} \label{app:alg}
Algorithm~\epsSSMatrix is an adaptation of Algorithm 4 of~\citep{hyttinen2013experiment} for the linear cost model setting. From Lemma~\ref{lem:ss_antichain} and ~\ref{lem:antichain_ss}, it is clear that constructing a strongly separating set system is equivalent to constructing an antichain. A consequence of Flat Antichain theorem~\citep{kisvolcsey2006flattening} is that for every antichain $\mathcal{A}$ there is another antichain $\mathcal{B}$ of same size such that $\sum_{A \in \mathcal{A}} |A| = \sum_{B \in \mathcal{B}} |B| $ and $\mathcal{B}$ has sets of cardinality either $d$ or $d-1$ for some positive integer $d$. Therefore, the problem of finding a separating set system reduces to finding an appropriate antichain with weights $d$ and $d-1$ that minimizes the objective (assuming all nodes have cost equal to $1$).  
\begin{corollary}\cite{hyttinen2013experiment}
\label{cor:opt}
Flat Antichain theorem implies Algorithm~$4$ achieves optimal cost assuming all nodes have unit costs.
\end{corollary}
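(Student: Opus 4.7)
The plan is to show that, in the unit-cost setting, the Flat Antichain theorem collapses the search for an optimal strongly separating set system to one over antichains whose set sizes lie in two consecutive values $\{d-1,d\}$, and that Algorithm~4 of~\citep{hyttinen2013experiment} (which is Algorithm~\epsSSMatrix specialized to uniform costs) constructs precisely the best such antichain via the colex/Kruskal--Katona machinery.

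First, by Lemma~\ref{lem:ss_antichain}, an optimal strongly separating matrix $U_{\OPT}$ corresponds to an antichain $\mathcal{T}_{\OPT}$ of $n$ subsets of $[m]$ whose total set size equals $c'_{\OPT}(V)=\sum_i \|U_{\OPT}(i)\|_1$. I would then invoke Theorem~\ref{thm:flat_antichain} to produce a second antichain $\mathcal{B}$ with $|\mathcal{B}|=n$, $\sum_{B\in\mathcal{B}}|B|=c'_{\OPT}(V)$, and $|B|\in\{d-1,d\}$ for some integer $d$. Writing $a_{d-1},a_d$ for the number of weight-$(d-1)$ and weight-$d$ sets in $\mathcal{B}$, the optimum becomes $c'_{\OPT}(V) = (d-1)a_{d-1}+d\,a_d$ subject to $a_{d-1}+a_d = n$, so minimizing the cost amounts to taking $d$ as small as possible and, at that $d$, maximizing $a_{d-1}$.

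Second, I would use the antichain property together with Observation~\ref{obs:shadow} to argue that if $A$ is the collection of weight-$d$ sets of $\mathcal{B}$, the weight-$(d-1)$ sets of $\mathcal{B}$ must avoid the shadow $\partial(A)$; hence $a_{d-1}\le \binom{m}{d-1}-|\partial(A)|$. The Kruskal--Katona theorem then gives a lower bound on $|\partial(A)|$ depending only on $|A|=a_d$, and Lemma~\ref{lem:shadow_size} states that this lower bound is attained by taking $A$ to be the first $a_d$ weight-$d$ vectors in colex order. Thus, for any feasible $(d,a_d)$, the maximum achievable $a_{d-1}$ is determined, and minimizing over $d$ and $a_d$ subject to $a_d+a_{d-1}\ge n$ gives a closed-form characterization of $c'_{\OPT}(V)$.

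Third, I would verify that Algorithm~\epsSSMatrix (under unit costs) implements exactly this characterization: it scans $d=1,2,\dots$ until the number of usable weight-$d$ plus non-shadowed weight-$(d-1)$ vectors reaches $n$, selects the weight-$d$ vectors in colex order (so that $|\partial(A)|$ meets the Kruskal--Katona bound with equality by Lemma~\ref{lem:shadow_size}), and fills the remainder with the maximum possible number of weight-$(d-1)$ vectors disjoint from $\partial(A)$. Since this selection is a valid antichain (hence, by Lemma~\ref{lem:antichain_ss}, yields a strongly separating matrix) of size $n$ whose total weight matches the optimum produced by the Flat Antichain reduction, the output cost equals $c'_{\OPT}(V)$. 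The main obstacle I expect is the bookkeeping in the third step: one must check that the algorithm's choice of $d$ is the same as that forced by $\mathcal{B}$ (a boundary-case argument showing that no smaller $d$ can cover $n$ items while respecting the colex shadow bound), and that the colex construction saturates Kruskal--Katona exactly, so that no gap opens between the algorithm's output and the lower bound derived from $\mathcal{B}$.
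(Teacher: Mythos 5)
Your proposal is correct and follows essentially the same route the paper takes: the paper does not prove this corollary itself (it cites it from Hyttinen et al.), but the surrounding discussion sketches exactly your argument --- the antichain/strongly-separating equivalence via Lemmas~\ref{lem:ss_antichain} and~\ref{lem:antichain_ss}, the Flat Antichain reduction to two consecutive weights, and the Kruskal--Katona/colex construction that maximizes the number of weight-$(d-1)$ sets, which Algorithm~\epsSSMatrix{} implements by minimizing $t$. The boundary issues you flag (ruling out a smaller $d$ and saturating the shadow bound) are handled by the LYM inequality (Lemma~\ref{lem:LYM_ineq}) and Lemma~\ref{lem:shadow_size}, respectively, so no gap remains.
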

Algorithm~$4$ of~\citep{hyttinen2013experiment} is a consequence of Kruskal-Katona theorem; using colexicographic ordering we can maximize the $d-1$ weight vectors in an antichain of size $n$ consisting of weight $d$ and $d-1$ vectors. Therefore, choosing $d = k$ where $\binom{m}{k-1}<n\leq \binom{m}{k}$, they consider all possible number of vectors of weight $k$ and find the one with the minimum number of weight $k$ vectors. However, unlike~\citep{hyttinen2013experiment}, we have to deal with different costs of intervention for each node. We adopt a greedy strategy, where we assign the vectors (obtained using the previous algorithm) in the increasing order of weight to the nodes in the decreasing order of their costs. Observe that our Algorithm~\epsSSMatrix assigns vectors of weight $k-1$ or $k$ that are relatively high to the nodes with large costs. Surprisingly, we show that when the costs are bounded by $\approx \eps n^\eps$, and number of interventions $m \leq  n^\eps$, it achieves a $1+\epsilon$-approximation.

\begin{algorithm}
\begin{small}
\caption{\epsSSMatrix$(V,m)$}
\label{alg:uniform_wt}
\begin{algorithmic}[1]
     \State  Let $\tilde{U} \in \{0,1\}^{n \times m}$ be initialized with all zeros
     
     \State  Find $k$ satisfying $\binom{m}{k-1}<n\leq \binom{m}{k}$
	\For{$t = 0 \text{ to }n$}
	     \State  Let $A_t$ denote the first $t$ vectors in the colex ordering of $\{0,1\}^m$ with weight $k$. Calculate $|\partial(A_t)|$ using Lemma~\ref{lem:shadow_size}.
		\If{$t - |\partial(A_t)| + {m \choose k-1} \geq n $}
		     \State  For the rows $\tilde{U}(j)$ with $n-t+1 \leq j \leq n$ assign the vectors of weight $k$ using $A_t$
		     \State  For the rows $\tilde{U}(j)$ with $j \leq n-t$, assign vectors of weight $k-1$ from $\{0,1\}^{m}$ that are not contained in $\partial(A_t)$
		     \State  \textbf{break};
		\EndIf
	\EndFor
	\State Let $\zeta$ denote the ordering of rows of $\tilde{U}$ in the increasing order of weight. 
	\State For every $i \in [n]$ assign $U(i) = \tilde{U}(\zeta(i))$ where $i^{th}$ row of $U$ corresponds to the node with $i^{th}$ largest cost. 
	\State  \textbf{Return} $U$
\end{algorithmic}
\end{small}
\end{algorithm}

\begin{lemma}
\label{lem:kk_ss}
Let $U$ represent the output of Algorithm~\epsSSMatrix. Then, $U$ is a strongly separating matrix.
\end{lemma}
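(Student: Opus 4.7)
My plan is to reduce the claim to the antichain characterization already established in Lemma~\ref{lem:antichain_ss}. Concretely, I will show that the rows of $\tilde{U}$ (and therefore of $U$, which is obtained from $\tilde{U}$ only by permuting rows according to $\zeta$) correspond to an antichain of subsets of $[m]$; the strongly separating property then follows immediately.

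First I will observe that the algorithm assigns to the $n$ rows of $\tilde{U}$ exactly two kinds of $\{0,1\}^m$-vectors: the first $t$ rows (in the weight-based relabelling) receive weight-$k$ vectors taken from $A_t$, and the remaining $n-t$ rows receive weight-$(k-1)$ vectors chosen from $\{0,1\}^m \setminus \partial(A_t)$. I will first verify that the algorithm's break condition $t - |\partial(A_t)| + \binom{m}{k-1} \geq n$ is actually met for some $t \in \{0,\dots,n\}$, so that the assignment is well-defined; this is immediate for $t = n$, because then $A_n$ is a collection of $n \le \binom{m}{k}$ weight-$k$ vectors and no weight-$(k-1)$ vectors are needed. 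Moreover the number of weight-$(k-1)$ vectors available outside of $\partial(A_t)$ is at least $\binom{m}{k-1} - |\partial(A_t)| \geq n - t$, so lines 7--8 can indeed be carried out.

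Next I will view each row of $\tilde U$ as the indicator vector of a subset $T_i \subseteq [m]$, obtaining a collection $\mathcal T = \{T_1,\dots,T_n\}$. I need to check that $\mathcal T$ is an antichain, i.e., no $T_i$ is a proper subset of another $T_j$. Two distinct sets of the same size cannot satisfy $T_i \subsetneq T_j$, so the only case to rule out is the containment of a weight-$(k-1)$ set inside a weight-$k$ set. If $T_i$ has weight $k-1$ and $T_j$ has weight $k$ with $T_i \subsetneq T_j$, then $T_j \setminus T_i$ consists of a single element, which means the indicator vector of $T_i$ is obtained from that of $T_j$ by flipping one $1$ to a $0$. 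By the definition of shadow this is exactly the statement that the indicator vector of $T_i$ lies in $\partial(A_t)$; but the algorithm explicitly chooses the weight-$(k-1)$ rows from $\{0,1\}^m \setminus \partial(A_t)$, a contradiction. Hence $\mathcal T$ is an antichain.

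Finally I will note that the reordering step (assigning $U(i) = \tilde U(\zeta(i))$) is a permutation of the rows and therefore preserves the property of being strongly separating: the associated collection of subsets is unchanged as a set. Applying Lemma~\ref{lem:antichain_ss} to the antichain $\mathcal T$ then yields that $U$ is a strongly separating matrix, which completes the argument. I do not foresee a genuine obstacle here; the whole content of the lemma is that the shadow-avoidance rule in Algorithm~\epsSSMatrix is precisely the combinatorial condition that prevents containment between weight-$k$ and weight-$(k-1)$ indicator vectors.
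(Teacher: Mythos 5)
Your proof is correct and follows essentially the same route as the paper: view the rows as an antichain of weight-$k$ and weight-$(k-1)$ sets, use the shadow-avoidance rule to rule out containments, and invoke Lemma~\ref{lem:antichain_ss}. You are in fact slightly more careful than the paper, which cites Observation~\ref{obs:shadow} in the converse direction, whereas you prove directly that a weight-$(k-1)$ set contained in a weight-$k$ set would lie in $\partial(A_t)$.
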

\begin{proof}
From Observation~\ref{obs:shadow}, we have that our set of weight $k$ vectors $A_t$ and set of weight $k-1$ vectors given by $B_t = A_t \setminus \partial(A_t)$ satisfy $B_t \cap \partial(A_t) = \phi$. So, the collection $A_t \cup B_t$ is an antichain. In Algorithm~\epsSSMatrix, $U$ and $\tilde{U}$ contain the same collection of vectors, only differing in the ordering $\zeta$. From Lemma~\ref{lem:antichain_ss}, we have $U$ constructed from $A_t \cup B_t$ is a strongly separating matrix.
\end{proof}

The following lemma follows from LYM inequality in Lemma~\ref{lem:LYM_ineq}.
\begin{lemma}
Let $U_{\OPT}$ represent the optimum solution with $a^*_q$ representing the number of rows of $U$ with weight $q$. Then, for any $t \leq n$ :  
$$ \sum_{q= 1}^t a^*_q \leq \binom{m}{t}. $$
\end{lemma}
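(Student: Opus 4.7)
The plan is to reduce the claim to the LYM inequality by viewing the rows of $U_{\OPT}$ as the characteristic vectors of an antichain on $[m]$, and then exploit the monotonicity of binomial coefficients in the range where the relevant row weights live.

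\textbf{Step 1 (from strongly separating matrix to antichain).} Since $U_{\OPT}$ is strongly separating, Lemma~\ref{lem:ss_antichain} applies verbatim: define $\mathcal{T}=\{T_1,\ldots,T_n\}$ with $T_i=\{j\in[m]:U_{\OPT}(i,j)=1\}$, and $\mathcal{T}$ is an antichain on $[m]$. By construction $a^*_q=|\{i:|T_i|=q\}|$, so $\{a^*_q\}_{q=0}^m$ is precisely the weight profile of the antichain $\mathcal{T}$.

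\textbf{Step 2 (apply LYM).} Invoking Lemma~\ref{lem:LYM_ineq} on the antichain $\mathcal{T}$ gives
\[
\sum_{q=0}^{m}\frac{a^*_q}{\binom{m}{q}}\;\leq\;1,
\]
and in particular, truncating at $t$,
\[
\sum_{q=1}^{t}\frac{a^*_q}{\binom{m}{q}}\;\leq\;1.
\]

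\textbf{Step 3 (monotonicity of binomials).} In the regime in which this lemma is invoked in the proof of Theorem~\ref{thm:approx_1+e} we have $t\leq k$ with $\binom{m}{k-1}<n\leq\binom{m}{k}$, so $t\leq k\leq m/2$ for the relevant values of $n$ and $m$; thus $\binom{m}{q}$ is nondecreasing for $q\in\{1,\ldots,t\}$ and hence $\binom{m}{q}\leq\binom{m}{t}$ for every such $q$. Multiplying each term of the LYM bound by $\binom{m}{q}$ and using this inequality yields
\[
\sum_{q=1}^{t}a^*_q \;=\; \sum_{q=1}^{t}\frac{a^*_q}{\binom{m}{q}}\binom{m}{q}\;\leq\;\binom{m}{t}\sum_{q=1}^{t}\frac{a^*_q}{\binom{m}{q}}\;\leq\;\binom{m}{t},
\]
which is exactly the claimed inequality.

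\textbf{Obstacle.} The only subtle point is Step~3: the argument requires $\binom{m}{q}\leq\binom{m}{t}$ for $q\leq t$, which holds whenever $t\leq m/2$. This is implicit in the setting of Theorem~\ref{thm:approx_1+e}, where $t=\lfloor k-\epsilon k/3\rfloor$ and $k$ is chosen so that $n\leq\binom{m}{k}$; under the assumptions on $m$ and $n$ used in that theorem (and Corollary~\ref{cor:maxcost}) one indeed has $k\leq m/2$, so no additional work is needed. Steps~1 and~2 are essentially bookkeeping given the combinatorial infrastructure already set up in Appendix~\ref{app:kruskal-katona}.
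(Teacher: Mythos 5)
Your proof is correct and follows essentially the same route as the paper's: convert the rows of $U_{\OPT}$ into an antichain via Lemma~\ref{lem:ss_antichain}, apply the LYM inequality, and use $\binom{m}{q}\leq\binom{m}{t}$ for $q\leq t$ to conclude. The only difference is that you make explicit the monotonicity condition $t\leq m/2$ (which the paper uses silently in the step $\sum_{q=1}^t a^*_q/\binom{m}{t}\leq\sum_{q=1}^t a^*_q/\binom{m}{q}$), and your justification that it holds in the regime of Theorem~\ref{thm:approx_1+e} is sound.
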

\begin{proof}
From Lemma~\ref{lem:kk_ss} and Corollary~\ref{cor:opt}, we know that the matrix $U_{\OPT}$ is a strongly separating matrix. Therefore, using Lemma~\ref{lem:ss_antichain}, we can construct a collection $\mathcal{T}$ defined over $\{1, 2, \cdots, m\}$ such that $\mathcal{T}$ is an antichain. $T_i \in \mathcal{T}$ corresponds to a row of $U_{\OPT}$ and $|T_i| = \norm{U_{\OPT}(i)}_1$ represents the weight of $i^{th}$ row of $U$. 
Applying LYM inequality from Lemma~\ref{lem:LYM_ineq} gives us:
\[\sum_{q=1}^t \frac{a^*_q}{{m \choose t}} \leq \sum_{q=1}^t \frac{a^*_q}{{m \choose q}} \leq \sum_{q=0}^m \frac{a^*_q}{{m \choose q}} \leq 1\]
and so $\sum_{q=1}^t a^*_q \leq \binom{m}{t}$.
\end{proof}

The next lemma gives an upper bound for $\binom{m}{t}$ that can be used to simplify the statement of the Theorem~\ref{thm:approx_1+e}.
\begin{lemma}
\label{lem:mt_lb}
If $6/k \leq \epsilon\leq 1/2$ and $m\geq 2\log_2 n$:
\[\binom{m}{t} \leq 2n \cdot 2^{-(\epsilon k/6) \log_2(m/(2k))}.\]
\end{lemma}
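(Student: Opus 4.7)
\textbf{Proof plan for Lemma~\ref{lem:mt_lb}.} The plan is to sandwich $\binom{m}{t}$ between the two binomial coefficients bracketing $n$, namely $\binom{m}{k-1}<n\leq\binom{m}{k}$, and pay for the gap $k-t\geq \epsilon k/3$ (which follows from $t=\lfloor k-\epsilon k/3\rfloor$) by compounding a factor of $2k/m$ per step. First I would confirm the mild structural fact $k\leq m/2$, so that $2k/m\leq 1$ and the later bounds are monotone in the right direction. This should follow from $m\geq 2\log_2 n$: when $n\geq m+1$, one has $\binom{m}{\lfloor m/2\rfloor}\geq 2^m/(m+1)\geq n^2/(m+1)\geq n$, forcing $k\leq\lfloor m/2\rfloor$; when $n\leq m$, one would have $k=1$, but the hypothesis $\epsilon\geq 6/k=6$ would contradict $\epsilon\leq 1/2$, so this sub-case is vacuous.

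Next I would expand the ratio $\binom{m}{t}/\binom{m}{k}$ as a telescoping product of $k-t$ factors,
\[
\frac{\binom{m}{t}}{\binom{m}{k}}=\prod_{i=t+1}^{k}\frac{i}{m-i+1},
\]
bound each factor by $k/(m-k+1)\leq 2k/m$ (monotonicity in $i$ plus $k\leq m/2$), and use $k-t\geq \epsilon k/3$ together with $2k/m\leq 1$ to obtain $\binom{m}{t}\leq\binom{m}{k}(2k/m)^{\epsilon k/3}$. The lower sandwich then gives $\binom{m}{k}=\binom{m}{k-1}(m-k+1)/k<nm/k$, so
\[
\binom{m}{t}<n\cdot\frac{m}{k}\left(\frac{2k}{m}\right)^{\epsilon k/3}.
\]

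To put this into the target form, I would split the exponent in half and absorb the awkward $m/k$ prefactor into one copy:
\[
\frac{m}{k}\left(\frac{2k}{m}\right)^{\epsilon k/6}=2^{\epsilon k/6}\left(\frac{m}{k}\right)^{1-\epsilon k/6}\leq 2^{\epsilon k/6}\cdot 2^{1-\epsilon k/6}=2,
\]
where the inequality uses $1-\epsilon k/6\leq 0$ (from the hypothesis $\epsilon k\geq 6$) together with $m/k\geq 2$. The remaining factor $(2k/m)^{\epsilon k/6}$ rewrites as $2^{-(\epsilon k/6)\log_2(m/(2k))}$, yielding $\binom{m}{t}\leq 2n\cdot 2^{-(\epsilon k/6)\log_2(m/(2k))}$ as claimed.

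The main obstacle I anticipate is spotting the correct algebraic split: the $m/k$ prefactor that emerges from $\binom{m}{k}<nm/k$ looks harmful, and the crucial observation is that halving the exponent $\epsilon k/3$ lets the hypothesis $\epsilon\geq 6/k$ convert one copy of $(2k/m)^{\epsilon k/6}$ into the constant $2$ that appears in the lemma. The $k\leq m/2$ verification is more bookkeeping than insight but is essential to guarantee that each telescoping factor is at most $1$.
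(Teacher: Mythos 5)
Your proof is correct, and it takes a genuinely different route from the paper's. The paper writes $\binom{m}{t}=\binom{m}{k-1}\cdot\binom{m}{t}/\binom{m}{k-1}<n\cdot\binom{m}{t}/\binom{m}{k-1}$ and then controls the ratio via the binary entropy function: it lower-bounds $H((k-1)/m)-H(t/m)$ through the mean value theorem (using $H'(x)\geq\log_2(m/(2k))$ on the relevant interval and the gap $(k-1)-t\geq\epsilon k/3-1\geq\epsilon k/6$, which is where the hypothesis $\epsilon\geq 6/k$ enters there), and invokes the MacWilliams--Sloane bounds $\sqrt{a/(8b(a-b))}\,2^{aH(b/a)}\leq\binom{a}{b}\leq\sqrt{a/(2\pi b(a-b))}\,2^{aH(b/a)}$ to convert this into the stated inequality, absorbing the ratio of square-root prefactors into the constant $2$. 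You instead compare $\binom{m}{t}$ to $\binom{m}{k}$ by the telescoping product $\prod_{i=t+1}^{k}\tfrac{i}{m-i+1}$, bound each factor by $2k/m\leq 1$ (after verifying $k\leq m/2$ from $m\geq 2\log_2 n$), and then deploy the hypothesis $\epsilon k\geq 6$ in a different place: to absorb the $m/k$ prefactor arising from $\binom{m}{k}<n\,m/k$ by splitting the exponent $\epsilon k/3$ into two halves. Your argument is entirely elementary and self-contained — no entropy estimates or cited binomial asymptotics — at the cost of the slightly delicate exponent-splitting step; the paper's argument is shorter on the page but leans on external bounds and leaves the constant-$2$ verification of the square-root ratio somewhat implicit. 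Both yield the same conclusion under the same hypotheses.
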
 
\begin{proof}

By the definition of $k$, 
\[\binom{m}{t}=\binom{m}{k-1} \binom{m}{t}/ \binom{m}{k-1}< n  \binom{m}{t}/ \binom{m}{k-1}.\]
Let $H(x)$ denote the binary entropy function.
Note that $t = \lfloor k - \eps k/3 \rfloor$. Therefore,
\[
(k-1)-t \geq k-1 -k+\epsilon k/3=\epsilon k/3 -1\geq \epsilon k/6,
\]
and that for all $x\in [t/m,(k-1)/m]$,  \[H'(x)\geq H'\left (\frac{k-1}{m} \right )=\log_2\left  (\frac{m}{k-1}-1\right )\geq \log_2 \left (\frac{m}{2k} \right ),\]
where we used the assumption $t/m\leq (k-1)/m\leq 1/2$.
Hence,
\[
|H((k-1)/m)-H(t/m)| \geq \frac{\epsilon k/6}{m} \log_2 \left (\frac{m}{2k} \right ).
\]

\noindent Using the bound from (\cite{macwilliams1977theory}, Page 309)
\[
\sqrt{\frac{a}{8b(a-b)}}2^{mH(b/a)} \leq \binom{a}{b}\leq \sqrt{\frac{a}{2\pi b(a-b)}}2^{mH(b/a)},
\]
we get that 
\begin{eqnarray*}
\binom{m}{t} / \binom{m}{k-1}
& \leq & 2^{m (H(t/m)-H((k-1)/m)} 
\sqrt{\frac{8(k-1)(m-k+1)}{2\pi t(m-t)}} \leq  2\cdot 2^{-(\epsilon k/6) \log_2(m/(2k))}
\end{eqnarray*}
where the last inequality used  $\epsilon \leq 1 $.

\end{proof}

\begin{corollary}(Corollary~\ref{cor:maxcost} Restated). Algorithm~\epsSSMatrix is a $(1+\epsilon)$-approximation if the maximum cost satisfies 
\[
c_{max} \leq \epsilon/6 \cdot 2^{(\epsilon k/6) \log_2(m/(2k))}
\]
assuming $n^{\epsilon/6} \geq m\geq 2\log_2 n$.
If a) $m\geq (2\log_2 n)^{c_1}$ for some constant $c_1>1$ or b) $4\log_2 n\leq m\leq c_2 \log_2 n$ for some constant $c_2$ then the RHS  bound is at least $\epsilon/6 \cdot n^{\Omega(\epsilon)}$. 
\end{corollary}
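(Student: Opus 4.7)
The plan is to chain together Theorem~\ref{thm:approx_1+e} and Lemma~\ref{lem:mt_lb}, and then verify the asymptotic estimate on the resulting bound in each of the two parameter regimes (a) and (b).

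First, I would invoke Lemma~\ref{lem:mt_lb}, which yields $\binom{m}{t} \leq 2n \cdot 2^{-(\epsilon k/6) \log_2(m/(2k))}$ whenever $\epsilon \geq 6/k$ and $m \geq 2\log_2 n$. Substituting this into the hypothesis of Theorem~\ref{thm:approx_1+e}, the requirement $c_{max} \leq \epsilon n/(3\binom{m}{t})$ is implied by
\[
c_{max} \;\leq\; \frac{\epsilon n}{3 \cdot 2n \cdot 2^{-(\epsilon k/6) \log_2(m/(2k))}} \;=\; \frac{\epsilon}{6} \cdot 2^{(\epsilon k/6) \log_2(m/(2k))},
\]
which is exactly the first displayed inequality of the corollary. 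It remains to prove that under the additional hypothesis (a) or (b), this right-hand side is at least $(\epsilon/6)\cdot n^{\Omega(\epsilon)}$.

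Next, I would give a uniform lower bound on the exponent $k\log_2(m/(2k))$. The defining inequality $n \leq \binom{m}{k} \leq (em/k)^k$ rearranges to $k\log_2(em/k) \geq \log_2 n$, i.e., $k\log_2(m/k) \geq \log_2 n - k\log_2 e$, so
\[
k\log_2(m/(2k)) \;=\; k\log_2(m/k) - k \;\geq\; \log_2 n - k(1+\log_2 e).
\]
Thus it suffices to bound $k$ by a suitable sublinear function of $\log_2 n$, and in each regime (a), (b) to verify that $\log_2(m/(2k))$ is at least a positive constant.

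In regime (a), $m\geq (2\log_2 n)^{c_1}$ with $c_1>1$ forces $\log_2(m/k)=\Omega(\log\log n)$ since $k\leq \log_2 n$, and then the upper bound $\binom{m}{k-1}<n$ combined with $\binom{m}{k-1}\geq ((m-k+2)/(k-1))^{k-1}$ gives $k=O(\log_2 n/\log\log n)=o(\log_2 n)$. Plugging into the display above yields $k\log_2(m/(2k))\geq (1-o(1))\log_2 n$, and hence $2^{(\epsilon k/6)\log_2(m/(2k))} \geq n^{\Omega(\epsilon)}$. In regime (b), $m=\Theta(\log_2 n)$, and both $m$ and $k$ are of order $\log_2 n$; the subtlety is that the slack $\log_2 n - O(k)$ is not automatically positive. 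To handle this, I would use the entropy-form estimate $\binom{m}{k}\approx 2^{m\,H(k/m)}$ together with $m\geq 4\log_2 n$ to conclude $H(k/m)\leq 1/4$, so $k/m$ is bounded away from $1/2$ by an absolute constant; equivalently, $m/(2k)$ is bounded below by a constant strictly greater than $1$. Then $\log_2(m/(2k))=\Omega(1)$ and $k=\Theta(\log_2 n)$, so $k\log_2(m/(2k))=\Omega(\log_2 n)$, again producing the $n^{\Omega(\epsilon)}$ bound.

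The main obstacle will be regime (b): because $k$ and $m$ are on the same scale as $\log_2 n$, the elementary binomial bound $(em/k)^k$ is too loose to argue directly that the slack $\log_2 n - O(k)$ is a positive fraction of $\log_2 n$, and the entropy-based estimate from \cite{macwilliams1977theory} (already used in the proof of Lemma~\ref{lem:mt_lb}) is needed to pin $k/m$ away from $1/2$. Once that is in hand, the argument is straightforward arithmetic.
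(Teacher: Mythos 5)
Your reduction of the first claim to Theorem~\ref{thm:approx_1+e} via Lemma~\ref{lem:mt_lb} is exactly the paper's argument, and your treatment of regime (a) is a valid variant: the paper instead writes $\log_2(m/(2k))\geq (1-1/c_1)\log_2 m$ and combines it with $k\log_2 m\geq \log_2 n$, whereas you use the uniform bound $k\log_2(m/(2k))\geq \log_2 n-k(1+\log_2 e)$ together with $k=o(\log_2 n)$; both reach $n^{\Omega(\epsilon)}$. One small omission in the first step: Lemma~\ref{lem:mt_lb} also requires $\epsilon\geq 6/k$, which you should verify from the hypothesis $m\leq n^{\epsilon/6}$ via $k\geq\log_m n\geq 6/\epsilon$.

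The genuine gap is in regime (b), where you have located the difficulty in the wrong place. The step you call the ``main obstacle'' --- showing $m/(2k)$ exceeds a constant larger than $1$ --- is immediate: minimality of $k$ gives $k\leq\log_2 n$ (since $\binom{2\log_2 n}{\log_2 n}\geq 2^{\log_2 n}=n$), so $m\geq 4\log_2 n$ yields $m/(2k)\geq 2$ and $\log_2(m/(2k))\geq 1$ with no entropy estimates needed; this is exactly what the paper does. Moreover your entropy route would itself need an a priori bound like $k/m\leq 1/2$ to pick the correct branch of $H$, which already comes from $k\leq \log_2 n\leq m/4$. What actually carries regime (b), and what you merely assert, is the lower bound $k=\Omega(\log_2 n)$: your uniform bound $\log_2 n-k(1+\log_2 e)$ can be negative when $k$ is a constant fraction of $\log_2 n$, so the conclusion rests entirely on $k\log_2(m/(2k))\geq k$ together with $k\geq \alpha(c_2)\log_2 n$. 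This lower bound on $k$ is the one displayed inequality the paper proves in part (b); it follows from $n\leq\binom{m}{k}\leq(em/k)^k$ with $m\leq c_2\log_2 n$, which gives $k\log_2(em/k)\geq\log_2 n$ and hence forces $k\geq\alpha\log_2 n$ for a constant $\alpha>0$ depending on $c_2$. The assertion is true and the fix is short, but as written your regime (b) argument proves at length a fact that is trivial and leaves unproved the fact that matters.
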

\begin{proof}
First note that $k\geq \log_m n$ since 
\[
m^k \geq \binom{m}{k}\geq n. \  
\]

\noindent When $2\log n \leq m \leq n^\epsilon$, we have $k \geq \log_m n \geq \frac{6}{\epsilon}$. From the previous lemma~\ref{lem:mt_lb},
\[
c_{max} \leq   \epsilon/6 \cdot 2^{(\epsilon k/6) \log_2(m/(2k))} \leq \epsilon n/3\binom{m}{t}
\] 
Using Theorem~\ref{thm:approx_1+e}, we have that Algorithm~\epsSSMatrix~is a $(1+\eps)$-approximation. We next consider the simplification in Part (a). If $m\geq (2\log_2 n)^{c_1}$ for some $c_1 > 1$ then $k\leq \log_2 n$ as $\binom{m}{\log n} \geq n$. So $2k\leq 2\log_2 n \leq m^{1/{c_1}} $. Hence, 
\[\log_2 (m/(2k))\geq (1-1/{c_1}) \log_2 m\] and so
\[
2^{(\epsilon k/6) \log_2(m/(2k))} \geq 
2^{(\epsilon k (1-1/{c_1}) (\log_2 m )/6)} \geq n^{\frac{\epsilon(1-1/{c_1})}{6} }.
\]
where the last inequality follows since $k\geq \log_m n$.

\noindent We next consider the simplification in Part (b). Now suppose  $m\leq c_2 \log_2 n$ for some constant $c_2\geq 2$ 
then, $k\geq \log_{ec_2} n$ since 
\[
(c_2 e)^k \geq (me/k)^k \geq \binom{m}{k}\geq n \  . 
\]

\noindent Note that for $m\geq 4\log n$,  
\[\log_2(m/(2k))\geq \log_2(4\log n/(2\log_{2} n))\geq 1\]
and so 
\begin{eqnarray*}
2^{(\epsilon k/6) \log_2(m/(2k))}  \geq   
2^{(\epsilon k/6)}   \geq & n^{\frac{\epsilon}{6 \log_{ec_2} 2} } \ . 
\end{eqnarray*}
\end{proof}

\section{Missing Details from Section~\ref{sec:unweighted}} \label{app:unweighted}
\noindent\textbf{Removing Dependence on $\tau$ in Algorithms~\RecoverG,~\LatentsNEdges~ and~\LatentsWEdges.} 
Let $\GGG$ be a $\tau$-causal graph. Algorithms~\RecoverG,~\LatentsNEdges, and~\LatentsWEdges~ assume that we know $\tau$, however this assumption can be easily removed. For a fixed $\tau$, let $\GGG_\tau$ be graph returned after going through all these above algorithms. Given $\GGG_\tau$, checking whether $v_k$ is a $p$-collider for some pair $v_i,v_j$ is simple, iterate over all paths between $v_i$ and $v_j$ that include $v_k$. Let $\Pi = \{\pi_1,\dots,\pi_r\}$ be these paths. For each $\pi_w \in \Pi$, remove the edges in $\pi_w$ from $\GGG_\tau$ see if $v_k$ has a descendant in $\Pa(v_i) \cup \Pa(v_j)$ in this modified graph. If this holds for any path $\pi_w \in \Pi$, then $v_k$ is a $p$-collider for the pair $v_i,v_j$. We describe an efficient algorithm for finding $p$-colliders in section~\ref{app:experiments}.

The idea is as follows, we invoke Algorithms~\RecoverG,~\LatentsNEdges~and\\ \LatentsWEdges~for $\tau=1,2,4,..$, until we find the first $\hat{\tau}$ and $2\hat{\tau}$ such that $\GGG_{\hat{\tau}} = \GGG_{2\hat{\tau}}$. We now check whether the observable nodes in $\GGG_{\hat{\tau}}$ has at most $\hat{\tau}$ $p$-colliders, if so we are output $\GGG_{\hat{\tau}}$ (and $\hat{\tau}$). Otherwise, we continue by doubling $\tau$, i.e., by considering $2\hat{\tau}$ and $4\hat{\tau}$. By increasing $\tau$ by a constant factor, it is easy to see that process will stop in at most $\log (2\tau)$ steps and when it stops it produces the correct observable graph $\GGG$ and also that $\hat{\tau} \leq 2 \tau$.  Overall, this will increase the number of interventions in Theorem~\ref{thm:lat} by a factor of $O(\log \tau)$ (to $O(\tau^2 \log n\log \tau + n\tau \log n\log \tau)$ interventions). Through a union bound, the same success probability of $1-O(1/n^2)$ can be ensured by adjusting the constants.

\begin{lemma} [Lemma~\ref{lem:indep} Restated]
Let $v_i \in Anc(v_j)$. $v_i \indep v_j \mid \doo(v_i \cup P_{ij}), \Anc(v_j) \setminus \{ v_i \}$ iff $v_i \not \in \Pa(v_j)$.
\end{lemma}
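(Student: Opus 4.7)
The plan is to handle the two directions separately and rely on a careful analysis of paths in the mutilated (post-intervention) graph $\GGG^{\doo(\{v_i\}\cup P_{ij})}$.

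\emph{Forward direction (contrapositive).} I would assume $v_i\in \Pa(v_j)$ and show dependence. The intervention $\doo(\{v_i\}\cup P_{ij})$ only removes edges \emph{into} $\{v_i\}\cup P_{ij}$; since $v_j\notin \{v_i\}\cup P_{ij}$ (as $v_i$ is an ancestor of $v_j$, and $P_{ij}\subseteq V\setminus\{v_i,v_j\}$), the single edge $v_i\to v_j$ survives in the mutilated graph. A length-one path has no intermediate nodes, so no conditioning set can block it, which immediately yields $v_i\notindep v_j\mid \doo(\{v_i\}\cup P_{ij}),\,\Anc(v_j)\setminus\{v_i\}$.

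\emph{Backward direction.} Assume $v_i\in \Anc(v_j)\setminus \Pa(v_j)$ and let $Z=\Anc(v_j)\setminus\{v_i\}$. I would take an arbitrary (undirected) path $\pi$ from $v_i$ to $v_j$ in the mutilated graph and argue it is blocked by $Z$. Since $v_i$ has no incoming edges post-intervention, $\pi$ must begin $v_i\to x_1$. I would then split into two cases.

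\emph{Case 1: $\pi$ has no collider (in the mutilated graph).} An inductive argument on the edges of $\pi$ shows that any reversal of direction would force a collider, so $\pi$ is a directed path $v_i\to x_1\to\cdots\to v_j$. Because $v_i\notin\Pa(v_j)$, the path has length at least two, so $x_1\neq v_j$ and $x_1$ is a non-collider on $\pi$ lying on a directed path to $v_j$. Hence $x_1\in Z$, blocking $\pi$.

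\emph{Case 2: $\pi$ contains a collider $v_k$ in the mutilated graph.} The key observation is that $v_k\notin \{v_i\}\cup P_{ij}$, since every node in this set has all incoming edges removed. Viewing $\pi$ as a path in $\GGG$ (the mutilated edge set is a subset of $\GGG$'s), $v_k$ is still a collider for $(v_i,v_j)$ in $\GGG$, but $v_k\notin P_{ij}$, so by Definition~\ref{def:pcollider} $v_k$ is \emph{not} a $p$-collider: $v_k\notin\Pa(v_i)\cup\Pa(v_j)$ and no descendant of $v_k$ lies in $\Pa(v_i)\cup\Pa(v_j)$. Now suppose for contradiction that $v_k$ or a descendant $v_m$ of $v_k$ belongs to $Z$. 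Then $v_m\in\Anc(v_j)$, so there is a directed path $v_m\to\cdots\to v_j$; its final node before $v_j$ is in $\Pa(v_j)$ and is a descendant of $v_m$ (or equal to $v_m$), hence a descendant of $v_k$. This contradicts the fact that $v_k$ has no descendant in $\Pa(v_j)$. So $v_k$ is neither in $Z$ nor has a descendant in $Z$, and $v_k$ blocks $\pi$.

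The main obstacle is Case 2: I must tie together two structural facts---that intervening on $P_{ij}$ destroys exactly the colliders that \emph{could} have been activated by a descendant in $\Anc(v_j)\setminus\{v_i\}$, and that any node activating a non-$p$-collider via $Z$ would itself witness a descendant in $\Pa(v_j)$, contradicting the non-$p$-collider property. Once these are established, standard $d$-separation gives $v_i\indep v_j\mid \doo(\{v_i\}\cup P_{ij}),\,\Anc(v_j)\setminus\{v_i\}$, completing the proof.
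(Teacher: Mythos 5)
Your proof is correct and follows essentially the same approach as the paper: the forward direction uses the unblockable surviving edge $v_i\to v_j$, and the backward direction blocks collider-free paths via $\Anc(v_j)\setminus\{v_i\}$, neutralizes $p$-colliders by the intervention, and shows non-$p$-colliders have no descendants in the conditioning set. Your two-case split in the mutilated graph merely merges the paper's second and third cases (colliders that are not $p$-colliders, and $p$-colliders) into one, since any collider surviving the intervention cannot lie in $P_{ij}$; this is a cosmetic reorganization of the same argument.
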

\begin{proof}
Suppose $v_i \in \Anc(v_j) \setminus \Pa(v_j)$. Consider the interventional distribution $\doo(v_i \cup P_{ij})$ where $P_{ij}$ is the set of $p$-colliders between $v_i$ and $v_j$. We intervene on $v_i$ to block the path (if present) given by $v_i \leftarrow \tilde{l} \rightarrow v_j$ where $\tilde{l} \in \lat$. Consider all the remaining \emph{undirected} paths between $v_i$ and $v_j$ denoted by $\Pi_{ij}$. We divide $\Pi_{ij}$ into three cases. Let $\pi \in \Pi_{ij}$ be a path from $v_i$ to $v_j$.
\begin{itemize}
\item[1.] $\pi$ contains no \emph{colliders}, then, $\pi$ is blocked by $\Anc(v_j) \setminus \{ v_i \}$. As $\pi$ contains no colliders, we can write $\pi = v_i \cdots v_k \rightarrow v_j$ where $v_k \in \Anc(v_j)$. As we are conditioning on $\Anc(v_j) \setminus \{ v_i \} \supseteq \{ v_k \}$, $\pi$ is blocked by $v_k$.
\item[2.] $\pi$ contains colliders but not a $p$-collider. We argue that there are no collider nodes in $\pi$ that are also in $\Anc(v_j) \setminus \{ v_i \}$. As there are no $p$-colliders, it means that all the colliders have no descendants in the conditioning set $\Anc(v_j) \setminus \{ v_i \}$. Because if a collider $v_c$ have a descendant in $\Anc(v_j) \setminus \{ v_i \}$, then there is a path from $v_c$ to $\Pa(v_j)$ through $\Anc(v_j) \setminus \{ v_i \}$. This means that $v_c$ is a $p$-collider, contradicting our assumption. Therefore, from Rule-2 of $d$-separation, $\pi$ is blocked.
\item[3.] $\pi$ contains at least one $p$-collider. We are intervening on $P_{ij}$ containing all the $p$-colliders. In the intervened mutilated graph, all the $p$-colliders no longer have an incoming arrow and therefore are not colliders. So $\pi$ is blocked.
\end{itemize}
If $v_i \not \in \Pa(v_j)$, we can conclude that $v_i \indep v_j \mid \doo(\{v_i\} \cup  P_{ij}), \Anc(v_j) \setminus \{ v_i \}$.
Suppose $v_i \in \Pa(v_j)$. In the interventional distribution $\doo(\{v_i\} \cup  P_{ij})$, we still have $v_i \in \Pa(v_j)$ and any conditioning will not block the path $\pi = v_i \rightarrow v_j$. Therefore, $v_i \notindep v_j \mid (\doo(\{v_i\} \cup  P_{ij}), \Anc(v_j) \setminus \{ v_i \}$ if $v_i \in \Pa(v_j)$. 
\end{proof}

\begin{lemma}
\label{lem:graphwhp}
Let $\GGG(V \cup L, E \cup E_L)$ be a $\tau$-causal graph with observable graph $G(V,E)$.
Given an ancestral graph $\Anc(G)$, Algorithm~\RecoverG~correctly recovers all edges in the observable graph with probability at least $1-1/n^2$.
\end{lemma}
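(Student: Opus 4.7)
The plan is to show that Algorithm~\RecoverG correctly classifies every candidate edge $v_i \to v_j$ of $\Anc(G)$ as either belonging to $E$ or not, with the high-probability guarantee coming from the random construction of $\AAA_\tau$. I split the argument according to whether $v_i\in\Pa(v_j)$. If $v_i\in\Pa(v_j)$, the direct edge $v_i\to v_j$ survives every mutilated graph with $v_j\notin A$, and no conditioning set can block a single directed edge; hence for every $A\in\mathcal{A}_{ij}$ the CI oracle reports $v_i\notindep v_j\mid \Anc(v_j)\setminus\{v_i\},\doo(A)$, and the algorithm correctly adds $(v_i,v_j)$ to $E$. (When $\mathcal{A}_{ij}$ happens to be empty, the universally quantified test is vacuously satisfied, which is also the right decision in this case.)

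If instead $v_i\in\Anc(v_j)\setminus\Pa(v_j)$, I claim it suffices to exhibit a single $A\in\mathcal{A}_{ij}$ that also contains $P_{ij}$. For such an $A$, I would re-run the three-case analysis from the proof of Lemma~\ref{lem:indep}: every undirected $v_i$–$v_j$ path is blocked either by conditioning on $\Anc(v_j)\setminus\{v_i\}$ (collider-free paths), by Rule~2 applied to a collider that is not a $p$-collider and thus has no descendant in the conditioning set, or by the intervention severing the incoming edges of a $p$-collider that lies in $P_{ij}\subseteq A$. Intervening on the additional variables in $A\setminus(\{v_i\}\cup P_{ij})$ can only remove more incoming edges, so it cannot unblock any path; hence $v_i\indep v_j\mid\doo(A),\Anc(v_j)\setminus\{v_i\}$ and the algorithm correctly withholds the edge.

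It remains to lower-bound the probability that a good $A$ is sampled for every relevant pair. For a fixed pair with $v_i\in\Anc(v_j)$, each $A_t\in\AAA_\tau$ includes $v_i$ and each element of $P_{ij}$ independently with probability $1-1/\tau'$ and excludes $v_j$ with probability $1/\tau'$. Since $v_i,v_j\notin P_{ij}$ (a $p$-collider lies strictly between its two endpoints), these events are independent, so
\[
\Pr\!\left[v_i\in A_t,\ v_j\notin A_t,\ P_{ij}\subseteq A_t\right]\ \geq\ \frac{1}{\tau'}\left(1-\frac{1}{\tau'}\right)^{\tau'+1}\ \geq\ \frac{1}{8\tau'},
\]
using $|P_{ij}|\leq \tau\leq \tau'$ together with the monotonicity lower bound $(1-1/x)^{x+1}\geq 1/8$ for $x\geq 2$. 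Across $72\tau'\log n$ independent draws of $A_t$, the probability that none is good is at most $(1-1/(8\tau'))^{72\tau'\log n}\leq e^{-9\log n}\leq 1/n^4$, and a union bound over the at most $\binom{n}{2}$ ancestral edges of $\Anc(G)$ pushes the total failure probability below $1/n^2$.

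The main subtlety I expect is the ``superset'' justification in the second case: I need to know that enlarging the intervention set from the exact $\{v_i\}\cup P_{ij}$ in Lemma~\ref{lem:indep} to an arbitrary $A\supseteq\{v_i\}\cup P_{ij}$ with $v_j\notin A$ never reintroduces an open path between $v_i$ and $v_j$. This is what allows me to reuse Lemma~\ref{lem:indep}'s path analysis instead of re-deriving it for every random $A$, and it depends only on the fact that intervention removes incoming edges while the conditioning set is held fixed throughout.
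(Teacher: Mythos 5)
Your proof is correct and follows essentially the same route as the paper's: reduce correctness to the CI-test of Lemma~\ref{lem:indep}, lower-bound the probability that a single random $A_t$ contains $\{v_i\}\cup P_{ij}$ while excluding $v_j$ (your constant $1/(8\tau')$ versus the paper's $1/(18\tau')$, both sufficient), amplify over the $72\tau'\log n$ draws, and union bound over pairs. The one place you go beyond the paper is in explicitly justifying that Lemma~\ref{lem:indep} extends from the exact intervention $\{v_i\}\cup P_{ij}$ to any superset $A$ with $v_j\notin A$ (interventions only delete incoming edges, so surviving paths keep their collider structure and descendant sets only shrink) --- a monotonicity step the paper uses implicitly, and which your write-up correctly supplies.
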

\begin{proof}
Let $\tau'=\max\{\tau,2\}$.
From Lemma~\ref{lem:indep}, we can recover the edges of $G$ provided we know the $p$-colliders between every pair of nodes. As we do not know the graph $G$, we devise a randomized strategy to hit all the $p$-colliders, whilst ensuring that we don't create a lot of interventions. Suppose $\text{max}_{(v_i,v_j) \in V \times V} |P_{ij} | \leq \tau$. We show that with high probability, $\forall v_i \in Anc(v_j),\, \exists A_t \text{ such that } \{ v_i \} \cup P_{ij}  \subseteq  A_t$ and $v_j \not \in A_t$. We can then use the CI-test described in Lemma~\ref{lem:indep} to verify whether $v_i$ is a parent of $v_j$. In Algorithm~\RecoverG, we repeat this procedure on every edge of $Anc(G)$ and output $G$.
 
Suppose $v_i \in Anc(v_j)$. Let $\Gamma_t$ denote the event that $A_t \in \AAA_\tau$ such that $\{ v_i \} \cup P_{ij} 
\subseteq A_t $ and $v_j \not \in A_t$ for a fixed $t \in \{1,\dots,72\tau' \log n \}$. Let $T=72\tau' \log n$. As we include a vertex $v_i \in A_t$ with probability $1-1/\tau'$, we obtain
\begin{align*}
\Pr[\Gamma_t] &= \left( 1 - \frac{1}{\tau'} \right)^{|P_{ij}| + 1} \frac{1}{\tau'} \geq  \left( 1 - \frac{1}{\tau'} \right)^{\tau' + 1} \frac{1}{\tau'}.
\end{align*}
Using the inequality $(1+\frac{x}{n})^n \geq e^{x}(1-\frac{x^2}{n})$ for $|x| \leq n$, and since $\tau' \geq 2$ we have: 
\begin{eqnarray*}
& \Pr[\Gamma_t] \geq  \frac{1}{\e^{\tau'+1/\tau'}} ( 1-\frac{(\tau'+1)^2}{\tau'^2(\tau'+1)} ) \frac{1}{\tau'} \geq  \frac{1}{18 \tau'} & \\
& \Rightarrow  \Pr[\bar{\Gamma}_t]   \leq 1 - \frac{1}{18\tau'} \mbox{ and } \Pr[\exists t \in [T] : \Gamma_t]  \geq  1- \left( 1 - \frac{1}{18\tau'} \right)^{72 \tau'\log n}.&
\end{eqnarray*}
Using the inequality $(1+\frac{x}{n})^n \leq e^{x}$ for $|x| \leq n$ we have:  
\begin{align*}
\Pr[\exists t \in [T] : \Gamma_t]  &\geq  1- \frac{1}{n^4}. 
\end{align*}
So the probability that there exists at least one set $A_t$ for the given pair $v_i,v_j$ for which $v_i \cup P_{ij} \subseteq  A_t$ and $v_j \not \in A_t$ is at least $1-\frac{1}{n^4}$.\footnote{Note by adjusting the constant $72$, we could have pushed this probability to any $1/n^c$ for constant $c$.} To ensure this probability of success for every pair of variables, we use a union bound over the $n^2$ node pairs.	 			\end{proof}

\begin{proposition} [Proposition~\ref{prop:obs} Restated]
Let $\GGG(V \cup L, E \cup E_L)$ be a $\tau$-causal graph with observable graph $G(V,E)$. There exists a procedure to recover the observable graph using $O(\tau \log n + \log n)$ many interventions with  probability at least $1-1/n^2$.
\end{proposition}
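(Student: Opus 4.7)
The plan is to assemble the proposition by composing two subroutines whose correctness and cost are already essentially proved in the excerpt. First, I would build the ancestral graph $\Anc(G)$, and then use Algorithm~\RecoverG to prune the ancestral edges that are not actual edges of $G$.

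For the first step, I would invoke a strongly separating set system $\mathcal{S}$ on the ground set $V$ of size $m=2\log n$ (using, e.g., the binary-encoding construction cited from~\citep{neurips17}). Intervening on each set in $\mathcal{S}$ and applying CI-tests as outlined in Section~\ref{sec:prelim} recovers $\Anc(G)$. This uses $O(\log n)$ interventions.

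For the second step, I would apply Algorithm~\RecoverG$(\Anc(G),\AAA_\tau)$, where $\AAA_\tau=\{A_1,\dots,A_{72\tau'\log n}\}$ is the random collection defined just before the algorithm, with each vertex included independently in each $A_t$ with probability $1-1/\tau'$ for $\tau'=\max\{\tau,2\}$. By Lemma~\ref{lem:graphwhp}, this step correctly identifies the edge set $E$ with probability at least $1-1/n^2$ using exactly $|\AAA_\tau|=O(\tau\log n)$ interventions. The correctness argument at the per-pair level is supplied by Lemma~\ref{lem:indep}: for any $v_i\in\Anc(v_j)$, conditioning on $\Anc(v_j)\setminus\{v_i\}$ in the interventional distribution $\doo(\{v_i\}\cup P_{ij})$ blocks every path between $v_i$ and $v_j$ iff $v_i\notin \Pa(v_j)$. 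Hence finding a single $A_t\in\AAA_\tau$ with $\{v_i\}\cup P_{ij}\subseteq A_t$ and $v_j\notin A_t$ is enough to decide whether $v_i\to v_j$ is a true edge.

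Summing the two contributions yields $O(\log n)+O(\tau\log n)=O(\tau\log n+\log n)$ interventions, and a union bound over the (at most $n^2$) edges of $\Anc(G)$ preserves the $1-1/n^2$ success probability (constants in the $72\tau'\log n$ choice can be tuned if one wants to absorb the ancestral-graph step into the same failure budget). The main technical obstacle, already handled in Lemma~\ref{lem:graphwhp}, is the probabilistic argument that the random family $\AAA_\tau$ simultaneously ``hits'' $\{v_i\}\cup P_{ij}$ while avoiding $v_j$ for every ancestral pair; this relies on the $\tau$-causal assumption, which caps $|P_{ij}|\le\tau$ and thus keeps the per-pair success probability $\Omega(1/\tau')$, so that $72\tau'\log n$ repetitions drive the per-pair failure probability below $1/n^4$ and the union bound closes out.
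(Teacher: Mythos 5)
Your proposal is correct and follows essentially the same route as the paper: construct $\Anc(G)$ with the $2\log n$ binary-encoding strongly separating set system, then invoke Algorithm~\RecoverG{} with the random family $\AAA_\tau$, whose correctness and $O(\tau\log n)$ intervention count are exactly the content of Lemma~\ref{lem:graphwhp} (itself resting on Lemma~\ref{lem:indep} and a union bound over the $n^2$ pairs). No gaps.
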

\begin{proof}
As is well-known, e.g.~\citep{neurips17}, a strongly separating set system can be constructed with $m=2\log n$ interventions by using the binary encoding of the numbers $1,\dots,n$. Two intervention sets are constructed for every bit location $k \in [\log n]$, one with any node $v_i$ if the number $i$ has $k$th bit set to 1, and other with any node $v_i$ if the number $i$ has $k$th bit set to 0. Therefore, we require $2 \log n$ interventions to obtain ancestral graph $\Anc(G)$ of the observable graph. From Lemma~\ref{lem:graphwhp}, we require $O(\tau \log n)$ interventions to recover all the edges of observable graph of $G$ from $\Anc(G)$ with probability $1-\frac{1}{n^2}$. Therefore, using $O(\tau\log n)$ interventions, Algorithm \RecoverG can recover the observable graph $G(V, E)$ with high probability. 
\end{proof}

It is well established that $\log(\chi(G))$ interventions are necessary and sufficient in the causally sufficient systems (where there are no latents) where $\chi(G)$ is the chromatic number of $G$. Generalized over all graphs this becomes $\log(n)$. Our following lower bound shows that, even if there are no latent variables in the underlying system, if the algorithm cannot rule latents out, and needs to consider latents as a possibility to compute the graph skeleton, then $\Omega(n)$ interventions are necessary.~\citet{shanmugam2015learning} provide a lower bound in a different setting, when the intervention sets are required to have only limited number of variables. 

\begin{proposition}[Proposition~\ref{prop:lb} Restated]
There exists a graph causal $\GGG(V \cup \lat, E \cup E_L)$ such that every non-adaptive algorithm requires $\Omega(n)$ many interventions to recover even the observable graph $G(V,E)$ of $\GGG$. 
\end{proposition}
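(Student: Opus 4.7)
The plan is to exhibit a specific target causal graph $\GGG^*$ on $n$ observable nodes together with a family of $n$ ``neighboring'' graphs $\{\GGG^{(i)}\}_{i=1}^{n}$ obtained by a single local modification: where $\GGG^*$ has a direct edge $v_i \to v_{i+1}$ (indices $\mathrm{mod}\ n$), the variant $\GGG^{(i)}$ replaces this edge by a latent $l_i$ between $v_i$ and $v_{i+1}$ with no direct edge. The observable graphs of $\GGG^*$ and $\GGG^{(i)}$ then differ at position $i$, but the pair $(v_i, v_{i+1})$ is marginally dependent in both, so no purely observational test can distinguish them. Any non-adaptive scheme that recovers $G$ must separately distinguish $\GGG^*$ from each $\GGG^{(i)}$, and my goal is to show this forces $\Omega(n)$ distinct intervention sets.

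The first step is a $d$-separation case analysis showing that, for any intervention set $S$ and conditioning set $Z$, the interventional distributions of $\GGG^*$ and $\GGG^{(i)}$ agree on $(v_i, v_{i+1})$ unless $S$ contains $v_i$ but excludes $v_{i+1}$. The reason is that including $v_{i+1}$ in $S$ simultaneously kills the edge $v_i\to v_{i+1}$ in $\GGG^*$ and the latent arrow $l_i\to v_{i+1}$ in $\GGG^{(i)}$, equating the two interventional marginals, while including neither $v_i$ nor $v_{i+1}$ leaves the latent path $v_i \leftarrow l_i \to v_{i+1}$ open in $\GGG^{(i)}$ and the edge open in $\GGG^*$, so both yield dependence and are indistinguishable by any conditioning. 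This pins down a pair-specific necessary structural condition on some $S_k$ in the schedule.

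Next I would enrich $\GGG^*$ with an auxiliary ``web'' of latents between other pairs of observables, designed so that any intervention set that additionally intervenes on $v_j$ for $j \notin \{i, i+1\}$ opens or closes paths that cancel the distinguishing signal for the pair $(v_i, v_{i+1})$. Concretely, one candidate construction places a latent between $v_i$ and each $v_j$ with $j\notin\{i,i+1\}$; intervening on such a $v_j$ while keeping $v_i$ in $S$ effectively reintroduces a latent-mediated dependence between $v_i$ and $v_{i+1}$ in $\GGG^{(i)}$ that mimics the direct edge in $\GGG^*$. The design is chosen so that the only intervention distinguishing $\GGG^*$ from $\GGG^{(i)}$ is essentially forced to single out $v_i$ with all other variables in a prescribed status that varies with $i$, giving $n$ pairwise-incompatible required sets and hence $m=\Omega(n)$.

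The main obstacle is precisely the construction and verification of the auxiliary latent web so that the pair-specific constraints are genuinely incompatible across the $n$ variants, ruling out a single ``clever'' intervention resolving many variants simultaneously; note that the plain strongly separating requirement of Lemma~\ref{lem:lb-strong} by itself only yields $\Omega(\log n)$. I expect to close the argument by a pigeonhole/counting step: for any non-adaptive schedule $\{S_1,\ldots,S_m\}$ with $m=o(n)$ there is some variant $\GGG^{(i)}$ whose required separation pattern is not realized by any $S_k$, so the algorithm's output on input $\GGG^*$ equals its output on $\GGG^{(i)}$ and at least one of them misrecovers the observable graph, completing the lower bound.
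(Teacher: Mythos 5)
Your high-level strategy---an adversarial family $\{\GGG^{(i)}\}$, each differing from a base graph in one edge-versus-latent choice, with each variant imposing a structural requirement on some intervention set and the requirements pairwise incompatible---is indeed the shape of the paper's argument. But the entire difficulty of the proposition lives in the step you defer: constructing the instance so that the $n$ requirements really are pairwise incompatible. Your first step (the distinguishing set must contain $v_i$ and exclude $v_{i+1}$) is just Lemma~\ref{lem:lb-strong} and, as you note, yields only $\Omega(\log n)$. The candidate ``latent web'' does not close the gap, and the mechanism you describe is not sound: an intervention only deletes incoming arrows in the mutilated graph, so it can never ``reintroduce'' a dependence; a latent $v_i \leftarrow l \rightarrow v_j$ with $j\notin\{i,i+1\}$ has no arrow into $v_{i+1}$ and so cannot mediate a dependence between $v_i$ and $v_{i+1}$; and since the distinguishing intervention is forced to contain $v_i$, every latent arrow into $v_i$ is severed anyway, rendering all latents incident to $v_i$ inert. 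Concretely, on your path/cycle skeleton (which, as written with indices taken mod $n$, is not even acyclic) augmented with that web, the two interventions $S_{\mathrm{odd}}=\{v_i : i \text{ odd}\}$ and $S_{\mathrm{even}}$ already distinguish $\GGG^*$ from every $\GGG^{(i)}$: whichever of the two contains $v_i$ but not $v_{i+1}$ leaves $v_i$ with no outgoing edge in the mutilated $\GGG^{(i)}$, so $v_i\indep v_{i+1}$ there but not in $\GGG^*$. Hence $O(1)$ interventions defeat your family, and the pigeonhole step has nothing to count.

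The paper obtains the incompatibility from density rather than from extra latents. The base graph is the complete DAG ($v_a\to v_b$ for all $a<b$) and $G_i$ deletes the single edge $(v_1,v_i)$. To decide this edge one must intervene on $v_1$ (to rule out a latent between $v_1$ and $v_i$), must leave $v_i$ un-intervened, and---this is the crux---must \emph{intervene on}, not merely condition on, essentially all intermediate vertices $v_3,\dots,v_{i-1}$: each such $v_j$ lies on a directed path $v_1\to v_j\to v_i$ that has to be blocked, yet conditioning on $v_j$ fails because $v_j$ is a collider on other paths between $v_1$ and $v_i$, which conditioning would open. The required set for index $i$ must therefore contain $\{v_1,v_3,\dots,v_{i-1}\}$ and exclude $v_i$, while the required set for any $i'>i$ must contain $v_i$; these constraints are pairwise incompatible across $i\in\{n/4,\dots,n\}$, forcing $\Omega(n)$ distinct interventions. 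If you want to salvage your outline, this is the kind of structure (many directed bypasses that cannot be neutralized by conditioning) you would need to build in; the latent web as proposed does not supply it.
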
 
\begin{proof}
Consider an ordering of observable variables given by $v_1, v_2, \cdots, v_n$. Let $G$ be a graph with all directed edges $(v_a,v_b)$ for all $b > a$. Suppose the set of interventions generated by the non-adaptive algorithm is given by $\mathcal{H}$. Now consider $v_i$ for some fixed $i \geq \frac{n}{4}$. 

We claim that if every intervention $H \in \mathcal{H}$ is such that for some $j \in \{3, \cdots, i-1 \},\, v_j \not \in H$, then there exists a graph $G_i$ such that $G$ and $G_i$ are both indistinguishable under all the interventions in $\mathcal{H}$ irrespective of other conditioning. Now consider any set $H_j \subseteq (\{ v_1, v_2, \cdots, v_{i-1} \} \setminus \{v_j\}) \bigcup \{v_{i+1},\cdots,v_n\}$.   Let $G_i$ be such that it contains all the directed edges $(v_a,v_b)$ for all $b > a$ but does not contain the directed edge $(v_1, v_i)$.
To distinguish between $G$ and $G_i$ one needs to determine whether $v_1 \rightarrow v_i$. Note that any intervention we use to determine the edge should contain $v_1$ to rule out the possibility of the influence of latent $v_1 \leftarrow l_{1i} \rightarrow v_i$ on the CI-tests we perform. Now, under $\doo(H_j)$, there are only two CI-tests possible to determine whether $v_1 \rightarrow v_i$ : $v_1 \indep v_i \mid v_j, \doo(H_j)$ and $v_1 \indep v_i \mid \doo(H_j)$. However, for both graphs $G$ and $G_i$, both these independence tests will always turn out negative. In the former case, it is because $v_j$ will be a collider on the path $v_i,v_j,v_{j-1},v_i$, and in the latter case there is a path $v_1,v_j,v_i$ that is not blocked. In other words, the CI-tests will provide no information to distinguish between $G$ and $G_i$, unless $\mathcal{H}$ contains the set $\{v_1, v_3,\dots,v_{i-1}\}$.

One can similarly construct these $G_i$'s for all $i \geq \frac{n}{4}$, thereby $\mathcal{H}$ needs to contain the intervention sets $\{v_1, v_3,\dots,v_{i-1}\}$ for all $n/4 \leq i \leq n$ to separate $G$ from all the $G_i$'s. This proves the claim. 
\end{proof}

\subsection{Latents Affecting Non-adjacent Nodes in $G$}
\label{app:latent_non_adj}
\begin{algorithm}[t]
\begin{small}
\caption{\LatentsNEdges$(G(V, E), \DDD_\tau)$}
\label{alg:latentNE}
\begin{algorithmic}[1]
\State  $\lat \leftarrow \phi, E_L \leftarrow \phi$
\For{ $(v_i,v_j) \not \in {E}$}
\State  Let $\mathcal{D}_{ij} = \{ D \mid D \in \DDD_\tau \text{ and } v_i, v_j \not \in D \}$
\If{ $v_i \notindep v_j \mid do(D) \cup \Pa(v_i) \cup \Pa(v_j)$ for every $D \in \mathcal{D}_{ij}$}
\State  $\lat \leftarrow \lat \cup {l_{ij}}, E_L \leftarrow E_L \cup \{ (l_{ij}, v_i), (l_{ij}, v_j) \}$
\EndIf
\EndFor
\State  \textbf{return} $\GGG(V \cup \lat, E \cup E_L)$
\end{algorithmic}
\end{small}
\end{algorithm}
Let $\bar{E} =\{ (v_i, v_j) \mid (v_i,v_j) \not \in E \}$ be the set of non-edges in $G$. The entire procedure for finding latents between non-adjacent nodes in $G$ is described in Algorithm~\LatentsNEdges. Similar to Algorithm \RecoverG, we block the paths by conditioning on parents and intervening on $p$-colliders. The idea is based on the observation that for any non-adjacent pair $v_i, v_j$ an intervention on the set $P_{ij}$ and conditioning on the parents of $v_i$ and $v_j$ will make $v_i$ and $v_j$ independent, unless there is a latent between then. The following lemma formalizes this idea.

\begin{lemma}
\label{lem:blocking}
Suppose $(v_i, v_j) \in \bar{E}$. Then, $v_i \indep v_j \mid \doo(P_{ij}), \Pa(v_i) \cup \Pa(v_j)$ iff $v_i$ and $v_j$ has no latent between them.
\end{lemma}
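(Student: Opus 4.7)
The plan is to prove both directions of the iff by a careful case analysis of undirected paths between $v_i$ and $v_j$ in the mutilated graph $\GGG_{\doo(P_{ij})}$ obtained by deleting every edge incoming to a node in $P_{ij}$. Throughout, the conditioning set is $Z = \Pa(v_i) \cup \Pa(v_j)$, which contains only observable parents, and latents in $\lat$ are never intervened on.

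For the forward direction (no latent $\Rightarrow v_i \indep v_j$), I would show that every undirected path $\pi$ from $v_i$ to $v_j$ in the mutilated graph is blocked by $Z$. Split into two cases according to whether $\pi$ contains a collider in the mutilated graph. If $\pi$ has no colliders, then $\pi$ has the canonical ``peak'' shape $v_i \leftarrow \cdots \leftarrow v_k \to \cdots \to v_j$ for some common ancestor $v_k$. Since $(v_i,v_j) \in \bar E$, the path has length at least two, so either the first edge is $v_i \leftarrow v_a$ with $v_a \in \Pa(v_i)$ (when $v_k \neq v_i$) or the last edge is $v_a \to v_j$ with $v_a \in \Pa(v_j)$ (when $v_k = v_i$). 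In both sub-cases the blocking node lies in $Z$ and is a non-collider on $\pi$, so Rule 1 applies. (I would also note that a node of $P_{ij}$ can only appear on such a path as the peak $v_k$, since it has no incoming edges in the mutilated graph; this does not affect the argument.)

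If $\pi$ does contain a collider $v_c$ in the mutilated graph, then $v_c \notin P_{ij}$ (those nodes have no incoming edges in the mutilated graph), so $v_c$ is also a collider on the same path in the original graph $\GGG$. Because $v_c \notin P_{ij}$, the definition of $p$-collider forces $v_c \notin \Pa(v_i)\cup\Pa(v_j)$ and $v_c$ has no descendant in $\Pa(v_i)\cup\Pa(v_j)$ in $\GGG$; since mutilation only removes edges, the same statement holds in the mutilated graph. Thus Rule 2 blocks $\pi$ at $v_c$. Combining the two cases, no path survives, giving $d$-separation and hence conditional independence in the interventional distribution. The main obstacle is handling the fact that $p$-collider membership is defined with respect to descendants in $\GGG$, not in the mutilated graph; the key observation that makes this work is that removing edges can only shrink the descendant set, so ``not a $p$-collider in $\GGG$'' implies ``Rule 2 blocker in $\GGG_{\doo(P_{ij})}$.''

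For the backward direction, assume there exists a latent $l_{ij}$ with $v_i \leftarrow l_{ij} \to v_j$. Consider the path $\pi^* = v_i \leftarrow l_{ij} \to v_j$ in the mutilated graph. The intervention does not touch $l_{ij}$ (latents are unobserved, so never in $P_{ij}$), and $l_{ij} \notin Z$ (the conditioning set contains only observable parents by the definition of $\Pa(\cdot)$). The node $l_{ij}$ is a non-collider on $\pi^*$, and it is not in $Z$, so Rule 1 does not apply; $\pi^*$ has no colliders, so Rule 2 is vacuous. Hence $\pi^*$ is an unblocked active path and $v_i \notindep v_j \mid \doo(P_{ij}), Z$, completing the proof.
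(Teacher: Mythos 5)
Your proof is correct in substance and follows essentially the same route as the paper's: a case analysis on undirected paths, with collider-free paths killed by Rule~1 via the conditioned-on parents, non-$p$-colliders killed by Rule~2, and $p$-colliders neutralized by the intervention; the converse is the unblockable path $v_i \leftarrow l_{ij} \rightarrow v_j$. Your organization is in fact a bit cleaner than the paper's: by enumerating paths directly in the mutilated graph you avoid the paper's slightly awkward third case (``a path through a $p$-collider is blocked''---really, it ceases to exist), and you are more careful than the paper about the fact that $p$-collider membership is defined via descendants in $\GGG$ while Rule~2 is applied in the mutilated graph, correctly observing that edge deletion only shrinks descendant sets.

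One sub-case of your collider-free analysis is stated incorrectly, though the conclusion survives. When the peak $v_k \neq v_i$, you claim the first edge is $v_i \leftarrow v_a$ with $v_a \in \Pa(v_i)$; but $\Pa(\cdot)$ in this paper contains only \emph{observable} parents, and the first edge may be $v_i \leftarrow l$ for a latent $l$, which is not in the conditioning set $Z$. In that event $l$ is itself the peak (latents have no incoming edges), its other child $w$ satisfies $w \neq v_j$ by the no-latent hypothesis, and the remainder of the collider-free path is a directed segment $w \rightarrow \cdots \rightarrow v_j$ whose penultimate node is necessarily observable (latents cannot sit inside a directed path) and hence lies in $\Pa(v_j) \subseteq Z$ as a non-collider; so the path is still blocked, just at the $v_j$ end rather than the $v_i$ end. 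With that sub-case patched, the argument is complete; the paper's own write-up is no more explicit on this point.
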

\begin{proof}

Suppose there is no latent between $v_i$ and $v_j$. We follow the proof similar to the Lemma~\ref{lem:indep}. Consider the pair of variables $v_i$ and $v_j$ and all the paths between them $\Pi_{ij}$. Let $\pi \in \Pi_{ij}$.
\begin{itemize}
\item[1.] Let $\pi$ be a path not containing any colliders. Using Rule-1 of $d$-separation, we can block $\pi$ by conditioning on either $\Pa(v_i)$ or $\Pa(v_j)$.
\item[2.] If $\pi$ contains colliders and no $p$-colliders, then, using Rule-2 of $d$-separation, $\pi$ is blocked as the colliders have no descendants in $\Pa(v_i) \cup \Pa(v_j)$.
\item[3.] We block the paths $\pi$ containing $p$-colliders by intervening on $P_{ij}$
\end{itemize}  
As all the paths in $\Pi_{ij}$ are blocked, we have $v_i \indep v_j \mid \doo(P_{ij}), \Pa(v_i) \cup \Pa(v_j)$. 
If there is a latent $l_{ij}$ then the path $ v_i \leftarrow l_{ij} \rightarrow v_j$ is not blocked and therefore $v_i \notindep v_j \mid (\doo(P_{ij}), \Pa(v_i) \cup \Pa(v_j))$.
\end{proof}

Formally, let $D_t \subseteq V$ for $t \in \{1, 2, \cdots, 24\tau'^2\log n\}$ be constructed by including  every variable $v_i \in V$ with probability $1-\frac{1}{\tau'}$ where $\tau'=\max\{\tau,2\}$. Let $\DDD_\tau = \{D_1,\cdots,D_{24\tau'^2 \log n}\}$ be the collection of the set $D_t$'s. Using these interventions $\DDD_\tau$, we argue that we can recover all the latents between non-edges of $G$ correctly with high probability.

\begin{proposition}[Proposition~\ref{prop:1} Restated]
Let $\GGG(V \cup L, E \cup E_L)$ be a $\tau$-causal graph with observable graph $G(V,E)$.
Algorithm~\LatentsNEdges~with $O(\tau^2 \log n + \log n)$ many interventions recovers all latents effecting pairs of non-adjacent nodes in the observable graph $G$ with probability at least $1 - 1/n^2$. 
\end{proposition}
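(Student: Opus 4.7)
My plan is to decompose the analysis into a correctness statement conditional on a sampling event and a probability bound on that event, then finish with a union bound. For each non-adjacent pair $(v_i,v_j)$ in $G$ define the event
\[
E_{ij} := \{\exists\, D \in \DDD_\tau \text{ with } P_{ij} \subseteq D \text{ and } v_i, v_j \notin D\}.
\]
I will show that Algorithm~\LatentsNEdges~makes the correct decision on every non-edge whenever $\bigcap_{ij} E_{ij}$ holds, and then bound the failure probability of this intersection by $1/n^{2}$.

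For correctness, first fix a non-adjacent pair $(v_i,v_j)$ and assume $E_{ij}$ holds; let $D^\star \in \mathcal{D}_{ij}$ denote a witness with $P_{ij} \subseteq D^\star$. Lemma~\ref{lem:blocking} directly gives $v_i \indep v_j \mid \doo(P_{ij}), \Pa(v_i)\cup\Pa(v_j)$ exactly when no latent exists between $v_i$ and $v_j$. To transfer this to $\doo(D^\star)$, I will invoke a short ``intervention monotonicity'' argument: intervening on a superset of $P_{ij}$ only removes additional incoming edges (to nodes in $D^\star\setminus P_{ij}$), so no previously blocked path can open, and since neither $v_i$ nor $v_j$ lies in $D^\star$ the latent path $v_i \leftarrow l_{ij} \rightarrow v_j$ (if it exists) is still open. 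Hence in the no-latent case the CI-test on $D^\star$ returns independence, so the universally-quantified condition in Algorithm~\LatentsNEdges~fails and no spurious latent is added; in the latent case every $D\in\mathcal{D}_{ij}$ leaves the latent path unblocked, all CI-tests return dependence, and the algorithm correctly introduces $l_{ij}$.

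For the probability bound, each variable is placed in $D$ independently with probability $1-1/\tau'$ where $\tau'=\max\{\tau,2\}$, so using $|P_{ij}|\le\tau\le\tau'$,
\[
\Pr[P_{ij}\subseteq D,\; v_i,v_j\notin D] \;\geq\; (1-1/\tau')^{\tau'}\cdot(1/\tau')^{2} \;\geq\; \tfrac{3}{4e\,\tau'^{2}},
\]
where the last step uses $(1-1/\tau')^{\tau'} \geq e^{-1}(1-1/\tau'^{2}) \geq 3/(4e)$ for $\tau'\ge 2$, the same inequality exploited in Lemma~\ref{lem:graphwhp}. Independence across the $T=24\tau'^{2}\log n$ sets gives
\[
\Pr[\lnot E_{ij}] \;\leq\; \bigl(1 - \tfrac{3}{4e\tau'^{2}}\bigr)^{T} \;\leq\; \exp\!\bigl(-\tfrac{18}{e}\log n\bigr) \;\leq\; n^{-4}.
\]
A union bound over the at most $\binom{n}{2}<n^{2}$ non-adjacent pairs yields overall failure probability at most $n^{-2}$. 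The total number of interventions used is $|\DDD_\tau|=24\tau'^{2}\log n = O(\tau^{2}\log n + \log n)$, which establishes the claimed bound.

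The only genuinely nontrivial point is the monotonicity extension of Lemma~\ref{lem:blocking} to supersets of $P_{ij}$; once that is in hand the computation mirrors Lemma~\ref{lem:graphwhp} with a $\tau'^{2}$ budget instead of $\tau'$, reflecting that we must avoid \emph{two} observable endpoints rather than one. Everything else is a routine union bound.
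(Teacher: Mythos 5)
Your proposal follows essentially the same route as the paper: Lemma~\ref{lem:blocking} for correctness, the per-set success probability $(1-1/\tau')^{|P_{ij}|}(1/\tau')^2$, amplification over the $24\tau'^2\log n$ independent sets, and a union bound over pairs. Two small remarks. First, your explicit ``monotonicity'' step (a superset intervention $D^\star\supseteq P_{ij}$ with $v_i,v_j\notin D^\star$ cannot open a blocked path, since $\mathcal{G}_{\bar{D^\star}}$ is a subgraph of $\mathcal{G}_{\bar{P}_{ij}}$ and descendant sets only shrink) is a point the paper's proof leaves implicit, and it is a worthwhile addition since Lemma~\ref{lem:blocking} is stated only for $\doo(P_{ij})$ itself. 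Second, your intermediate inequality $(1-1/\tau')^{\tau'}\geq e^{-1}(1-1/\tau'^{2})\geq 3/(4e)$ is false: at $\tau'=2$ the left side is $1/4<3/(4e)\approx 0.276$, and the correct instance of the paper's inequality is $(1-1/\tau')^{\tau'}\geq e^{-1}(1-1/\tau')\geq 1/(2e)$. This only changes the constant: with $1/(2e\tau'^{2})$ per trial and $24\tau'^{2}\log n$ trials you still get failure probability at most $\exp(-\tfrac{12}{e}\log n)\leq n^{-4}$, so the final statement and the union bound go through unchanged.
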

\begin{proof}
We follow a proof similar to Lemma~\ref{lem:graphwhp}. Consider a pair of variables $v_i$ and $v_j$ such that there is no edge between them in $G$. From Lemma~\ref{lem:blocking}, we know that by intervening on all the colliders between $v_i$ and $v_j$, we can identify the presence of a latent. In Algorithm~\LatentsNEdges, we iterate over sets in $\mathcal{D}_{ij}$. As $\mathcal{D}_{ij} \subseteq \DDD_\tau$, we have $|\mathcal{D}_{ij}| \leq 24\tau'^2 \log n$.
Let $\Gamma_t$ denote the event that $D_t \in \mathcal{D}_{ij}$ is such that $v_i, v_j \not \in D_t$ and $P_{ij} \subseteq D_t$ for a fixed $t \in \{1,\cdots,24\tau'^2 \log n\}$. Let $T=24 \tau'^2 \log n$.
\begin{align*}
\Pr[\Gamma_t] &= \left( 1 - \frac{1}{\tau'} \right)^{|P_{ij}|} \frac{1}{\tau'^2} \geq  \left( 1 - \frac{1}{\tau'} \right)^{\tau'} \frac{1}{\tau'^2}. 
\end{align*}
Using the inequality $(1+\frac{x}{n})^n \geq e^{x}(1-\frac{x^2}{n})$ for $|x| \leq n$, and since $\tau' \geq 2$ we have: 
\begin{eqnarray*}
& \Pr[\Gamma_t] \geq  \frac{1}{\e} ( 1-\frac{1}{\tau'} ) \frac{1}{\tau'^2} \geq  \frac{1}{2\e \tau'^2} & \\
& \Rightarrow  \Pr[\bar{\Gamma}_t]   \leq 1 - \frac{1}{6\tau'^2} \mbox{ and } \Pr[\exists t \in [T] : \Gamma_t]  \geq  1- \left( 1 - \frac{1}{6\tau'^2} \right)^{24 \tau'^2\log n}.&
\end{eqnarray*}
Using the inequality $(1+\frac{x}{n})^n \leq e^{x}$ for $|x| \leq n$ we have:  
\begin{align*}
\Pr[\exists t \in [T] :\Gamma_t]  &\geq  1- \frac{1}{n^4}. 
\end{align*}
So the probability that there exists a set $D_t$ for which $v_i, v_j \not \in D_t$ and $P_{ij} \subseteq D_t$ is at least $1-\frac{1}{n^4}$. A union bound over at most $n^2$ pair of variables completes the proof.
\end{proof}

\subsection{Latent Affecting Adjacent Nodes in $G$}
We follow an approach similar to the one presented in section~\ref{app:latent_non_adj} for detecting the presence of latent between an edge $v_i \rightarrow v_j$ in $G$. In Algorithm~\ref{alg:latentWE}, we block all the paths (excluding the edge)  between the variables $v_i$ and $v_j$ using a conditioning set $\Pa(v_j)$ in the intervention distribution $\doo(\Pa(v_i) \cup P_{ij})$ in the {\emph{do-see}} tests we perform. This idea is formalized using the following lemma. 
\begin{lemma}
\label{lem:latblocking}
Suppose $v_i \rightarrow v_j \in G$. Let $l_{tj}$ be a latent between $v_t$ and $v_j$ where $v_t \neq v_i$ and $v_i, v_j \not \in B$, $P_{ij} \subseteq B$.  Then, $l_{tj} \indep v_i \mid  \Pa(v_j), \doo(B \cup \{v_i\} \cup \Pa(v_i))$ and $l_{tj} \indep v_i \mid \Pa(v_j), \doo(\Pa(v_i) \cup B)$. 
\end{lemma}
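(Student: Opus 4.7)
My plan is to prove both statements by showing that, in each of the two mutilated graphs corresponding to the stated interventions, every undirected path from $l_{tj}$ to $v_i$ is $d$-separated by the conditioning set $\Pa(v_j)$; conditional independence then follows from the usual $d$-separation criterion. I would first collect three structural facts I intend to use repeatedly: (i) the latent $l_{tj}$ has no parents and exactly two children $v_t$ and $v_j$, so any path from $l_{tj}$ starts with $l_{tj}\to v_t$ or $l_{tj}\to v_j$; (ii) $l_{tj}\notin\Pa(v_j)$ because $\Pa(v_j)\subseteq V$; and (iii) every latent appearing on any path is a fork (both incident edges are outgoing), so every collider along any path is an observable node, which is what lets us appeal to the $p$-collider definition.

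I would then case-split on an arbitrary path $\pi$ from $l_{tj}$ to $v_i$. If $\pi$ begins $l_{tj}\to v_j$, then either the next edge is $v_j\leftarrow v_p$, making $v_j$ a collider blocked by $\Pa(v_j)$ (since $v_j$ and its descendants cannot lie in $\Pa(v_j)$ by acyclicity), or $\pi$ continues as $l_{tj}\to v_j\to v_d\cdots v_i$. In the latter sub-case, since $v_i\in\Anc(v_j)$ no directed path from $v_j$ reaches $v_i$, so $\pi$ must reverse direction at some node, producing a collider $v_c$ (a descendant of $v_j$). If instead $\pi$ begins $l_{tj}\to v_t$, I would argue that a collider-free extension is forced to be directed (each internal non-collider must extend the incoming orientation), yielding a directed path $l_{tj}\to v_t\to\cdots\to v_i$; but this path must enter $v_i$ either directly (impossible in the first intervention because $v_i$ is intervened, so has no incoming edges in the mutilated graph) or through a parent in $\Pa(v_i)$ (impossible in both interventions because $\Pa(v_i)$ is intervened, stripping incoming edges of the penultimate node). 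Hence in this case $\pi$ also contains a collider $v_c$.

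The crux is then the $p$-collider argument applied to $v_c$. By appending $v_j\leftarrow l_{tj}$ in Case~B (or deleting $l_{tj}$ in Case~A), I can exhibit $v_c$ as a collider on an undirected path between $v_i$ and $v_j$ in $\GGG$. If $v_c\in\Pa(v_i)\cup\Pa(v_j)$, or if $v_c$ had a descendant in $\Pa(v_i)\cup\Pa(v_j)$ in $\GGG$, then $v_c$ would be a $p$-collider and therefore in $P_{ij}\subseteq B$; but intervening on $v_c$ removes all of its incoming edges in the mutilated graph, contradicting $v_c$ being a collider there. Hence $v_c\notin\Pa(v_j)$ and no descendant of $v_c$ (in $\GGG$, hence also in the mutilated subgraph) lies in $\Pa(v_j)$, so $v_c$ blocks $\pi$.

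The main obstacle I anticipate is the Case~B analysis: carefully verifying that a collider-free path starting $l_{tj}\to v_t$ is forced into a strictly directed form, and then using the two different intervention sets to kill such a directed path at distinct points (at $v_i$ itself in the first intervention, at the penultimate parent in the second). Once that directedness argument is pinned down, the $p$-collider contradiction and the DAG-based blocking of $v_j$ in Case~A are routine, and both independence statements follow uniformly.
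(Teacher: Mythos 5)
Your proposal is correct and follows essentially the same route as the paper's proof: enumerate the undirected paths from $l_{tj}$ to $v_i$ (which must begin $l_{tj}\to v_j$ or $l_{tj}\to v_t$), block the $v_j$-side paths using acyclicity, show that collider-free paths on the $v_t$ side cannot survive the intervention on $v_i$ (resp.\ $\Pa(v_i)$), and block every remaining collider by observing that a collider in $\Pa(v_i)\cup\Pa(v_j)$ or with a descendant there would be a $p$-collider, hence in $P_{ij}\subseteq B$ and thus stripped of incoming edges. The only difference is organizational (you split on the first edge of the path where the paper splits on whether $v_t$ is intervened), and your treatment of the descendant clause of Rule~2 is, if anything, slightly more explicit than the paper's.
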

\begin{proof}
The proof goes through by analyzing various cases. We give a detailed outline of the proof.

 Claim 1: $l_{tj} \indep v_i \mid \Pa(v_j), \doo(B \cup \{v_i\} \cup \Pa(v_i))$. 
 Suppose $v_t \in \Pa(v_i) \cup B$. Consider all the paths between $v_i$ and $l_{tj}$ in the interventional distribution $ \doo(B \cup \{v_i\} \cup \Pa(v_i))$. The only paths that are not separated because of the intervention are $l_{tj} \rightarrow v_j \leftarrow v_i$, $l_{tj} \rightarrow v_j \leftarrow v_k \cdots \leftarrow v_i$ where $v_k \in \Pa(v_j)$, and $l_{tj} \rightarrow v_j \rightarrow \cdots \leftarrow v_i$. As we are not conditioning on $v_j$, $l_{tj} \rightarrow v_j \leftarrow v_i$ is blocked (Rule-2 in $d$-separation); conditioning on $\Pa(v_j) \ni v_k$ block the paths $l_{tj} \rightarrow v_j \leftarrow v_k \cdots \leftarrow v_i$ (Rule-1 in $d$-separation); and $l_{tj} \rightarrow v_j \rightarrow \cdots \leftarrow v_i$ paths have a collider that is not $\Pa(v_j)$ hence blocked by Rule-2 in $d$-separation.
 
Suppose $v_t \not \in \Pa(v_i) \cup B$. As before it follows that all paths between $l_{tj}$ and $v_i$ going through $v_j$ are blocked. All other paths between $l_{tj}$ and $v_i$ should have a collider. This is because  in any such path $\pi$ the only edge from $l_{tj}$ is $l_{tj} \rightarrow v_t$ and the edges that remain at $v_i$ are outgoing. It is easy to see that the collider on this path $\pi$ can't be in $\Pa(v_j)$ because otherwise it will also be a $p$-collider between $v_i$ and $v_j$ which are intervened on through $B$. 
When there is a collider on the path that is not in the conditioning set, then the path is blocked (Rule-2 in $d$-separation). The same holds for all paths between $l_{tj}$ and $v_i$.

Claim 2: $l_{tj} \indep v_i \mid \Pa(v_j), \doo(B \cup \Pa(v_i))$. Consider all the paths between $l_{tj}$ and $v_i$. Using the above arguments, we have that all paths containing $v_j$ are blocked. All other paths between $l_{tj}$ and $v_i$ should have a collider. This is because in any such path $\pi$ the only edge from $l_{tj}$ is $l_{tj} \rightarrow v_t$ and $\pi$ will end at $v_i$ either as $l_{tj} \cdots \leftarrow v_i$ or $l_{tj} \rightarrow \cdots \leftarrow v_k \rightarrow v_i$ where $v_k \in \Pa(v_i)$. It is again easy to see that the collider on this path $\pi$ can't be in $\Pa(v_j)$ because otherwise it will also be a $p$-collider between $v_i$ and $v_j$ which are intervened on through $B$. As before, when there is a collider on the path that is not in the conditioning set, then the path is blocked (Rule-2 in $d$-separation). The same holds for all paths between $l_{tj}$ and $v_i$.
\end{proof}

\begin{lemma} [Lemma~\ref{lem:first} Restated]
Suppose $v_i \rightarrow v_j \in G$ and $v_i, v_j \not \in B$, and $P_{ij} \subseteq B$ then, $\Pr[v_j \mid v_i, \Pa(v_j), \doo(\Pa(v_i) \cup B)] = \Pr[v_j \mid  \Pa(v_j), \doo(\{v_i\} \cup \Pa(v_i) \cup B)]$ if there is no latent $l_{ij}$ with $ v_i \leftarrow l_{ij} \rightarrow v_j$.
\end{lemma}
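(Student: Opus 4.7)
My plan is to reduce the equality to a $d$-separation statement via Rule~2 of Pearl's do-calculus (action/observation exchange), and then verify the required $d$-separation by a path-by-path case analysis in the mutilated graph. Setting $X=\{v_i\}$, $Y=\{v_j\}$, $Z=\Pa(v_i)\cup B$ and $W=\Pa(v_j)$, Rule~2 says the two sides are equal provided
\[(v_j \indep v_i \mid \Pa(v_i)\cup B\cup \Pa(v_j))_{\GGG_{\overline{\Pa(v_i)\cup B},\,\underline{v_i}}},\]
i.e., $d$-separation holds in the graph $\GGG'$ obtained from $\GGG$ by deleting every edge pointing \emph{into} $\Pa(v_i)\cup B$ and every edge pointing \emph{out of} $v_i$, with conditioning set $Z' := \Pa(v_i)\cup B\cup \Pa(v_j)$.

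I would then enumerate the possible paths $\pi$ from $v_i$ to $v_j$ in $\GGG'$. Since outgoing edges from $v_i$ are removed, $\pi$ must begin with an edge into $v_i$, so either $\pi = v_i\leftarrow w \cdots$ with $w\in \Pa(v_i)$ (observable), or $\pi = v_i\leftarrow l\cdots$ with $l$ a latent parent of $v_i$. In the observable case, $w$ lies in $\Pa(v_i)\subseteq Z'$ and, since incoming edges of $w$ are deleted in $\GGG'$, $w$ can only be a non-collider on $\pi$, so $\pi$ is blocked by conditioning on $w$. In the latent case, because $l$ has exactly two observable children, the other child is some $v_t\neq v_j$ (the hypothesis rules out $v_t=v_j$), and the rest of $\pi$ must traverse observable nodes toward $v_j$.

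The main technical step is handling colliders on such a path. I would show that any collider $v_k$ on $\pi$ must be blocking. If $v_k\in B$, then the incoming edges of $v_k$ are removed in $\GGG'$, so $\pi$ cannot actually use $v_k$ as a collider and the path does not exist in $\GGG'$. Otherwise, if $v_k$ or a proper descendant of $v_k$ (in $\GGG'$) lay in $Z'$, I distinguish the location of that witness: membership in $\Pa(v_i)\cup\Pa(v_j)$ would make $v_k$ a $p$-collider by Definition~\ref{def:pcollider}, forcing $v_k\in P_{ij}\subseteq B$ and contradicting the existence of its incoming edges in $\GGG'$; membership in $B$ of a proper descendant is impossible because the defining directed path into that descendant would have its final edge removed by the intervention. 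Hence every collider on $\pi$ blocks it.

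Finally, for a collider-free path in $\GGG'$, I would argue it must have the shape $v_i\leftarrow l \to v_t \to y_1 \to \cdots \to y_r = v_j$ (the observable-parent start already collapses to a chain through some $w\in \Pa(v_i)\subseteq Z'$, blocking immediately). Here the penultimate node $y_{r-1}$ lies in $\Pa(v_j)\subseteq Z'$ and is a non-collider, so $\pi$ is again blocked; the ``no $l_{ij}$'' hypothesis is exactly what prevents the degenerate open path $v_i\leftarrow l_{ij}\to v_j$ from arising. The main obstacle I anticipate is the bookkeeping for colliders reached through latents, where one must simultaneously exploit both ``$p$-collider'' hooks (membership in $P_{ij}$ vs.\ having a descendant in $\Pa(v_i)\cup\Pa(v_j)$) together with the effect of intervention on removing incoming edges; once that case distinction is laid out, the rest of the argument is routine.
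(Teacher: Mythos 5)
Your proof is correct, but it takes a genuinely different route from the paper's. The paper expands both sides of the claimed equality over the latent parents $L_j$ of $v_j$, applies the parent-level invariance principle to the factor $\Pr[v_j \mid L_j, \Pa(v_j), \cdot\,]$, and then needs a separate $d$-separation lemma (Lemma~\ref{lem:latblocking}) to show that the distribution of $L_j$ is the same whether $v_i$ is observed or intervened on. You instead invoke Rule~2 of the do-calculus once, which collapses the whole lemma into the single $d$-separation statement $(v_j \indep v_i \mid \Pa(v_i)\cup B \cup \Pa(v_j))$ in $\GGG_{\overline{\Pa(v_i)\cup B},\,\underline{v_i}}$, and your verification of it is sound: the collider case is precisely where $P_{ij}\subseteq B$ is used (a collider surviving in the mutilated graph with an activating witness in $\Pa(v_i)\cup\Pa(v_j)$ would be a $p$-collider, hence in $B$, hence have its incoming edges deleted; a witness in $B\cup\Pa(v_i)$ cannot be a mutilated-graph descendant at all), and the collider-free case reduces to a fork through a latent parent of $v_i$ followed by a directed chain whose penultimate node must be an observable member of $\Pa(v_j)$, with the ``no $l_{ij}$'' hypothesis excluding the one unblockable path $v_i\leftarrow l_{ij}\rightarrow v_j$. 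One point worth making explicit: Definition~\ref{def:pcollider} speaks of paths and descendants in $\GGG$, while your $d$-separation test lives in the mutilated subgraph; since the latter is a subgraph of the former, every offending collider there is indeed a $p$-collider in $\GGG$, so the containment goes the right way. Your route is cleaner and absorbs the content of Lemma~\ref{lem:latblocking}; what the paper's explicit latent-parent decomposition buys is that it is reused almost verbatim in Lemma~\ref{lem:second}, where one must argue the two distributions \emph{differ} when $l_{ij}$ exists --- there Rule~2 is silent (its premise merely fails, which does not by itself yield inequality), whereas the decomposition pinpoints the exact non-cancelling factor.
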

\begin{proof}
Suppose $v_i \rightarrow v_j$ in $G$ and there is no latent between $(v_i,v_j)$. Then, we claim that $\Pr[v_j \mid v_i, \Pa(v_j), \doo(\Pa(v_i) \cup B)]  = \Pr[v_j \mid  \Pa(v_j), \doo(\{v_i\} \cup  \Pa(v_i) \cup B)]$. Let $L_j$ represents all the latent parents of $v_j$.
By including $v_i$ in the intervention,
\begin{align} \label{eqn:1}
& \Pr[v_j \mid \Pa(v_j), \doo(\{v_i\} \cup  \Pa(v_i) \cup B)]  \nonumber\\
& = \sum_{L_j} \Pr[v_j \mid L_j, \Pa(v_j), \doo(\{v_i\} \cup  \Pa(v_i) \cup B)] \Pr[L_j \mid \Pa(v_j), \doo(\{v_i\} \cup  \Pa(v_i) \cup B)]. \nonumber\\
& = \sum_{L_j} \Pr[v_j \mid L_j, \Pa(v_j), \doo(\{v_i\} \cup  \Pa(v_i) \cup B)] \Pr[L_j \mid \Pa(v_j), \doo(\Pa(v_i) \cup B)].
\end{align}
As the value of $L_j$ is only affected by conditioning on its descendants, and in the interventional distribution $\doo(\Pa(v_i))$, $v_i$ is not a descendant of $L_j$, the last statement is true.

Under conditioning on $v_i$
\begin{align} \label{eqn:2}
& \Pr[v_j \mid v_i, \Pa(v_j), \doo(\Pa(v_i) \cup B)] \nonumber\\
&= \sum_{L_j} \Pr[v_j \mid L_j, v_i, \Pa(v_j), \doo(\Pa(v_i) \cup B)]\Pr[L_j \mid v_i, \Pa(v_j), \doo(\Pa(v_i) \cup B)] \nonumber\\
&=  \sum_{L_j} \Pr[v_j \mid L_j, v_i, \Pa(v_j), \doo( \Pa(v_i) \cup B)]\Pr[L_j \mid \Pa(v_j), \doo(\Pa(v_i) \cup B)].
\end{align}
The last statement is true because $L_j \indep v_i \mid \Pa(v_j)$ in the distribution $\doo(B \cup \Pa(v_i))$ from Lemma~\ref{lem:latblocking}.
{From the invariance principle} (page 24 in~\citep{pearl}, \cite{neurips17}), we have for any variable $v_i$ 
$$ \Pr[v_i \mid \Pa(v_i)] = \Pr[v_i \mid Z, \doo(\Pa(v_i) \setminus Z)] \text{ for any }Z \subseteq \Pa(v_i) $$
Applying it to our case we get $$\Pr[v_j \mid L_j, v_i, \Pa(v_j), \doo(\Pa(v_i) \cup B)] = \Pr[v_j \mid L_j, \Pa(v_j), \doo(\{v_i\} \cup  \Pa(v_i) \cup B)].$$

Putting this together with~\eqref{eqn:1} and~\eqref{eqn:2}, we get $\Pr[v_j \mid v_i, \Pa(v_j), \doo(\Pa(v_i) \cup B)] = \Pr[v_j \mid  \Pa(v_j), \doo(\{v_i\} \cup  \Pa(v_i) \cup B)]$, if there is no latent $l_{ij}$ with $ v_i \leftarrow l_{ij} \rightarrow v_j$.
\end{proof}

\begin{lemma} [Lemma~\ref{lem:second} Restated]
Suppose $v_i \rightarrow v_j \in G$ and $v_i, v_j \not \in B$, and $P_{ij} \subseteq B$, then, $\Pr[v_j \mid v_i, \Pa(v_j), \doo(\Pa(v_i) \cup B)] \neq \Pr[v_j \mid  \Pa(v_j), \doo(\{v_i\} \cup \Pa(v_i) \cup B)]$ if there is a latent $l_{ij}$ with $ v_i \leftarrow l_{ij} \rightarrow v_j$.
\end{lemma}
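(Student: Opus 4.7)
} My plan is to mirror the decomposition used in the proof of Lemma~\ref{lem:first} and pinpoint exactly the step that breaks in the presence of a latent $l_{ij}$ with $v_i \leftarrow l_{ij} \rightarrow v_j$. Let $L_j$ denote the set of all latent parents of $v_j$; by the standing assumption, $l_{ij} \in L_j$. Conditioning on (respectively, marginalizing over) $L_j$, the two sides of the claimed inequality become
\begin{align*}
\Pr[v_j \mid v_i, \Pa(v_j), \doo(\Pa(v_i) \cup B)]
&= \sum_{L_j}\Pr[v_j \mid L_j, v_i, \Pa(v_j), \doo(\Pa(v_i) \cup B)]\,\Pr[L_j \mid v_i, \Pa(v_j), \doo(\Pa(v_i) \cup B)], \\
\Pr[v_j \mid \Pa(v_j), \doo(\{v_i\}\cup\Pa(v_i) \cup B)]
&= \sum_{L_j}\Pr[v_j \mid L_j, \Pa(v_j), \doo(\{v_i\}\cup\Pa(v_i) \cup B)]\,\Pr[L_j \mid \Pa(v_j), \doo(\Pa(v_i) \cup B)],
\end{align*}
where in the second line I have used that intervening on $v_i$ disconnects $v_i$ from $L_j$ (since $L_j$'s only outgoing edges into the ``$v_i$ side'' go through the cut edge $l_{ij}\to v_i$ or through intervened-on $p$-colliders in $B$), so the marginal of $L_j$ is insensitive to the intervened value.

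Next I would show that the \emph{first} factors in the two sums agree, by precisely the invariance principle argument used in Lemma~\ref{lem:first}: once we condition on $L_j$ (and hence on all latent influence on $v_j$ from outside the causal chain $v_i\to v_j$), the conditional law of $v_j$ given $L_j,\Pa(v_j)$ equals the interventional law of $v_j$ given $L_j,\Pa(v_j),\doo(v_i)$. Thus the inequality between the LHS and RHS reduces to showing that the \emph{second} factors differ, i.e.
\[
\Pr[L_j \mid v_i, \Pa(v_j), \doo(\Pa(v_i) \cup B)] \;\neq\; \Pr[L_j \mid \Pa(v_j), \doo(\Pa(v_i) \cup B)],
\]
as distributions over $L_j$, and that this difference propagates through the mixture to yield distinct distributions over $v_j$.

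For the first part I would argue via faithfulness in the interventional distribution $\doo(\Pa(v_i)\cup B)$: the path $l_{ij}\to v_i$ is a direct edge, and it is \emph{not} cut by the intervention since $v_i\notin B$ and $v_i\notin\Pa(v_i)$. No collider blocks it, and $\Pa(v_j)$ does not sit on this single-edge path. Hence $l_{ij}\notindep v_i\mid\Pa(v_j),\doo(\Pa(v_i)\cup B)$, which by faithfulness (both observational and interventional, as assumed in Section~\ref{sec:prelim}) means that conditioning on $v_i$ strictly changes the marginal of $l_{ij}$, and therefore of $L_j$. The main obstacle is ruling out a coincidental cancellation in the mixture: differing $L_j$-marginals must actually yield differing $v_j$-marginals. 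I would dispose of this by again invoking faithfulness of the interventional distribution, applied to the full triple $(v_i, v_j, \Pa(v_j))$ under $\doo(\Pa(v_i)\cup B)$: the edge $l_{ij}\to v_j$ together with $l_{ij}\to v_i$ produces an open (non-collider) path $v_i \leftarrow l_{ij}\to v_j$ that is not blocked by $\Pa(v_j)$ (since $l_{ij}\notin \Pa(v_j)$ and $l_{ij}$ has no other observable descendants on this path) and is not cut by $\doo(\Pa(v_i)\cup B)$. Faithfulness then gives $v_i\notindep v_j\mid \Pa(v_j),\doo(\Pa(v_i)\cup B)$, which is precisely the statement that the LHS depends non-trivially on the conditioning value of $v_i$ while the RHS, being a post-intervention distribution unaffected by the value of $v_i$'s parents in $L_j$, does not. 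Hence the two quantities cannot be equal, completing the proof.
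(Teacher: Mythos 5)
Your overall route is the same as the paper's: decompose both sides over the latent parents $L_j$ of $v_j$ (with $l_{ij}\in L_j$), use the invariance principle to equate the kernels $\Pr[v_j\mid L_j, v_i, \Pa(v_j), \doo(\Pa(v_i)\cup B)]$ and $\Pr[v_j\mid L_j,\Pa(v_j),\doo(\{v_i\}\cup\Pa(v_i)\cup B)]$, and note that intervening on the descendant $v_i$ leaves the marginal of $L_j$ unchanged. The paper expresses the residual discrepancy as the Bayes ratio $\Pr[v_i\mid L_j,\Pa(v_j),\doo(\Pa(v_i)\cup B)]\,/\,\Pr[v_i\mid\Pa(v_j),\doo(\Pa(v_i)\cup B)]$, whose numerator genuinely depends on $L_j$ because $v_i$ is a descendant of $l_{ij}$; this is the same reduction as your ``the second factors differ'' step, and your faithfulness argument for why the $L_j$-posterior changes when conditioning on $v_i$ (the uncut edge $l_{ij}\to v_i$) is fine.

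The gap is in your last step. You claim that faithfulness on the open path $v_i\leftarrow l_{ij}\to v_j$ gives $v_i\notindep v_j\mid\Pa(v_j),\doo(\Pa(v_i)\cup B)$ and that this ``is precisely the statement'' that the conditional and interventional distributions of $v_j$ differ. It is not: that conditional dependence holds even with no latent present, simply because of the direct edge $v_i\to v_j$, and Lemma~\ref{lem:first} shows the two distributions nevertheless coincide in that case. Conditional dependence of $v_j$ on $v_i$ certifies a nonzero \emph{total} association, whereas the lemma asserts that the association exceeds the purely causal part carried by $v_i\to v_j$; your faithfulness invocation cannot separate the two. Relatedly, your remark that the RHS ``does not'' depend on $v_i$ is false --- it depends on the intervened value of $v_i$ through the edge $v_i\to v_j$; only its $L_j$-mixture weights are insensitive to $v_i$. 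What actually has to be ruled out is a cancellation: both sides are mixtures of the \emph{same} kernels $\Pr[v_j\mid L_j,\cdot]$ against two provably different distributions over $L_j$, and one needs that these distinct mixtures induce distinct marginals on $v_j$. The paper does not fully close this either --- it asserts the ratio ``is not equal to $1$ unless in pathological cases,'' deferring to the do-see analysis of \citet{neurips17} --- so your argument matches the paper's up to that point, but the specific justification you give for non-cancellation is invalid and should be replaced by an explicit genericity assumption of the kind the paper appeals to.
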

\begin{proof}
Suppose $v_i \rightarrow v_j$ in $G$ and there is a latent $l_{ij}$ between $(v_i,v_j)$. Then, we claim that $\Pr[v_j \mid v_i, \Pa(v_j), \doo(\Pa(v_i) \cup B)]  \neq \Pr[v_j \mid  \Pa(v_j), \doo(\{v_i\} \cup  \Pa(v_i) \cup B)]$. Let $L_j$ represents all the latent parents of $v_j$, where $l_{ij} \in L_j$. Therefore, $v_i$ is a descendant of $L_j$. 
By including $v_i$ in the intervention,
\begin{align*} 
& \Pr[v_j \mid v_i ,\Pa(v_j), \doo(\Pa(v_i) \cup B)]  \nonumber\\
& = \sum_{L_j} \Pr[v_j \mid L_j, \Pa(v_j), \doo(\{v_i\} \cup  \Pa(v_i) \cup B)] \Pr[L_j \mid \Pa(v_j), \doo(\{v_i\} \cup  \Pa(v_i) \cup B)]\\
&= \sum_{L_j} \Pr[v_j \mid L_j, \Pa(v_j), \doo(\{v_i\} \cup  \Pa(v_i) \cup B)] \Pr[L_j \mid \Pa(v_j), \doo( \Pa(v_i) \cup B)].
\end{align*}
As the value of $L_j$ is only affected by conditioning on its descendants, and in the interventional distribution $\doo(\Pa(v_i))$, $v_i$ is not a descendant of $L_j$, the last statement is true. Under conditioning on $v_i$, we have : 
\begin{align*} 
& \Pr[v_j \mid v_i, \Pa(v_j), \doo(\Pa(v_i) \cup B)] \nonumber\\
&= \sum_{L_j} \Pr[v_j \mid L_j, v_i, \Pa(v_j), \doo(\Pa(v_i) \cup B)]\Pr[L_j \mid v_i, \Pa(v_j), \doo(\Pa(v_i) \cup B)] \nonumber\\
&= \sum_{L_j} \Pr[v_j \mid L_j, v_i, \Pa(v_j), \doo(\Pa(v_i) \cup B)] \frac{\Pr[v_i \mid L_j, \Pa(v_j), \doo(\Pa(v_i) \cup B)]}{Pr[v_i \mid \Pa(v_j), \doo(\Pa(v_i) \cup B) ]} \Pr[L_j \mid \Pa(v_j), \doo(\Pa(v_i) \cup B)] \nonumber\\
&=  \sum_{L_j} \Pr[v_j \mid L_j, v_i, \Pa(v_j), \doo(\Pa(v_i) \cup B)] \frac{\Pr[v_i \mid L_j, \Pa(v_j), \doo(\Pa(v_i) \cup B)]}{Pr[v_i \mid \Pa(v_j), \doo(\Pa(v_i) \cup B) ]} \Pr[L_j \mid \Pa(v_j), \doo(\Pa(v_i) \cup B)].
\end{align*}
{From the invariance principle} (page 24 in~\citep{pearl}, \cite{neurips17}), we have for any variable $v_i$ 
$$ \Pr[v_i \mid \Pa(v_i)] = \Pr[v_i \mid Z, \doo(\Pa(v_i) \setminus Z)] \text{ for any }Z \subseteq \Pa(v_i) $$
Applying it to our case we get $$\Pr[v_j \mid L_j, v_i, \Pa(v_j), \doo(\Pa(v_i) \cup B)] = \Pr[v_j \mid L_j, \Pa(v_j), \doo(\{v_i\} \cup  \Pa(v_i) \cup B)].$$

However, since the numerator of $$\frac{\Pr[v_i \mid L_j, \Pa(v_j), \doo(\Pa(v_i) \cup B)]}{\Pr[v_i \mid \Pa(v_j), \doo(\Pa(v_i) \cup B) ]}$$ 
depends on $L_j$ as $v_i$ is a descendant of $l_{ij} \in L_j$, whereas the denominator is not dependent on $L_j$, the ratio is not equal to $1$ unless in pathological cases. A similar situation arises in the do-see test analysis for~\citep{neurips17}. Hence, we have $\Pr[v_j \mid v_i, \Pa(v_j), \doo(\Pa(v_i) \cup B)] \neq \Pr[v_j \mid  \Pa(v_j), \doo(\{v_i\} \cup  \Pa(v_i) \cup B)]$.
\end{proof}

\begin{proposition} [Proposition~\ref{prop:2} Restated]
Let $\GGG(V \cup L, E \cup E_L)$ be a $\tau$-causal graph with observable graph $G(V,E)$.
Algorithm~\LatentsWEdges~with $O(n\tau \log n + n \log n)$ many interventions recovers all latents effecting pairs of adjacent nodes in the observable graph $G$ with probability at least $1-\frac{1}{n^2}$.
\end{proposition}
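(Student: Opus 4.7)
The plan is to closely parallel the argument used for Proposition~\ref{prop:1}, substituting the interventional independence test with the do-see test captured by Lemmas~\ref{lem:first} and~\ref{lem:second}. The proof decomposes into (i) a probabilistic argument that $\BBB_\tau$ contains a ``good'' set for every directed edge of $G$, (ii) a correctness argument that invokes the two lemmas, and (iii) a counting argument for the total number of interventions.

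First, I would fix a directed edge $v_i \to v_j \in E$ and let $\Gamma_t$ denote the event that $B_t \in \BBB_\tau$ simultaneously satisfies $v_i \in B_t$, $v_j \notin B_t$, and $P_{ij} \subseteq B_t$. Because variables are included independently with probability $1 - 1/\tau'$ and $|P_{ij}| \leq \tau$, I would bound
\[
\Pr[\Gamma_t] = (1 - 1/\tau')^{|P_{ij}|+1}\,(1/\tau') \geq (1 - 1/\tau')^{\tau'+1}/\tau'.
\]
Applying the inequality $(1+x/n)^n \geq e^x(1 - x^2/n)$ exactly as in the proof of Lemma~\ref{lem:graphwhp} yields $\Pr[\Gamma_t] \geq 1/(c\tau')$ for a small absolute constant $c$. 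Taking the $72\tau'\log n$ independent trials that form $\BBB_\tau$, the probability that no good set is produced is at most $(1 - 1/(c\tau'))^{72\tau'\log n} \leq 1/n^4$ (the constant $72$ may be inflated as in~\citep{neurips17} if tighter slack is needed). A union bound over the $O(n^2)$ directed edges of $G$ then shows that with probability at least $1 - 1/n^2$, every edge $v_i \to v_j$ admits some $B \in \BBB_\tau$ with $v_i \in B$, $v_j \notin B$, and $P_{ij} \subseteq B$.

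Conditioned on this event, I would invoke Lemmas~\ref{lem:first} and~\ref{lem:second}. For an edge with no corresponding latent, any good $B$ produces equal do-see distributions, so the universally quantified test inside~\LatentsWEdges~fails and no latent is erroneously added; for an edge with a latent $l_{ij}$, the same good $B$ witnesses inequality via Lemma~\ref{lem:second}, and since the remaining $B \in \mathcal{B}_{ij}$ also exhibit inequality (under the standard non-pathological/faithfulness assumption on the distribution, since the back-door path through $l_{ij}$ remains active), the ``for all $B$'' clause fires and the latent is correctly identified.

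Finally, the intervention count follows by inspecting~\LatentsWEdges: for each $v_i \in V$, the algorithm intervenes on sets of the form $\Pa(v_i) \cup B$ and $\{v_i\} \cup \Pa(v_i) \cup B$ for each $B \in \BBB_\tau$, contributing at most $2n|\BBB_\tau| = O(n\tau' \log n) = O(n\tau \log n + n\log n)$ distinct interventions. The main obstacle I anticipate is the ``latent exists'' direction of correctness: Lemma~\ref{lem:second} only guarantees distribution inequality when $P_{ij} \subseteq B$, so justifying the universal quantifier over $\mathcal{B}_{ij}$ forces one to rely on the standard non-pathological-distribution assumption from the do-see analysis of~\citep{neurips17}, ensuring that the latent never coincidentally causes agreement on some other $B \in \mathcal{B}_{ij}$.
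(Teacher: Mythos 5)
Your proposal is correct and follows essentially the same route as the paper: the paper simply cites Lemma~\ref{lem:graphwhp} for the probabilistic guarantee that $\BBB_\tau$ contains a good set $B$ (with $v_i \in B$, $v_j \notin B$, $P_{ij} \subseteq B$) for every pair, which is the same calculation you redo from scratch, then invokes Lemmas~\ref{lem:first} and~\ref{lem:second} for correctness and counts $2|\BBB_\tau|$ interventions per node to get $O(n\tau\log n + n\log n)$. Your explicit handling of the universal quantifier over $\mathcal{B}_{ij}$ in the latent case---noting that the non-good sets must also exhibit inequality, which rests on the non-pathological-distribution caveat already built into Lemma~\ref{lem:second}---is in fact slightly more careful than the paper's one-line treatment, which leaves that point implicit.
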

\begin{proof}
From Lemma~\ref{lem:graphwhp}, we know that with probability $1- \frac{1}{n^2}$, for every pair $v_i$ and $v_j$, there exists, with high probability, an intervention $B \in \BBB_\tau$ such that $v_i \in B, v_j \not \in B$ and $P_{ij} \subseteq B$. On this $B$,  using Lemmas~\ref{lem:first} and~\ref{lem:second}, we can identify the latent by using a distribution test on $B \cup Pa(v_i)$ and $B \cup Pa(v_i) \cup \{ v_i \}$.

For every variable $v_i \in V$, our algorithm constructs at most $2|\BBB_\tau|$ many interventions, given by $\doo(\{v_i\} \cup \Pa(v_i) \cup B)$ and $\doo(\Pa(v_i) \cup B)$ for every $B \in \BBB_\tau$. Therefore, the total number of interventions used by Algorithm~\LatentsWEdges~is $O(n\tau \log n + n \log n)$.  
\end{proof}

\section{Experiments} \label{app:experiments}
In this section, we compare the total number of interventions required to recover causal graph $\GGG$ parameterized by $p$-colliders (See section~\ref{sec:unweighted}) vs.\ maximum degree utilized by~\citep{neurips17}. 

\vspace{2mm}
\noindent\textbf{Setup}.
We demonstrate our results by considering sparse random graphs generated from the families of:  (i) Erd\"os-R\'enyi random graphs $G(n,c/n)$ for constant $c$, (ii) Random Bipartite Graphs generated using $G(n_1,n_2,c/n)$ model, with partitions $L$, $R$ and edges directed from $L$ to $R$, (iii) Directed Trees with degrees of nodes generated from power law distribution. In each of the graphs we generate, we additionally include latent variables by sampling $5\%$ of $\binom{n}{2}$ pairs and adding a latent between them. 

\vspace{2mm}
\noindent\textbf{Finding $p$-colliders.}
Let $\GGG$ contain observable variables and the latents. To find $p$-colliders between every pair of observable nodes of $\GGG$, we enumerate all paths between them and check if any of the observable nodes on a path can be a possible $p$-collider. As this became practically infeasible for larger values of $n$, we devise an algorithm that runs in polynomial time (in the size of the graph) by constructing an appropriate flow network and finding maximum flow in this network. We will first describe a construction that takes three nodes $(v_i, v_j, v_k)$ as input and checks if $v_k$ is a $p$-collider for the pair of nodes $v_i$ and $v_j$. Iterating over all possible nodes $v_k$ gives us all the $p$-colliders for the pair $v_i, v_j$.

\vspace{2mm}
\noindent \textsc{Construction}. 
 If $v_k$ is not an ancestor of either $v_i$ or $v_j$, then, output $v_k$ is not a $p$-collider. Else, we describe a modification of $\GGG$ to obtain the flow network $\tilde{\GGG}$. First, initialize $\tilde{\GGG}$ with $\GGG$. Remove all outgoing edges of $v_k$ from $\tilde{\GGG}$ and set the capacity of all incoming edges incident on $v_k$ to $1$. Add a node $T_{ij}$ along with the edges $T_{ij} \rightarrow v_i$ and $T_{ij} \rightarrow v_j$ to $\tilde{\GGG}$ and set the capacity of these edges to $1$. For every node $w \in V \cup L \setminus \{ v_k \}$, create two nodes $w_{in}$ and $w_{out}$. Add edge $w_{out} \rightarrow w_{in}$ with a capacity $1$. Every incoming edge to $w$ i.e., $z \rightarrow w$ is replaced by $z \rightarrow w_{in}$ and every outgoing edge $w \rightarrow z$ is replaced by $w_{out} \rightarrow z$ with capacity $1$. Find maximum $s,t$ flow in $\tilde{\GGG}$ with $T_{ij}, v_k$ as source and sink respectively. If the maximum flow is $2$, then output $v_k$ is a $p$-collider, otherwise no.

\noindent Now, we outline the idea for the proof of correctness of the above construction.

\vspace{2mm}
\noindent\textsc{Sketch of the Proof}. After ensuring that $v_k$ has a directed path to either $v_i$ or $v_j$, we want to check whether there is an undirected path from $v_i$ to $v_j$ containing $v_k$ as a collider. In other words, we want to check if there are \textit{two vertex disjoint paths} from $v_i$ and $v_j$ to $v_k$ such that both of these paths have incoming edges to $v_k$. By adding a node $T_{ij}$ connected to $v_i$ and $v_j$, we want to route two units of flow from $T_{ij}$ to $v_k$ where each node has a vertex capacity of $1$. Converting vertex capacities into edge capacities by splitting every node into two nodes (one for incoming and the other for outgoing edges) gives us the desired flow network on which we can solve maximum flow.

\begin{figure}[h]
\begin{center}
\begin{tabular}{ccc}
\begin{subfigure}{0.33\textwidth}\centering\includegraphics[width=2in]{./b5.pdf}
\label{fig:1pc}
\end{subfigure}

\begin{subfigure}{0.33\textwidth}\centering\includegraphics[width=2in]{./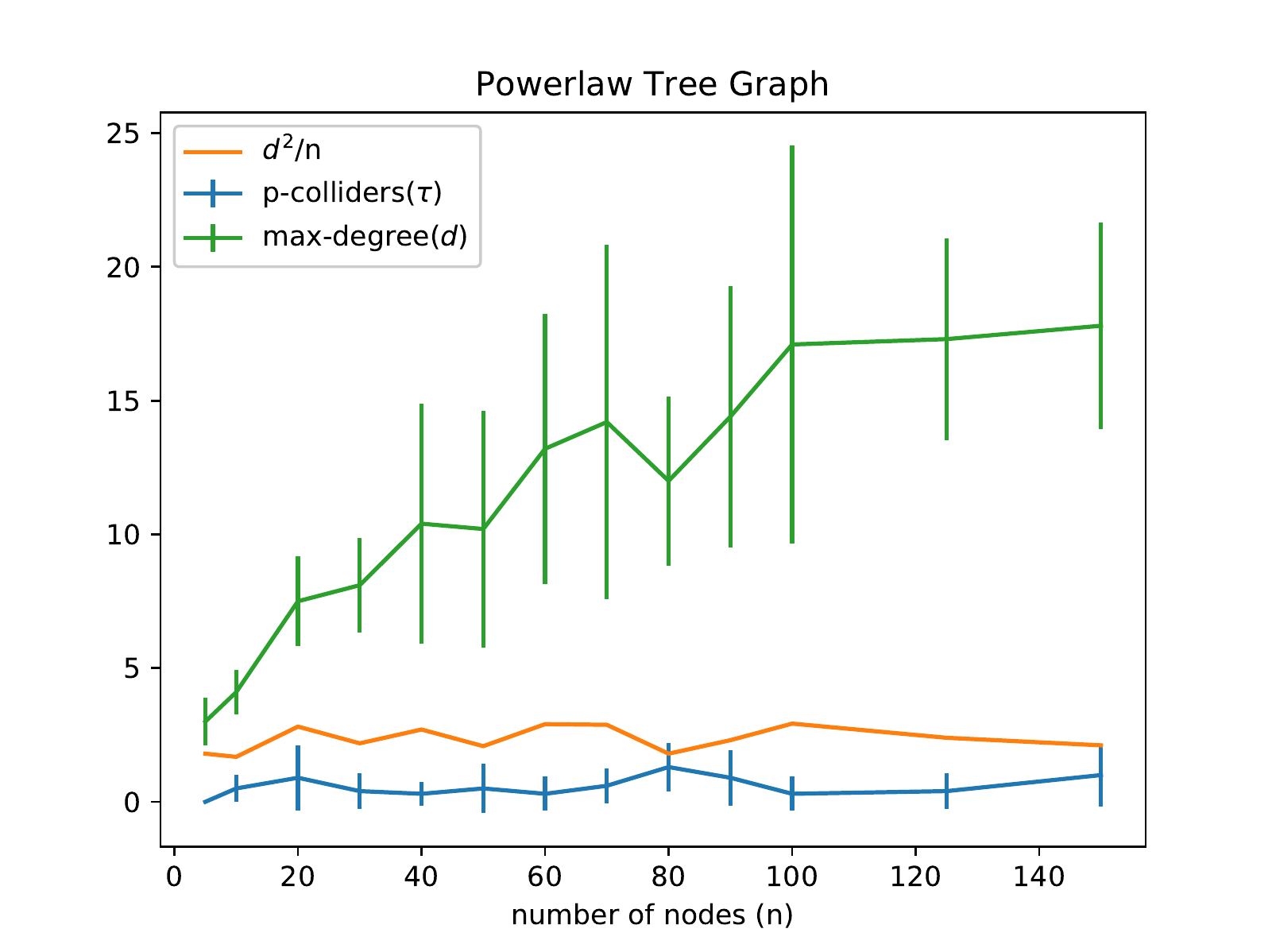}
\label{fig:1pd}
%\caption{}
\end{subfigure}

\begin{subfigure}{0.33\textwidth}\centering\includegraphics[width=2in]{./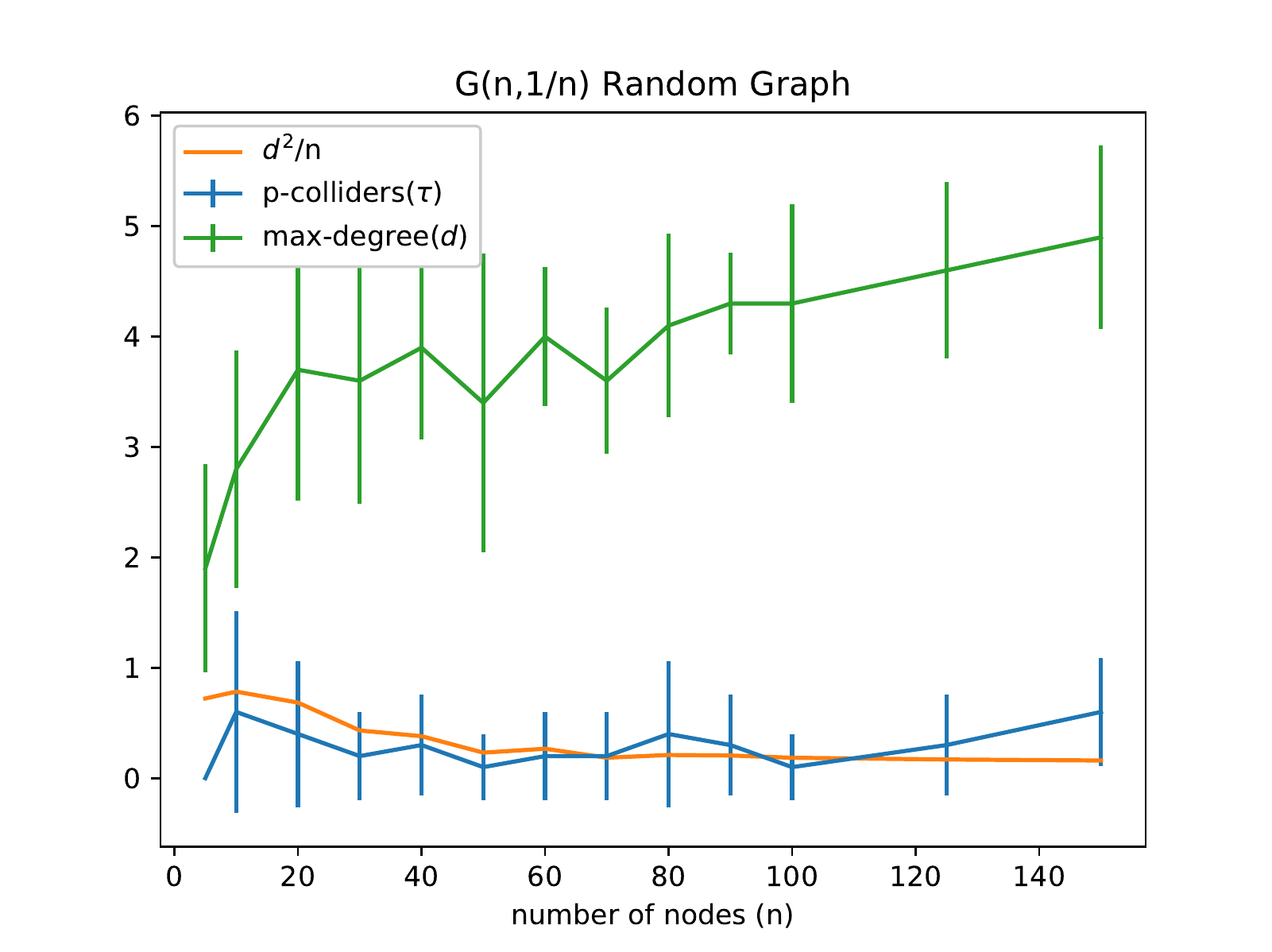}
%\caption{}
\label{fig:1pb}
\end{subfigure}
\end{tabular}
\caption{Comparison of $\tau$ vs.\ maximum degree in various sparse random graph models. On the x-axis is the number of nodes in the graph. Note that our bound on the number of interventions needed to recover $\mathcal{G}$ is better than those provided by~\citep{neurips17} roughly when $\tau < d^2/n$.}
\label{fig:plot}
\end{center}
\end{figure}
\vspace{2mm}
\noindent\textbf{Results.}
In our plots (Figure~\ref{fig:plot}), we compare the maximum undirected degree $(d)$ with the maximum number of $p$-colliders between any pair of nodes (which defines $\tau$).
We ran each experiment $10$ times and plot the mean value along with one standard deviation error bars. 

 Recall that in the worst case, the number of interventions used by our approach (Theorem~\ref{thm:lat}) is $O(n\tau \log n + n \log n)$ while the algorithm proposed by~\citep{neurips17} uses $O(\min\{d \log^2 n ,\ell\} + d^2 \log n)$ many interventions where $\ell$ is the length of the longest directed path in the graph. So roughly when $\tau < d^2/n$, our bound is better. For this purpose, we also plot the $d^2/n$ line using the mean value of $d$ obtained.

For random bipartite graphs, that can be used to model causal relations over time, we use equal partition sizes $n_1 = n_2 = n/2$ and plot the results for $\GGG(n/2,n/2, c/n)$ for constant $c = 5$. We observe that the behaviour is uniform for small constant values of $c$. In this case, we observe that the number of $p$-colliders is close to zero for all values of $n$ in the range considered and our bound is better.

For directed random trees where degrees of nodes follow the powerlaw distribution (observed in real world networks~\citep{adamic2000power}), we again observe that for almost all the values of $n$, our bound is better. We run our experiments with small constant values for the exponent $\gamma$ and show the plots for $\gamma = 3$ in Figure~\ref{fig:plot}.

Powerlaw graphs contain only a few nodes concentrated around a very high degree. Therefore, we expect our algorithm to perform better in such cases. 

Also for Erd\"os-R\'enyi random graphs $\GGG(n,1/n)$, we observe that our bound is either better or comparable to that of~\citep{neurips17}.

It is interesting to see that in the sparse graphs we considered $\tau$ is considerably smaller compared to $d$.  Moreover, if we want to identify only the observable graph $G$ under the presence of latents, our algorithm uses $O(\tau \log n)$ interventions where as the previous known algorithm~\citep{neurips17} uses $O(d \log^2 n)$ interventions. In the random graphs considered above, our algorithms perform significantly better for identifying $G$. Therefore, we believe that minimizing the number of interventions based on the notion of $p$-colliders is a reasonable direction to consider.

\end{document}